\documentclass[10pt]{article} 
\usepackage[preprint]{tmlr}
\usepackage{microtype}
\usepackage{hyperref}
\usepackage{url}
\usepackage{booktabs}
\usepackage{amsthm}

\newtheorem{theorem}{Theorem}
\newtheorem{proposition}[theorem]{Proposition}
\newtheorem{definition}[theorem]{Definition}

\newtheorem{corollary}{Corollary}
\usepackage{microtype}
\usepackage{booktabs} 
\newcommand{\ie}[0]{\emph{i.e., }}

\newcommand{\eg}[0]{\emph{e.g., }}

\newcommand{\RN}[1]{%
\textup{\lowercase\expandafter{\it \romannumeral#1}}%
}
\usepackage{algorithm} 
\usepackage{algpseudocode}
\usepackage{amsmath}
\usepackage{amssymb}
\usepackage{mathtools}
\usepackage{amsthm}
\usepackage{caption}
\usepackage{color}
\usepackage{colortbl}
\usepackage[breakable]{tcolorbox}
\tcbset{breakable}
\usepackage{arydshln}
\usepackage{array}
\usepackage{adjustbox}
\usepackage{algpseudocode}
\usepackage{enumitem}
\usepackage{multirow} 
\usepackage{makecell}
\usepackage{subcaption}
\usepackage{graphicx}
\usepackage{tabularx}
\usepackage{tablefootnote}
\usepackage{threeparttable}
\usepackage{wrapfig}
\usepackage{xspace}
\usepackage[capitalize,noabbrev]{cleveref}
\newcommand{\qwentwoseven}{Qwen-2.5-7B-Instruct}
\newcommand{\retroagent}{\textsc{RetroAgent}}
\definecolor{darkblue}{rgb}{0, 0, 0.5}
\hypersetup{colorlinks=true, citecolor=darkblue, linkcolor=darkblue, urlcolor=darkblue}
\definecolor{islamicgreen}{rgb}{0.0, 0.56, 0.0}
\definecolor{steelblue}{rgb}{0.27, 0.51, 0.71}
\definecolor{yellow(ncs)}{rgb}{1.0, 0.83, 0.0}

\title{\retroagent{}: From Solving to Evolving via Retrospective Dual Intrinsic Feedback}

\author{
      \name Xiaoying Zhang$^{\ast,\dagger}$ \email zhangxycuhk@gmail.com \\
      \addr Shanghai AI Lab
      \AND
      \name Zichen Liu \\
      \addr National University of Singapore
      \AND
      \name Yipeng Zhang \\
      \addr Independent Researcher
      \AND
      \name Xia Hu \\
      \addr Shanghai AI Lab
      \AND
      \name Wenqi Shao$^{\dagger}$ \\
      \addr Shanghai AI Lab \\
      \AND
      \centerline{\href{https://github.com/zhangxy-2019/RetroAgent}{\textcolor{magenta}{https://github.com/zhangxy-2019/RetroAgent}}}
}

\footnotetext[1]{Xiaoying is project lead, Xiaoying, Zichen, and Yipeng are core contributors.}
\footnotetext[2]{Corresponding authors.}
\def\month{MM}  
\def\year{YYYY} 
\def\openreview{\url{https://openreview.net/forum?id=XXXX}} 

\begin{document}

\maketitle

\begin{abstract}

Standard reinforcement learning (RL) for large language model (LLM) agents primarily optimizes extrinsic task rewards, often favoring isolated task completion over continual adaptation. This paradigm can cause premature convergence to suboptimal policies and leaves useful experience only implicitly encoded in model parameters, limiting its retrieval and reuse for future decisions. We introduce \retroagent{}, an online RL framework that trains agents to master interactive environments \textbf{\textit{not merely by solving tasks, but by evolving across episodes}}. Inspired by human retrospective self-improvement, \retroagent{} augments extrinsic rewards with hindsight-generated dual intrinsic feedback: \((\RN{1})\)~\emph{Intrinsic Numerical Feedback}, which rewards beneficial exploration by measuring incremental subtask progress relative to prior attempts; and \((\RN{2})\)~\emph{Intrinsic Language Feedback}, which distills successes and failures into reusable textual lessons for explicit experience reuse. To leverage these lessons effectively, we propose \emph{Similarity \& Utility-Aware Upper Confidence Bound} (SimUtil-UCB), a retrieval strategy that balances semantic relevance, historical utility, and exploration. Across four challenging agentic benchmarks, \retroagent{} achieves new state-of-the-art performance, outperforming GRPO by \(+18.3\%\) on ALFWorld, \(+15.4\%\) on WebShop, \(+27.1\%\) on Sokoban, and \(+8.9\%\) on MineSweeper, while demonstrating strong test-time adaptation and out-of-distribution generalization.

\end{abstract}


\begin{figure}[th]
    \centering
    \includegraphics[width=\linewidth]{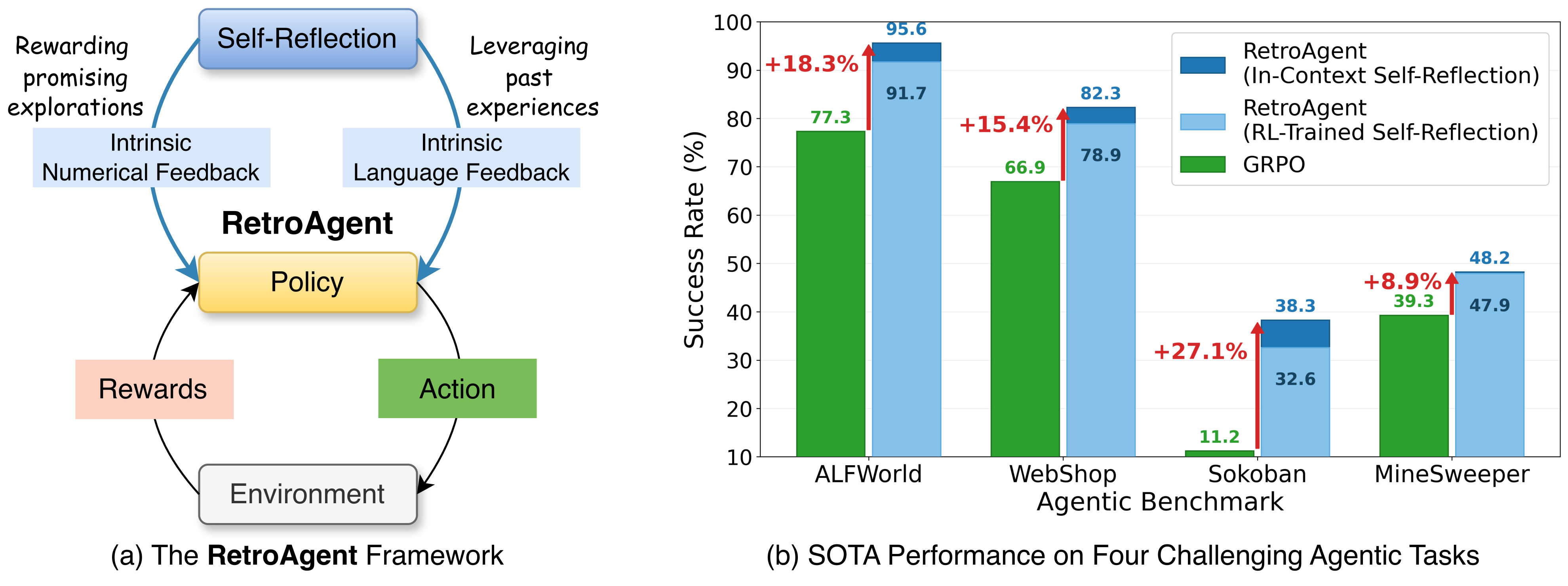}
    \caption{(a)~Overview of \retroagent{}. After each episode, the
agent reflects on its trajectory to generate dual intrinsic
feedback, promoting exploration and facilitating experiential
learning. (b)~Initialized from \qwentwoseven{}, \retroagent{}
substantially outperforms the GRPO baseline and achieves new SOTA across four agentic benchmarks.}
    \label{fig:retroagent_introduction}
\end{figure}

\addtocontents{toc}{\protect\setcounter{tocdepth}{-1}}

\section{Introduction}
\label{sec:intro}
Reinforcement learning (RL)~\citep{sutton1998reinforcement} has become a central paradigm for training large language model (LLM) agents to act in complex interactive environments through direct experience~\citep{DBLP:conf/nips/Ouyang0JAWMZASR22,zhang-etal-2022-toward-self,comanici2025gemini,singh2025openai,liu2025gem}. However, most existing RL frameworks primarily optimize extrinsic task-success rewards, favoring \emph{one-off task solving} over \emph{continuous adaptation} across interactions~\citep{abel2023definition,weng2026learning_beyond_gradients, rafiee2026toward}. For example, in embodied-agent tasks, training often terminates once an agent finds a successful action sequence. Such success-driven optimization has two limitations. First, agents may over-exploit early successes and prematurely converge to suboptimal policies instead of exploring alternatives~\citep{kirk2024understanding}. Second, useful experience is usually stored only implicitly in model parameters, making prior interactions difficult to retrieve, inspect, and reuse for later policy improvement~\citep{lin1992self,graves2014neural}. This can slow learning and weaken generalization, especially when relevant experience is sparse or context-dependent~\citep{goyal2022retrieval}.

Existing work addresses these limitations in two largely separate ways. Exploration-oriented methods broaden search through, for example, meta-RL with cross-episode training~\citep{jiang2025metarlinducesexplorationlanguage,beck2025tutorial} or uncertainty-modulated rewards under sparse feedback~\citep{wang2025harnessinguncertaintyentropymodulatedpolicy}. Yet they do not explicitly preserve reusable experience, limiting the role of past interactions in sustained policy evolution. Conversely, memory-augmented methods store raw trajectories~\citep{goyal2022retrieval,wu2025evolverselfevolvingllmagents,liu2026exploratory} or distilled skills, rules, and lessons~\citep{anthrophic2025skills,wang2025cogitoergoludoagent,liu2026exploratory,xia2026skillrlevolvingagentsrecursive}. However, memory is typically used as \emph{passive context augmentation}: retrieval is governed by fixed similarity metrics, regardless of downstream utility, and remains decoupled from policy optimization. As a result, agents may become brittle when memory is noisy or unavailable, while over-reliance on retrieved experience can suppress further exploration.

These observations motivate a central question: \emph{Can an agent couple exploration with explicit experience reuse so that experience guides policy evolution and is gradually internalized, rather than being persistently supplied by an external memory buffer?} More broadly, can agents exploit intrinsic signals from their own trajectories to evolve~\footnote{In this paper, we use \emph{evolution} to denote the joint process of task solving and continuous adaptation~\citep{hinton1987learning, lehman2023evolution}.} under sparse external supervision?

To answer this question, we introduce \retroagent{} (Figure~\ref{fig:retroagent_introduction}), an online RL framework inspired by human retrospective reflection~\citep{lyons2011monitoring,liu2025position}. \retroagent{} trains agents to master complex interactive environments \textbf{\textit{not merely by solving tasks, but by evolving across episodes}} under both extrinsic task-success rewards and retrospective dual intrinsic feedback. After each episode, the agent analyzes its trajectory to diagnose successes and failures, identify useful progress, and generate two complementary feedback signals. \textbf{Intrinsic Numerical Feedback} rewards exploratory behavior that reflects capability improvement. Since real-time capability evolution is hard to measure directly, we approximate it by incremental subtask progress relative to prior attempts, such as locating a target item even when the final purchase fails. The resulting scalar reward reinforces beneficial exploration and mitigates premature convergence. \textbf{Intrinsic Language Feedback} distills actionable lessons from successful and failed trajectories into an explicit memory buffer for future decision-making. To retrieve useful lessons, we propose \emph{Similarity \& Utility-Aware Upper Confidence Bound} (SimUtil-UCB), which combines semantic relevance with historical utility and uses Upper Confidence Bound (UCB)~\citep{auer2002finite} to balance exploiting high-utility lessons with exploring under-used ones.

We study two variants of \retroagent{}: \((\RN{1})\) an \emph{in-context} self-reflection mechanism, and \((\RN{2})\) an \emph{RL-trained} self-reflection mechanism whose reflective capability is jointly optimized with the decision policy. \retroagent{} is compatible with various RL algorithms; in our implementation, we optimize the decision policy with GRPO~\citep{shao2024deepseekmath} and the self-reflection policy with REINFORCE~\citep{williams1992simple}. We evaluate \retroagent{} using \qwentwoseven{}~\citep{qwen2025qwen25technicalreport} and Llama-3.1-8B-Instruct~\citep{grattafiori2024llama3herdmodels} on four agentic benchmarks: ALFWorld~\citep{DBLP:conf/iclr/ShridharYCBTH21}, WebShop~\citep{DBLP:conf/nips/Yao0YN22}, Sokoban~\citep{racaniere2017imagination}, and MineSweeper~\citep{li2024assessing}. Across all environments, \retroagent{} consistently outperforms prior RL fine-tuning, memory-augmented RL, exploration-guided RL, and meta-RL baselines, improving SOTA success rates by approximately \(+10\%\) on WebShop and \(+16\%\) on Sokoban, while exhibiting strong test-time adaptation and out-of-distribution generalization.

Our contributions are four-fold:
\((\RN{1})\)~We introduce \retroagent{}, an online RL framework with hindsight self-reflection for continuous policy evolution in interactive environments.
\((\RN{2})\)~We design dual intrinsic feedback to promote beneficial exploration and efficient experiential learning.
\((\RN{3})\)~We propose SimUtil-UCB, a retrieval strategy that balances semantic similarity, historical utility, and exploration when leveraging accumulated lessons.
\((\RN{4})\)~We demonstrate across four agentic benchmarks that \retroagent{} substantially outperforms strong baselines and achieves new SOTA results in both in-distribution and out-of-distribution settings.

\section{Related Work}

\paragraph{LLMs as Decision-Making Agents.}
The reasoning capabilities of LLMs have driven their deployment as
autonomous decision-making agents. An initial line of research prompts
frozen LLMs: ReAct~\citep{yao2022react},
Reflexion~\citep{shinn2023reflexion}, and related
methods~\citep{park2023generative, wang2024voyager} leverage in-context
examples, structured prompts, memory
retrieval~\citep{wang2024mobileagentv}, and external
tools~\citep{schick2023toolformer, xie2024osworld, zhang2025ufo} to
tackle complex tasks. However, these approaches are inherently bounded
by the capabilities of the underlying foundation model. This ceiling has
motivated a second line of work that trains LLM agents
directly---through supervised fine-tuning~\citep{tajwar2025training,
xi-etal-2025-agentgym} or RL~\citep{song-etal-2024-trial,
zhang2025agentrlscalingagenticreinforcement, feng2025groupingroup,
jiang2025metarlinducesexplorationlanguage}---enabling them to improve
from environmental interactions rather than relying on static prompts or
handcrafted workflows.

\paragraph{Reinforcement Learning for LLM Agents.}
RL has become a central paradigm for training agents in multi-turn,
dynamic environments~\citep{wang2025ragen, putta2025agent, liu2025understanding, liu2025gem}.
ArCHer~\citep{zhou2024archer} employs hierarchical value functions for
WebShop~\citep{yao2022webshop}, while LOOP~\citep{chen2025reinforcementlearninglonghorizoninteractive}
integrates PPO~\citep{schulman2017proximal} with Leave-One-Out advantage
estimation for long-horizon tasks in
AppWorld~\citep{trivedi2024appworld}. Group-based RL methods have further
refined credit assignment: building on
GRPO~\citep{shao2024deepseekmathpushinglimitsmathematical},
GiGPO~\citep{feng2025groupingroup} introduces two-level advantage
estimation, while other works investigate turn-level reward
shaping~\citep{wei2025reinforcingmultiturnreasoningllm} and stepwise
progress attribution~\citep{wang2025sparlreinforcingllmagents}.
Meta-RL~\citep{beck2025tutorial} offers a complementary perspective;
notably, LAMER~\citep{jiang2025metarlinducesexplorationlanguage} uses
cross-episode training to enable active test-time exploration. However,
these methods optimize primarily against extrinsic environmental
feedback, and recent analyses argue that genuine self-improvement
requires intrinsic signals beyond sparse task
rewards~\citep{liu2025position}. Although prior works have explored
intrinsic motivation~\citep{gao2025navigateunknownenhancingllm} or
entropy-modulated
policies~\citep{wang2025harnessinguncertaintyentropymodulatedpolicy},
\retroagent{} takes a fundamentally different path: a hindsight
self-reflection mechanism produces dual intrinsic feedback, shifting the
objective from isolated problem-solving toward continuous adaptation.

\paragraph{Learning from Experience through Retrospection.}
A growing body of work moves beyond scalar rewards by leveraging verbal
feedback and retrospective memory for agent self-improvement. Early
approaches~\citep{shinn2023reflexion, madaan2023self, yao2024retroformer}
generate natural-language critiques or lessons from interactions,
iteratively refining same-task performance via in-context learning.
Subsequent work internalizes such feedback into model parameters:
\citet{jiang2025metarlinducesexplorationlanguage} use reflections to
guide cross-episode adaptation within a meta-RL framework, while
\citet{zhang2025critique, hübotter2026reinforcementlearningselfdistillation}
refine failed trajectories into high-quality data for policy optimization
through RL or distillation. A complementary direction adopts
memory-based architectures~\citep{goyal2022retrieval,
wu2025evolverselfevolvingllmagents,wang2025cogitoergoludoagent,
zhang2026memrlselfevolvingagentsruntime,
zhou2025mementofinetuningllmagents,
fang2026mempexploringagentprocedural, liu2026exploratory} that store trajectories, lessons,
or skills~\citep{xia2026skillrlevolvingagentsrecursive} in a retrieval
buffer to assist similar future tasks in context. \retroagent{} advances
this paradigm along a new axis: the agent reflects on its trajectories
to produce both intrinsic numerical rewards that guide exploration and
intrinsic language feedback that facilitates exploiting past experiences,
with these dual signals jointly driving policy optimization.
\section{\retroagent{}}
We introduce \retroagent{} (Figure~\ref{fig:retroagent_framework}), an online reinforcement learning framework that employs hindsight self-reflection to improve exploration and experiential learning under sparse extrinsic feedback. Section~\ref{sec:method_overview} formalizes the problem and overviews the self-reflection mechanism. Next, we detail our strategies for utilizing intrinsic feedback: Section~\ref{sec:intrinsic_numerical} introduces numerical feedback to promote exploration, while Section~\ref{sec:intrinsic_language} explains how language feedback enables the exploitation of past experience. Finally, Section~\ref{sec:policy_optimization_dual_feedback} establishes the policy optimization objectives for both \retroagent{} variants.

\begin{figure}[t]
    \centering
    \includegraphics[width=\linewidth]{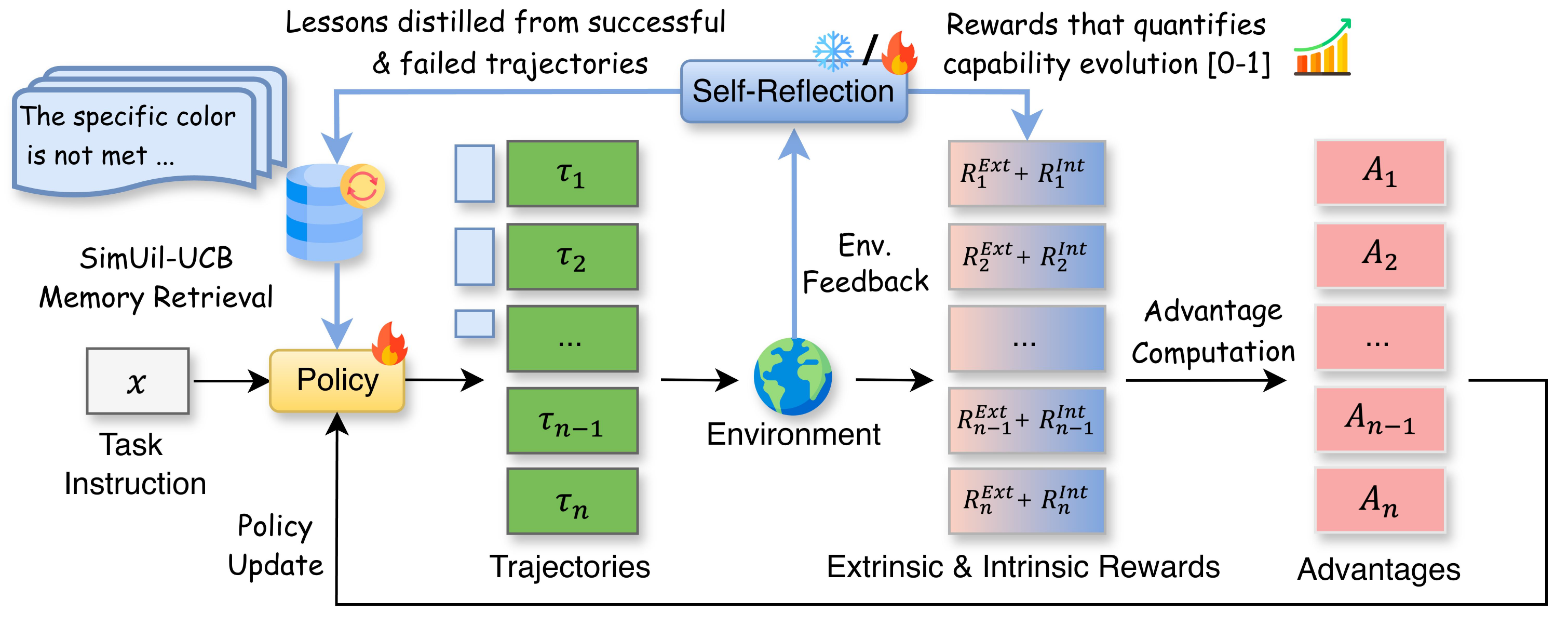}
    \caption{Overview of the \textsc{RetroAgent} framework. After each
    episode, a self-reflection mechanism analyzes the trajectory to
    produce two forms of intrinsic feedback:
    $(\RN{1})$~\emph{Intrinsic Numerical Feedback}, which quantifies
    incremental subtask completion relative to prior attempts,
    rewarding promising exploratory behaviors that may not yet yield task success; and $(\RN{2})$~\emph{Intrinsic Language Feedback}, which
    distills actionable lessons from past successes and failures into a
    memory buffer, retrieved via the proposed SimUtil-UCB strategy to
    effectively leverage accumulated experiences on similar tasks.}
    \label{fig:retroagent_framework}
\end{figure}

\subsection{General Overview}
\label{sec:method_overview}

\paragraph{Problem Formulation.}
We model the LLM agent's multi-turn interaction with its environment as a Markov Decision Process (MDP)~\citep{sutton1998reinforcement}, defined by $\mathcal{M} = (\mathcal{S}, \mathcal{A}, P, R, \gamma)$, where $\mathcal{S}$ is the state space, $\mathcal{A}$ the action space, $P(s_{t+1} \mid s_t, a_t)$ the environment's transition dynamics, $R(s_t, a_t)$ the reward function, and $\gamma \in [0,1]$ the discount factor. At each step $t = 0, \dots, T{-}1$, the agent observes state $s_t \in \mathcal{S}$ and samples action $a_t \in \mathcal{A}$ from its policy $\pi_{\theta}(\cdot \mid s_t)$. In the LLM agent setting, the state is the concatenation of all preceding observations and actions: $s_t = (o_0, a_0, \dots, a_{t-1}, o_t)$. Executing $a_t$ yields reward $r_{t+1} = R(s_t, a_t)$ and successor state $s_{t+1} \sim P(\cdot \mid s_t, a_t)$, producing a trajectory $\tau = (s_0, a_0, r_1, \dots, s_{T-1}, a_{T-1}, r_T)$. With purely extrinsic rewards $r_{t+1} = r_{t+1}^{\text{ext}}$, the standard objective is to maximize the expected discounted return: 
\begin{equation}
    \mathcal{J}_{\text{Standard}}(\theta)
    = \mathbb{E}_{\tau \sim \pi_\theta(\cdot \mid x) \times P}
      \!\left[\, G_0 \,\right]
    = \mathbb{E}_{\tau \sim \pi_\theta(\cdot \mid x) \times P}
      \!\left[\, \sum_{t=0}^{T-1} \gamma^{t}\, r_{t+1}^{\text{ext}} \right],
\end{equation}
\noindent where $x = o_0$ is the task instruction drawn from the training set~$\mathcal{D}$, and $\tau \sim \pi_\theta(\cdot \mid x) \times P$ denotes that trajectories are generated jointly by the policy and the environment dynamics. In practice, extrinsic rewards are sparse: a non-zero terminal reward $R^{\text{ext}}$ is provided only when the episode ends, either upon successful task completion or upon exceeding the allowed number of steps. To simplify credit assignment, we redistribute this terminal reward uniformly across all steps, setting $r_{t+1}^{\text{ext}} = R^{\text{ext}}$ for every~$t$.

\retroagent{} augments this objective with intrinsic feedback from a hindsight self-reflection mechanism. An intrinsic reward $R^{\text{int}}$ (Section~\ref{sec:intrinsic_numerical}) is likewise assigned uniformly to every step, yielding the composite objective:
\begin{equation}
    \mathcal{J}_{\text{RetroAgent}}(\theta)
    = \mathbb{E}_{\tau \sim \Pi_\theta(\cdot \mid x,\, \mathcal{M}) \times P}
      \left[\, \sum_{t=0}^{T-1} \gamma^{t}
      \left(R^{\text{ext}} + R^{\text{int}} \right) \right],
    \label{eq:objective_1}
\end{equation}
\noindent where \(\Pi_\theta(\cdot \mid x,\, \mathcal{M})\) denotes a mixture distribution over trajectories induced by two policies: the base policy \(\pi_\theta(\cdot \mid x)\) and a memory-augmented policy \(\pi_\theta\!\bigl(\cdot \mid f_{\text{memory}}(x,\, \mathcal{M})\bigr)\). Here, \(f_{\text{memory}}(x, \mathcal{M})\) represents the proposed SimUtil-UCB retrieval strategy (Section~\ref{sec:intrinsic_language}), which selects a memory instance that is both relevant and useful from the memory buffer \(\mathcal{M}\) (which grows over time) to augment the task instruction~\(x\).

\paragraph{Hindsight Self-Reflection.}
At its core, \retroagent{} incorporates a hindsight self-reflection mechanism for efficient experiential learning. At the conclusion of each episode, the agent evaluates its trajectory via a reflection function $z = f_{\text{reflect}}(\tau)$, leveraging in-context learning~\citep{DBLP:journals/tmlr/WeiTBRZBYBZMCHVLDF22}.\footnote{For notational simplicity, we reuse $\tau$ to denote the agent--environment interaction history, consisting of interleaved observations and actions.} This function produces a reflection tuple $z = (\phi_{(x,\tau)}, c, l)$ comprising three components: $(\RN{1})$~a scalar \textit{potential score} $\phi_{(x,\tau)} \in [0,1]$ estimating the subtask completion rate, from which the intrinsic numerical reward $R^{\text{int}}$ is derived (Section~\ref{sec:intrinsic_numerical}); $(\RN{2})$~a binary \textit{success prediction} $c \in \{\text{success}, \text{failure}\}$; and $(\RN{3})$~a natural-language \textit{retrospective lesson} $l$ distilled from the trajectory. The lesson \(l\), encapsulated within a memory entry \(m\), is stored in a memory buffer \(\mathcal{M}\) and subsequently retrieved via \(f_{\text{memory}}(x,\, \mathcal{M})\) to provide intrinsic language feedback (Section~\ref{sec:intrinsic_language}).

The central challenge of this mechanism lies in eliciting high-quality intrinsic feedback. To this end, we propose two variants: an \textbf{\emph{in-context}} variant and an \textbf{\emph{RL-trained}} variant.

\noindent \textbf{In-Context Variant.}
We employ \textit{pairwise induction} by augmenting the reflection function with two additional inputs: $(\text{i})$~a binary outcome indicator $I^{\text{ext}} \in \{\text{success}, \text{failure}\}$, and $(\text{ii})$~a contrastive reference trajectory $\tau_{\text{ref}}$ collected from an earlier training step whose outcome differs from that of the current episode. Contrasting successful and failed trajectories enables the model to more precisely isolate behavioral strengths and deficiencies, yielding higher-quality potential scores and lessons~\citep{lee2023rlaif}. The resulting reflection function takes the form $z = f_{\text{reflect}}(\tau_{\text{ref}},\, I^{\text{ext}},\, \tau)$.

\noindent \textbf{RL-Trained Variant.}
In this variant, the agent is jointly optimized so that its self-reflection capability co-evolves with its decision-making policy. We introduce a reflection reward $R^{\text{reflect}}$ that quantifies the accuracy of the agent's self-assessment:
\begin{equation}\label{eq:reflect_reward}
  R^{\text{reflect}} := R^{\text{ext},(i)} \cdot \mathbf{1}\{c=I^{\mathrm{ext}}\},
\end{equation}
where \(\mathbf{1}\{\cdot\}\) is the indicator function and $c$ is the success prediction produced by the reflection. Scaling by $R^{\text{ext}}$ aligns the magnitude of the reflection reward with that of the extrinsic signal.\footnote{Alternative reward-scaling strategies are possible but are left for future work.} Let $\varphi_\theta$ denote the reflection policy, which generates the reflection tuple $z = (\phi_{(x,\tau)},\, c,\, l)$ conditioned on the trajectory~$\tau$. The composite training objective generalizes Equation~\ref{eq:objective_1} by incorporating a self-reflection term:
\begin{equation}
    \label{eq:composite_objective}
    \mathcal{J}_{\text{RetroAgent}}(\theta)
    = \underbrace{
        \mathbb{E}_{\tau \sim \Pi_\theta(\cdot \mid x,\, \mathcal{M}) \times P}
        \left[\,\sum_{t=0}^{T-1} \gamma^{t}
        \left( R^{\text{ext}} + R^{\text{int}} \right)
        \right]
      }_{\text{Decision-Making}}
    \;+\;
      \underbrace{\lambda_{\text{reflect}} \cdot
        \mathbb{E}_{z \sim \varphi_\theta(\cdot \mid \tau)}
        \left[\, R^{\text{reflect}} \right]
      }_{\text{Self-Reflection}},
\end{equation}
where $\lambda_{\text{reflect}} \geq 0$ is a coefficient controlling the relative weight of the self-reflection objective; Equation~\ref{eq:objective_1} is recovered when $\lambda_{\text{reflect}} = 0$. Prompt templates for both variants are provided in Appendix~\ref{app:task_prompts}, and optimization details are discussed in Section~\ref{sec:policy_optimization_dual_feedback}.

\subsection{Intrinsic Numerical Feedback for Directed Exploration}
\label{sec:intrinsic_numerical}

Sparse terminal rewards do not distinguish uninformative failures from
trajectories that make meaningful partial progress. This limitation is
especially severe for LLM agents, whose action spaces are large and whose
useful intermediate reasoning or tool-use behaviors may not immediately produce
task success. To provide a directed exploration signal, \retroagent{} converts
the reflected potential score \(\phi_{(x,\tau)}\) into an intrinsic
\emph{capability-evolution} reward. This reward assigns intrinsic credit only
to trajectories whose reflected progress exceeds the best group-level
performance previously observed for the same task.

For each task \(x\), we maintain a historical baseline \(\Phi_x^{(k)}\), where
\(\Phi_x^{(0)}=0\). The baseline records the maximum group-mean extrinsic
success rate observed before iteration \(k\). After collecting \(N\) rollouts
for task \(x\) at iteration \(k\), we compute
$\bar{I}^{\mathrm{ext}}_k(x)
    =
    \frac{1}{N}
    \sum_{j=1}^{N} I_{k}^{\mathrm{ext},(j)},$
where \(I_{k}^{\mathrm{ext},(j)}\in\{0,1\}\) denotes the extrinsic success
indicator of the \(j\)-th rollout. The baseline is then updated by
\begin{equation}
    \Phi_x^{(k+1)}
    =
    \max\left\{
        \Phi_x^{(k)},
        \bar{I}^{\mathrm{ext}}_k(x)
    \right\}.
    \label{eq:baseline_update}
\end{equation}
Given a trajectory \(\tau\) with reflected potential score
\(\phi_{(x,\tau),k}\), its intrinsic reward is defined as the rectified gain
over the current baseline:
\begin{equation}
    R_k^{\mathrm{int}}(\tau)
    =
    \left[
        \phi_{(x,\tau),k}
        -
        \Phi_x^{(k)}
    \right]_+,
    \qquad
    [a]_+ \coloneqq \max\{a,0\}.
    \label{eq:intrinsic_reward}
\end{equation}
The trajectory-level reward used for policy optimization is
\[
    R_k^{\mathrm{aug}}(\tau)
    =
    R^{\mathrm{ext}}(\tau)
    +
    R_k^{\mathrm{int}}(\tau).
\]
When the iteration is clear from context, we omit the superscript \(k\).

This reward has two useful properties. First, it is
\emph{progress-sensitive}: even failed trajectories may receive positive
intrinsic reward if their reflected potential exceeds the historical baseline.
Second, it is \emph{self-normalizing}: since \(\Phi_x^{(k)}\) is monotonically
non-decreasing, intrinsic credit is assigned only to trajectories that improve
upon the agent's current capability threshold. Thus, the reward encourages
capability improvement rather than undirected novelty or repeated partial
behaviors.

\paragraph{Theoretical Justification.}
We next formalize why the capability-evolution reward promotes directed
exploration. For a task \(x\) and policy class
\(\Pi\), define the extrinsic objective of policy \(\pi\in\Pi\) as $J_{\mathrm{ext}}(\pi;x)
    =
    \mathbb{E}_{\tau\sim\pi(\cdot\mid x)\times P}
    \left[
        R^{\mathrm{ext}}(\tau)
    \right],$
and define the augmented objective at iteration \(k\) as $J_{\mathrm{aug}}(\pi;x)
    =
    \mathbb{E}_{\tau\sim\pi(\cdot\mid x)\times P}
    \left[
        R^{\mathrm{ext}}(\tau)
        +
        R_k^{\mathrm{int}}(\tau)
    \right].$

\begin{definition}[Valuable exploratory trajectory]
\label{def:valuable_exploration}
At iteration \(k\), a trajectory \(\tau\) is \(\delta\)-valuable for task \(x\)
if its reflected potential exceeds the historical baseline by at least
\(\delta>0\): $\phi_{(x,\tau),k}
    \ge
    \Phi_x^{(k)}+\delta.$
We denote the set of all such trajectories by
\[
    \mathcal{V}_{k,\delta}(x)
    =
    \left\{
        \tau:
        \phi_{(x,\tau),k}
        \ge
        \Phi_x^{(k)}+\delta
    \right\}.
\]
\end{definition}

\begin{proposition}[Intrinsic reward favors valuable exploration]
\label{prop:intrinsic_policy_search}
Let $p_{k,\delta}(\pi;x)
    =
    \Pr_{\tau\sim\pi(\cdot\mid x)\times P}
    \left[
        \tau\in\mathcal{V}_{k,\delta}(x)
    \right].$
Then, for any policy \(\pi\in\Pi\),
\begin{equation}
    J_{\mathrm{aug}}(\pi;x)
    \ge
    J_{\mathrm{ext}}(\pi;x)
    +
    \delta\,p_{k,\delta}(\pi;x).
    \label{eq:valuable_lower_bound}
\end{equation}
Consequently, among policies with equal extrinsic value, the augmented
objective favors policies that sample \(\delta\)-valuable trajectories with
higher probability.
\end{proposition}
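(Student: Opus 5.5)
The plan is to compare the two objectives pointwise, trajectory by trajectory, and then take expectations. By linearity of expectation under the common trajectory distribution $\pi(\cdot\mid x)\times P$, the difference of the two objectives is just the expected intrinsic reward:
\[
J_{\mathrm{aug}}(\pi;x)-J_{\mathrm{ext}}(\pi;x)=\mathbb{E}_{\tau\sim\pi(\cdot\mid x)\times P}\bigl[R_k^{\mathrm{int}}(\tau)\bigr].
\]
This requires $R^{\mathrm{ext}}$ to be integrable. That holds because $R^{\mathrm{ext}}$ takes finitely many terminal values. Integrability of $R_k^{\mathrm{int}}$ is also immediate, since $\phi_{(x,\tau),k}\in[0,1]$ and $\Phi_x^{(k)}\ge 0$ give $0\le R_k^{\mathrm{int}}\le 1$.

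The key step is a pointwise lower bound:
\[
R_k^{\mathrm{int}}(\tau)\ge \delta\,\mathbf{1}\{\tau\in\mathcal{V}_{k,\delta}(x)\}.
\]
I split into two cases according to Definition~\ref{def:valuable_exploration}.
\begin{itemize}
\item If $\tau\in\mathcal{V}_{k,\delta}(x)$, then $\phi_{(x,\tau),k}-\Phi_x^{(k)}\ge\delta>0$, so the rectification in \eqref{eq:intrinsic_reward} is inactive and $R_k^{\mathrm{int}}(\tau)\ge\delta$.
\item If $\tau\notin\mathcal{V}_{k,\delta}(x)$, the right-hand side is $0$, and $R_k^{\mathrm{int}}(\tau)\ge 0$ holds by definition of $[\cdot]_+$.
\end{itemize}
Taking expectations, monotonicity gives $\mathbb{E}[R_k^{\mathrm{int}}]\ge\delta\Pr[\tau\in\mathcal{V}_{k,\delta}(x)]=\delta\,p_{k,\delta}(\pi;x)$. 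Combined with the identity above, this yields \eqref{eq:valuable_lower_bound}.

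For the ``consequently'' clause, I will read it as a comparison of the guaranteed lower bounds. If $J_{\mathrm{ext}}(\pi_1;x)=J_{\mathrm{ext}}(\pi_2;x)$, then the bound on $J_{\mathrm{aug}}$ is strictly larger for the policy with larger $p_{k,\delta}$. The difference equals $\delta$ times the gap in those probabilities.

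I will also note a caveat in the text. A genuine strict inequality between the two $J_{\mathrm{aug}}$ values does not follow in general, because intrinsic reward from non-valuable trajectories (those with gain in $(0,\delta)$) could differ between the policies. The statement should therefore be understood as favoring policies through this certified bonus, or in the limiting regime where intrinsic gains below $\delta$ are negligible.

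There is no real obstacle here. The only point needing care is that $\Phi_x^{(k)}$ and the scoring map $\tau\mapsto\phi_{(x,\tau),k}$ are held fixed at iteration $k$, independent of $\pi$. With that fixed, $R_k^{\mathrm{int}}$ is a deterministic, measurable function of $\tau$ and the expectation manipulations are legitimate. I will state this assumption explicitly. If the reflection is stochastic, the same argument applies after also averaging over the reflection randomness.
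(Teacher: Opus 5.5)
Your proof is correct and is the same as the paper's: a pointwise bound $R_k^{\mathrm{int}}(\tau)\ge\delta\,\mathbf{1}\{\tau\in\mathcal{V}_{k,\delta}(x)\}$, then taking expectations and adding $J_{\mathrm{ext}}(\pi;x)$. Your caveat is a fair refinement that the paper's proof leaves unaddressed: the ``consequently'' clause only compares the certified lower bounds, not the $J_{\mathrm{aug}}$ values themselves.
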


\begin{proof}
If \(\tau\in\mathcal{V}_{k,\delta}(x)\), then $R_k^{\mathrm{int}}(\tau)
    =
    \left[
        \phi_{(x,\tau),k}
        -
        \Phi_x^{(k)}
    \right]_+
    \ge
    \delta.$
For all other trajectories, \(R_k^{\mathrm{int}}(\tau)\ge 0\). Therefore, $\mathbb{E}_{\tau\sim\pi(\cdot\mid x)\times P}
    \left[
        R_k^{\mathrm{int}}(\tau)
    \right]
    \ge
    \delta\,
    \Pr_{\tau\sim\pi(\cdot\mid x)\times P}
    \left[
        \tau\in\mathcal{V}_{k,\delta}(x)
    \right].$
Adding \(J_{\mathrm{ext}}(\pi;x)\) to both sides proves the claim.
\end{proof}

Proposition~\ref{prop:intrinsic_policy_search} shows that intrinsic reward
breaks the degeneracy among failed trajectories in sparse-reward settings:
failures with stronger reflected progress obtain larger augmented returns and
therefore provide a directed learning signal.

\begin{proposition}[Monotone exploration pressure]
\label{prop:monotone_exploration_pressure}
For every task \(x\), the baseline sequence
\(\{\Phi_x^{(k)}\}_{k\ge 0}\) is monotonically non-decreasing. Moreover,
\[
    R_k^{\mathrm{int}}(\tau)>0
    \quad\Longleftrightarrow\quad
    \phi_{(x,\tau),k}>\Phi_x^{(k)}.
\]
Thus, as the agent's capability improves, intrinsic reward is assigned only to
trajectories that exceed the updated threshold.
\end{proposition}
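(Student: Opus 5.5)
The proof splits into the two claims of Proposition~\ref{prop:monotone_exploration_pressure}, and both follow directly from the definitions in Equations~\ref{eq:baseline_update} and~\ref{eq:intrinsic_reward}; no earlier result beyond these definitions is needed.

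\emph{Monotonicity.} I would argue by the update rule itself. By Equation~\ref{eq:baseline_update}, for every \(k\ge 0\),
\[
\Phi_x^{(k+1)}=\max\bigl\{\Phi_x^{(k)},\bar{I}^{\mathrm{ext}}_k(x)\bigr\}\ge \Phi_x^{(k)},
\]
since the maximum of two reals dominates each argument. A trivial induction on \(k\), starting from \(\Phi_x^{(0)}=0\), then gives \(\Phi_x^{(k)}\le\Phi_x^{(k')}\) for all \(k\le k'\). As a side remark, each \(\bar{I}^{\mathrm{ext}}_k(x)\in[0,1]\), being an average of indicators, so the baseline also stays in \([0,1]\). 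This is not needed for the claim, but it confirms that the baseline and the potential score \(\phi_{(x,\tau),k}\in[0,1]\) live on the same scale.

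\emph{Equivalence.} I would use the elementary property of the rectifier that \([a]_+>0\) if and only if \(a>0\).
\begin{itemize}
\item If \(a>0\), then \([a]_+=a>0\).
\item If \(a\le 0\), then \([a]_+=0\).
\end{itemize}
Setting \(a=\phi_{(x,\tau),k}-\Phi_x^{(k)}\) in Equation~\ref{eq:intrinsic_reward} gives \(R_k^{\mathrm{int}}(\tau)>0\) exactly when \(\phi_{(x,\tau),k}>\Phi_x^{(k)}\).

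\emph{Interpretive consequence.} Combining the two parts: since the threshold \(\Phi_x^{(k)}\) never decreases, a trajectory earns positive intrinsic reward at iteration \(k\) only if it strictly exceeds the current, and hence every earlier, baseline value. This is the formal content of the phrase ``only to trajectories that exceed the updated threshold.''

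There is no genuine obstacle here. The only point needing care is to state clearly that the equivalence holds for a fixed iteration \(k\), with the baseline frozen during that iteration's rollouts. The update to \(\Phi_x^{(k+1)}\) happens only after the \(N\) rollouts are collected, which avoids any circularity between the reward and the baseline it is measured against.
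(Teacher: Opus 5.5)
Your proposal is correct and follows essentially the same route as the paper's proof: monotonicity follows because the update is a $\max$ that dominates $\Phi_x^{(k)}$, and the equivalence follows from $[a]_+>0 \iff a>0$ applied to $a=\phi_{(x,\tau),k}-\Phi_x^{(k)}$. Your extra remarks (the induction for $k\le k'$, the $[0,1]$ range, and the baseline being frozen within an iteration) are sound additions that the paper leaves implicit.
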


\begin{proof}
By the update rule in Equation~\eqref{eq:baseline_update}, $\Phi_x^{(k+1)}
    =
    \max\left\{
        \Phi_x^{(k)},
        \bar{I}^{\mathrm{ext}}_k(x)
    \right\}
    \ge
    \Phi_x^{(k)}.$
Hence, \(\{\Phi_x^{(k)}\}_{k\ge 0}\) is monotonically non-decreasing. The
equivalence follows immediately from
$R_k^{\mathrm{int}}(\tau)
    =
    \left[
        \phi_{(x,\tau),k}
        -
        \Phi_x^{(k)}
    \right]_+.$
\end{proof}

\subsection{Intrinsic Language Feedback via Reflection Memory}
\label{sec:intrinsic_language}

The numerical exploration reward indicates \emph{which} trajectories are promising, but it does not explain \emph{how} the agent should improve.
To provide reusable guidance, \retroagent{} maintains a retrieval-augmented reflection memory.
Each memory entry is \(m_i=(x_i,l_i,\tau_i,u_i,n_i,d_i)\), where \(x_i\) is the task instruction, \(l_i\) is the retrospective lesson, \(\tau_i\) is the source trajectory, \(u_i\in[0,1]\) is the empirical utility of the lesson, \(n_i\in\mathbb{N}\) is its retrieval count, and \(d_i\in\{\mathrm{success},\mathrm{failure}\}\) is the originating extrinsic outcome.

For efficient semantic retrieval, each task instruction is embedded using a frozen sentence encoder \(\mathcal{E}\). We embed each memory instruction as $\mathbf{v}_i = \mathcal{E}(x_i).$

\paragraph{SimUtil-UCB retrieval.}
Given a current task \(x\), SimUtil-UCB scores each memory entry using semantic relevance, empirical utility, and an exploration bonus.
Semantic relevance is measured by cosine similarity,
\begin{equation}
    s_{\mathrm{rel}}(x,x_i)
    =
    \frac{
        \mathcal{E}(x)^\top \mathbf{v}_i
    }{
        \|\mathcal{E}(x)\|_2\|\mathbf{v}_i\|_2
    },
    \label{eq:sim}
\end{equation}
and entries with \(s_{\mathrm{rel}}(x,x_i)<0.4\) are discarded.
Each utility score is initialized to \(u_i=0.5\).
Whenever \(m_i\) is retrieved and the resulting episode obtains task success score \(\hat{u}_t\in[0,1]\), the utility is updated by \(u_i\leftarrow(1-\beta_{\mathrm{util}})u_i+\beta_{\mathrm{util}}\hat{u}_t\), where \(\beta_{\mathrm{util}}\in(0,1)\).

To avoid repeatedly exploiting only a small subset of memories, SimUtil-UCB adds a UCB exploration bonus.
Let \(N_{\mathcal{M}}=\sum_j n_j\) be the total number of memory retrievals.
The utility-UCB value is
\begin{equation}
    u_{\mathrm{UCB}}^{(i)}
    =
    u_i
    +
    \kappa
    \sqrt{
        \frac{\ln N_{\mathcal{M}}}{\max(n_i,1)}
    },
    \label{eq:ucb}
\end{equation}
where \(\kappa>0\) controls memory exploration and is set to \(1.0\) in our experiments.
The final retrieval score is
\begin{equation}
    S(m_i\mid x,\mathcal{M})
    =
    \alpha s_{\mathrm{rel}}(x,x_i)
    +
    (1-\alpha)u_{\mathrm{UCB}}^{(i)},
    \label{eq:retrieval_score}
\end{equation}
where \(\alpha\in[0,1]\) controls the relevance--utility trade-off.
Let \(\mathcal{K}\) be the top-\(k\) entries under \(S\).
The selected lessons are concatenated as \(l_{\mathrm{retrieved}}=\bigoplus_{i\in\mathcal{K}}l_i\), yielding the memory-augmented input \(f_{\mathrm{memory}}(x,\mathcal{M})=x\oplus l_{\mathrm{retrieved}}\).
After retrieval, each selected entry is marked as accessed by setting \(n_i\leftarrow n_i+1\) for all \(i\in\mathcal{K}\).

\paragraph{Theoretical Justification.}
We formalize reflection memory as intrinsic language feedback using the Learning from Language Feedback (LLF) framework~\citep{xu2025provablylearninglanguagefeedback}. Scalar rewards provide outcome-level supervision, whereas retrieved lessons impose trajectory-level constraints that can distinguish task-solving hypotheses more efficiently than rewards alone.

Let \(\Pi\) denote the policy class and \(\mathcal{H}\) a class of text-expressible hypotheses over task-solving strategies. Each hypothesis \(\eta \in \mathcal{H}\) induces a value function: $r_\eta : \Pi \to [0,1].$
For a task \(x\), let \(\mathcal{M}(x)\) be the distribution over retrieved lessons. A lesson \(l \sim \mathcal{M}(x)\) induces a verifier loss:
\[
    \ell(\pi, l, \eta) \in [0,1],
\]
where smaller values indicate that hypothesis \(\eta\) better explains the behavior of policy \(\pi\) under lesson \(l\). The memory-induced loss and its corresponding excess loss are defined as:
\[
    L_{\mathcal{M}(x)}(\pi, \eta) = \mathbb{E}_{l \sim \mathcal{M}(x)} \bigl[ \ell(\pi, l, \eta) \bigr],
\]
\[
    \Delta_{\mathcal{M}(x)}(\pi, \eta) = L_{\mathcal{M}(x)}(\pi, \eta) - \inf_{\eta' \in \mathcal{H}} L_{\mathcal{M}(x)}(\pi, \eta').
\]

\begin{definition}[Memory-informative feedback]
\label{def:memory_informative}
For a task \(x\), the retrieved lesson distribution \(\mathcal{M}(x)\) is \((C, \epsilon)\)-memory-informative if there exist constants \(C > 0\) and \(\epsilon \ge 0\) such that, for all \(\eta, \eta' \in \mathcal{H}\) and \(\pi \in \Pi\):
\[
    \bigl| r_\eta(\pi) - r_{\eta'}(\pi) \bigr|^2 \le C \Delta_{\mathcal{M}(x)}(\pi, \eta') + \epsilon.
\]
\end{definition}

Definition~\ref{def:memory_informative} establishes that if two hypotheses yield different policy values, the retrieved lessons induce a detectable verifier-loss gap, bounded by an approximation error \(\epsilon\). Thus, reflection memory is effective when reward-relevant disagreements are captured by language-feedback consistency.

\begin{definition}[Memory-conditioned transfer eluder dimension]
\label{def:memory_te}
Given a memory distribution \(\mathcal{M}\), verifier loss \(\ell\), and precision \(\epsilon > 0\), the memory-conditioned transfer eluder dimension: $\dim_{\mathrm{TE}} \left( \mathcal{H}, \ell, \epsilon \mid \mathcal{M} \right)$
is the maximum length \(d\) of a sequence \(\pi_1, \ldots, \pi_d \in \Pi\) such that, for some \(\epsilon' \ge \epsilon\), each \(\pi_j\) is \(\epsilon'\)-transfer independent of \(\{\pi_1, \ldots, \pi_{j-1}\}\) under verifier losses induced by lessons retrieved from \(\mathcal{M}\).
\end{definition}

This dimension quantifies the exploration burden—specifically, how many policies must be evaluated before memory-induced language feedback resolves relevant uncertainty.

\begin{theorem}[Memory-guided complexity reduction]
\label{thm:memory_reduces_te}
Suppose \(\mathcal{M}(x)\) is \((C, \epsilon)\)-memory-informative, and define the induced reward class \(\mathcal{R}_{\mathcal{H}} = \{r_\eta : \eta \in \mathcal{H}\}\). Following the transfer-eluder analysis for reward-informative language feedback~\citep{xu2025provablylearninglanguagefeedback}, we have: 
\[
    \dim_{\mathrm{TE}} \left( \mathcal{H}, C\ell, \epsilon \mid \mathcal{M} \right) \le \dim_{\mathrm{E}} \left( \mathcal{R}_{\mathcal{H}}, \epsilon \right),
\]
up to the approximation error in Definition~\ref{def:memory_informative}. This inequality can be strict when retrieved lessons identify intermediate subgoals, failure modes, or corrective actions unobservable from scalar rewards alone.
\end{theorem}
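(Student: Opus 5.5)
The plan is to compare the two dimensions through their defining witness sequences. Take a longest sequence $\pi_1,\dots,\pi_d$ that witnesses $\dim_{\mathrm{TE}}(\mathcal{H}, C\ell, \epsilon \mid \mathcal{M})$. We will show that, after absorbing the additive slack $\epsilon$ from Definition~\ref{def:memory_informative}, this same sequence witnesses eluder independence for $\mathcal{R}_{\mathcal{H}}$ at precision $\epsilon$. That gives $d \le \dim_{\mathrm{E}}(\mathcal{R}_{\mathcal{H}},\epsilon)$.

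We use the transfer-independence notion of \citet{xu2025provablylearninglanguagefeedback}. Policy $\pi_j$ is $\epsilon'$-transfer independent of $\{\pi_1,\dots,\pi_{j-1}\}$ if there exist hypotheses $\eta,\eta'$ satisfying two conditions:
\[
\sum_{i<j} C\,\Delta_{\mathcal{M}(x)}(\pi_i,\eta') \le \epsilon'^2
\quad\text{and}\quad
|r_\eta(\pi_j)-r_{\eta'}(\pi_j)|>\epsilon' .
\]
The first condition says that $\eta'$ is nearly consistent with all the language feedback seen on the earlier policies. The second says that $\eta$ and $\eta'$ still disagree in reward on the new policy.

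The key step is to apply Definition~\ref{def:memory_informative} to each earlier policy $\pi_i$. This gives
\[
|r_\eta(\pi_i)-r_{\eta'}(\pi_i)|^2 \le C\,\Delta_{\mathcal{M}(x)}(\pi_i,\eta')+\epsilon .
\]
Summing over $i<j$ yields
\[
\sum_{i<j}|r_\eta(\pi_i)-r_{\eta'}(\pi_i)|^2 \le \epsilon'^2 + (j-1)\epsilon .
\]
This is exactly the eluder-dimension premise for the pair $r_\eta, r_{\eta'}\in\mathcal{R}_{\mathcal{H}}$, inflated by the approximation term. The disagreement on $\pi_j$ carries over verbatim. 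So each $\pi_j$ is $\tilde\epsilon'$-independent of its predecessors in the standard eluder sense for $\mathcal{R}_{\mathcal{H}}$, with
\[
\tilde\epsilon'^2 = \epsilon'^2 + (j-1)\epsilon .
\]
When $\epsilon=0$ this gives $d\le \dim_{\mathrm{E}}(\mathcal{R}_{\mathcal{H}},\epsilon)$ immediately. The monotonicity of the eluder dimension in the precision parameter handles the requirement ``for some $\epsilon'\ge\epsilon$''.

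The main obstacle is the approximation term. The accumulated slack $(j-1)\epsilon$ grows with the sequence length, so the inequality holds only ``up to the approximation error''. I would first handle this by stating the clean result for $\epsilon=0$. I would then say that for $\epsilon>0$ the comparison holds at the inflated precision $\sqrt{\epsilon'^2 + d\epsilon}$, mirroring how \citet{xu2025provablylearninglanguagefeedback} treat approximate reward-informativeness.

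For the strictness remark, I would give an example rather than a proof. Take a class whose hypotheses agree on terminal reward for many policies but disagree on an intermediate subgoal that the lessons expose. Then a single lesson drives $\Delta$ large for all the wrong hypotheses, so the transfer-eluder sequence has length $1$, while scalar rewards need many policies to separate the hypotheses.
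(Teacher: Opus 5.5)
Your exact-case argument is the paper's own proof sketch made explicit. A transfer-independent witness sequence under $C\ell$ becomes, after summing the informativeness inequality over $i<j$, an eluder-independent sequence for $\mathcal{R}_{\mathcal{H}}$ at the same scale $\epsilon'$, which gives $d\le\dim_{\mathrm{E}}(\mathcal{R}_{\mathcal{H}},\epsilon)$.

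Keep the approximation error (call it $\epsilon_0$) separate from the precision $\epsilon>0$. Your sentence ``when $\epsilon=0$ this gives $d\le\dim_{\mathrm{E}}(\mathcal{R}_{\mathcal{H}},\epsilon)$'' only makes sense with two symbols, and Definition~\ref{def:memory_te} requires positive precision.

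The step that actually fails is the extension to $\epsilon_0>0$. Eluder $\tilde\epsilon'$-independence requires \emph{both} $\sum_{i<j}|r_\eta(\pi_i)-r_{\eta'}(\pi_i)|^2\le\tilde\epsilon'^2$ \emph{and} $|r_\eta(\pi_j)-r_{\eta'}(\pi_j)|>\tilde\epsilon'$. You inflated the first threshold to $\tilde\epsilon'^2=\epsilon'^2+(j-1)\epsilon_0$. But the disagreement on $\pi_j$ is only known to exceed $\epsilon'$, which is smaller than $\tilde\epsilon'$ once $j\ge2$, so it does not ``carry over verbatim.'' Nothing you show makes $\pi_j$ independent at a single scale, and that scale would also have to be uniform in $j$. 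The claimed comparison at precision $\sqrt{\epsilon'^2+d\epsilon_0}$ is therefore unproved. A correct approximate version needs either a mismatched-scale notion of independence, or a tightened transfer premise such as $\sum_{i<j}C\Delta_{\mathcal{M}(x)}(\pi_i,\eta')\le\epsilon'^2-(j-1)\epsilon_0$; under the latter your summation goes through unchanged.

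Your strictness example is also incompatible with the hypothesis as stated. The right-hand side of Definition~\ref{def:memory_informative} depends only on $\eta'$. Taking $\eta'$ with $\Delta_{\mathcal{M}(x)}(\pi,\eta')<\delta$ therefore gives $|r_\eta(\pi)-r_{\eta'}(\pi)|^2\le C\delta+\epsilon_0$ for \emph{every} $\eta$. Letting $\delta\to0$, any two functions in $\mathcal{R}_{\mathcal{H}}$ differ by at most $2\sqrt{\epsilon_0}$ at every policy.

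With $\epsilon_0=0$, $\mathcal{R}_{\mathcal{H}}$ is a single function, both dimensions are $0$, and strict inequality is impossible. A class whose hypotheses need many policies to be separated by scalar rewards is simply not $(C,0)$-memory-informative. Relatedly, in your transfer-independence condition the lessons constrain only $\eta'$ while $\eta$ ranges freely. So a lesson that rules out all wrong candidates does not cut the sequence to length $1$ unless the reward gaps to the surviving $\eta'$ themselves vanish.

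A genuine strict example must either work at precisions below $2\sqrt{\epsilon_0}$ or restore the dependence of the verifier loss on the feedback-generating hypothesis $\eta$, as in \citet{xu2025provablylearninglanguagefeedback}. The paper's own strictness remark is equally informal, so you inherit this defect rather than introduce it. Still, your example should not be presented as satisfying the theorem's assumptions.
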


\begin{proof}[Proof sketch]
By Definition~\ref{def:memory_informative}, any pair of hypotheses that differ in induced value on a policy also exhibit a corresponding gap in memory-induced verifier loss, scaled by \(C\) and up to error \(\epsilon\). Thus, memory feedback can distinguish all reward-relevant hypothesis disagreements that scalar rewards can. Consequently, any sequence remaining transfer-independent under the scaled verifier \(C\ell\) cannot exceed the length of the longest sequence remaining independent under the induced reward class \(\mathcal{R}_{\mathcal{H}}\). Strict improvement occurs because a single explanatory lesson can eliminate multiple hypotheses that sparse scalar rewards cannot distinguish.
\end{proof}

Theorem~\ref{thm:memory_reduces_te} formalizes reflection memory's role in reducing exploration complexity: retrieved lessons do not merely provide context; they actively constrain the set of plausible task-solving hypotheses.

\begin{proposition}[SimUtil-UCB retrieval behavior]
\label{prop:simutil_ucb_theory}
Assume \(u_i\) is an unbiased or asymptotically consistent estimate of memory \(m_i\)'s expected downstream utility on semantically related tasks, and that \(s_{\mathrm{rel}}(x, x_i)\) is bounded. SimUtil-UCB exhibits the following properties: \((\RN{1})\)~Among memories with comparable semantic relevance, entries with higher empirical utility are preferred as \(n_i\) grows. \((\RN{2})\)~Among memories with comparable empirical utility, entries with higher semantic relevance are preferred. \((\RN{3})\)~Memories with lower retrieval counts receive larger exploration bonuses.
\end{proposition}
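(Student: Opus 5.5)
The plan is to treat the retrieval score in Equation~\eqref{eq:retrieval_score} as an explicit function of three arguments and read off each property from monotonicity and from the limiting behaviour of the UCB bonus in Equation~\eqref{eq:ucb}. Write
\[
S(m_i\mid x,\mathcal{M}) = \alpha\, s_{\mathrm{rel}}(x,x_i) + (1-\alpha)\,u_i + (1-\alpha)\,\kappa\, b(n_i,N_{\mathcal{M}}),
\qquad
b(n,N)=\sqrt{\tfrac{\ln N}{\max(n,1)}}.
\]
Since $\alpha\in[0,1]$ and $\kappa>0$, $S$ is non-decreasing in $s_{\mathrm{rel}}$, in $u_i$, and in $b$. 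We will assume the non-degenerate range $0<\alpha<1$ so that the monotonicities are strict; the endpoint cases only make one property vacuous. All three claims are comparisons between two entries $m_i,m_j$ evaluated against the same $x$ and the same $N_{\mathcal{M}}$, so we will compute the difference $S(m_i)-S(m_j)$ term by term.

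\textbf{Property (iii).} This is immediate. For fixed $N_{\mathcal{M}}\ge 2$, the map $n\mapsto b(n,N_{\mathcal{M}})$ is non-increasing, and it is strictly decreasing for $n\ge 1$. Hence $n_i<n_j$ gives $b(n_i)\ge b(n_j)$. We will note the boundary convention $\max(n,1)$, under which unretrieved entries receive the same bonus as entries retrieved once.

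\textbf{Property (ii).} "Comparable empirical utility" will be formalised as $|u_i-u_j|\le\eta$ and $n_i=n_j$. The second condition ensures that the bonuses match, or at least that they are comparable. With this, $S(m_i)-S(m_j)\ge \alpha\,[s_{\mathrm{rel}}(x,x_i)-s_{\mathrm{rel}}(x,x_j)]-(1-\alpha)\eta$. This is positive once the relevance gap exceeds $(1-\alpha)\eta/\alpha$, and the boundedness of $s_{\mathrm{rel}}$ guarantees that the comparison is well defined.

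\textbf{Property (i).} This is the main obstacle, because "preferred as $n_i$ grows" is an asymptotic statement that has to control the exploration bonus. The plan is to fix $|s_{\mathrm{rel}}(x,x_i)-s_{\mathrm{rel}}(x,x_j)|\le\eta$ and to let $\mu_i>\mu_j$ denote the true expected utilities. By the consistency assumption, $u_i\to\mu_i$ and $u_j\to\mu_j$ as the retrieval counts grow.

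We then need the bonus difference to vanish. This requires $\ln N_{\mathcal{M}}/n_i\to 0$ and $\ln N_{\mathcal{M}}/n_j\to 0$, i.e.\ every compared entry is retrieved a non-negligible fraction of times relative to $\ln N_{\mathcal{M}}$. Standard UCB reasoning supplies this: an entry with small $n$ has a bonus growing like $\sqrt{\ln N}$, so it eventually gets selected. If that argument is too delicate for top-$k$ selection over a growing buffer, we will instead state the condition $n_i,n_j=\omega(\ln N_{\mathcal{M}})$ as a hypothesis.

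Under these conditions, $\liminf\,[S(m_i)-S(m_j)]\ge (1-\alpha)(\mu_i-\mu_j)-\alpha\eta$. This is positive whenever the utility gap dominates the relevance slack, which proves (i). We will close by remarking that the proposition describes preference ordering only and makes no regret claim.
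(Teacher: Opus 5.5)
Your route is essentially the paper's, made quantitative. The paper also reads (i)--(iii) off the monotonicity of \(S\) in \(s_{\mathrm{rel}}\), in \(u_i\), and in the bonus. For (ii) it likewise holds the retrieval count fixed, as your \(n_i=n_j\) does. It tacitly needs \(0<\alpha<1\) too; at an endpoint the corresponding property genuinely fails rather than becoming vacuous. For (i) it simply asserts that the bonus vanishes as \(n_i\to\infty\). Your slack formulations are sharper than the paper's sketch, as is your remark that the bonus gap disappears only if \(\ln N_{\mathcal{M}}/n\to 0\) for \emph{both} compared entries. The sketch ignores both the growth of \(N_{\mathcal{M}}\) and the competitor's bonus.

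One justification in (i) is wrong: standard UCB reasoning does not give \(n_i,n_j=\omega(\ln N_{\mathcal{M}})\). The argument you cite (an under-retrieved entry's bonus grows like \(\sqrt{\ln N_{\mathcal{M}}}\), so it is eventually selected) yields only \(n\to\infty\). In the classical analysis a lower-utility arm is pulled only \(\Theta(\ln N)\) times. Its bonus therefore settles at roughly the size of the utility gap instead of vanishing, and that persistent bonus is exactly how UCB keeps exploring.

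The actual setting gives even less control. It is contextual and top-\(k\), the buffer keeps growing, new entries enter with the maximal bonus \(\sqrt{\ln N_{\mathcal{M}}}\), and the \(0.4\) cutoff can exclude an entry for many queries. So your fallback is not optional: state \(n_i,n_j=\omega(\ln N_{\mathcal{M}})\) as a hypothesis. This is also what the paper's ``as \(n_i\to\infty\) this bonus vanishes'' implicitly assumes.

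Relatedly, passing through true utilities \(\mu_i\) uses only the ``asymptotically consistent'' half of the stated hypothesis. The constant-\(\beta_{\mathrm{util}}\) EMA actually used is asymptotically unbiased but not consistent. Since (i) is phrased in terms of empirical utility, you can instead compare \(u_i\) and \(u_j\) directly, as the paper does. That removes the need for the consistency hypothesis, at the price of assuming the empirical gap persists.
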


\begin{proof}
The retrieval score is defined as:
\[
    S(m_i \mid x, \mathcal{M}) = \alpha s_{\mathrm{rel}}(x, x_i) + (1 - \alpha) \left( u_i + \kappa \sqrt{\frac{\ln N_{\mathcal{M}}}{\max(n_i, 1)}} \right),
\]
where \(N_{\mathcal{M}} = \sum_j n_j\). For a fixed semantic relevance, increasing \(u_i\) increases \(S\), favoring higher-utility memories. For a fixed utility and retrieval count, increasing \(s_{\mathrm{rel}}(x, x_i)\) increases \(S\), favoring more relevant memories. Finally, the exploration term decreases monotonically with \(n_i\), ensuring under-tested memories receive a larger bonus. As \(n_i \to \infty\), this bonus vanishes, and retrieval is governed strictly by semantic relevance and empirical utility, adhering to the optimism-under-uncertainty principle~\citep{auer2002finite}.
\end{proof}

\subsection{Policy Optimization with Dual Intrinsic Feedback}
\label{sec:policy_optimization_dual_feedback}

\retroagent{} is compatible with a broad class of RL algorithms. In this work, we instantiate it with GRPO~\citep{shao2024deepseekmath}, adapted to incorporate dual intrinsic feedback into multi-turn trajectory optimization. We describe the trajectory generation procedure, the decision-making objective, and the optional self-reflection objective in turn.

\paragraph{Trajectory Generation with Memory Augmentation.}
For each task instruction~\(x\) from~\(\mathcal{D}\), we generate~\(N\) trajectories under~\(\Pi_{\theta_{\text{old}}}(\cdot \mid x, \, \mathcal{M}) \times P\) (Equation~\ref{eq:objective_1}). The first~\(N/2\) are sampled from the base policy,~\(\tau^{(i)} \sim \pi_{\theta_{\text{old}}}(\cdot \mid x) \times P\), and the remaining~\(N/2\) from the memory-augmented policy,~\(\tau^{(i)} \sim \pi_{\theta_{\text{old}}}\!\bigl(\cdot \mid f_{\text{memory}}(x, \mathcal{M})\bigr) \times P\). Each trajectory~\(\tau^{(i)} = (s_0^{(i)}, a_0^{(i)}, \dots, s_{T_i-1}^{(i)}, a_{T_i-1}^{(i)})\) is a state--action sequence of length~\(T_i\). This partition lets the agent leverage past experience via memory retrieval while retaining the capacity for independent exploration, facilitating continuous policy adaptation.

\begin{algorithm}[t]
\caption{\retroagent{} Training Framework}
\label{alg:retroagent}
\begin{algorithmic}[1]
\Require Training dataset \(\mathcal{D}\), Sentence encoder \(\mathcal{E}\), hyperparameters \(\alpha, \beta_{\text{util}}, \kappa, \lambda_{\text{reflect}}\).
\State Initialize memory buffer \(\mathcal{M} \leftarrow \emptyset\), policy parameters \(\theta\), historical baselines \(\Phi_x \leftarrow 0\) for all \(x \in \mathcal{D}\).
\For{each training iteration}
    \State Sample task instruction \(x \sim \mathcal{D}\).
    
    \State \textbf{\% 1. Memory Retrieval (SimUtil-UCB)}
    \If{\(\mathcal{M} \neq \emptyset\)}
        \State Compute semantic relevance \(s_{\mathrm{rel}}\) via cosine similarity using \(\mathcal{E}(x)\).
        \State Retrieve top-\(k\) entries \(\mathcal{K}\) maximizing \(S(m_i \mid x, \mathcal{M}) = \alpha s_{\mathrm{rel}} + (1 - \alpha)u_{\mathrm{util-UCB}}^{(i)}\).
        \State Form augmented input \(f_{\text{memory}}(x, \mathcal{M}) = x \oplus l_{\text{retrieved}}\).
        \State Increment access counts: \(n_i \leftarrow n_i + 1\) for all \(i \in \mathcal{K}\).
    \EndIf

    \State \textbf{\% 2. Trajectory Generation}
    \State Generate \(N/2\) trajectories \(\tau\) via base policy \(\pi_{\theta_{\text{old}}}(\cdot \mid x)\).
    \State Generate \(N/2\) trajectories \(\tau\) via memory-augmented policy \(\pi_{\theta_{\text{old}}}\!\bigl(\cdot \mid f_{\text{memory}}(x, \mathcal{M})\bigr)\).

    \State \textbf{\% 3. Self-Reflection \& Intrinsic Feedback}
    \For{each trajectory \(\tau^{(i)}\) in the \(N\) rollouts}
        \State Observe extrinsic reward \(R^{\text{ext},(i)}\) and outcome \(I^{\text{ext},(i)}\).
        \State Generate reflection tuple \(z^{(i)} = (\phi_{(x,\tau)}^{(i)}, c^{(i)}, l^{(i)})\) via reflection function \(f_{\text{reflect}}\) or policy \(\varphi_\theta\).
        \State Compute capability-evolution intrinsic reward: \(R^{\text{int},(i)} \leftarrow \max\!\bigl(0,\; \phi_{(x,\tau)}^{(i)} - \Phi_x\bigr)\).
        \State Store new memory entry \(m^{(i)}\) into buffer \(\mathcal{M}\).
    \EndFor
    
    \State Update task baseline: \(\Phi_x \leftarrow \max\!\bigl(\Phi_x,\; \frac{1}{N}\sum_{j=1}^{N} I^{\text{ext},(j)}\bigr)\).
    \State Update utilities \(u_i\) for retrieved entries \(i \in \mathcal{K}\) via EMA: \(u_i \leftarrow (1-\beta_{\text{util}})u_i + \beta_{\text{util}}\hat{u}_t\).

    \State \textbf{\% 4. Policy Optimization (Dual Feedback)}
    \State Compute advantages \(\hat{A}^{(i)}\) using composite returns \(G^{(i)} = \sum \gamma^{t} \bigl(R^{\text{ext},(i)} + R^{\text{int},(i)}\bigr)\).
    \State Update decision-making policy \(\theta\) by maximizing \(\mathcal{J}_{\text{Decision-Making}}(\theta)\) via GRPO.
    \If{using RL-Trained Reflection Variant}
        \State Compute reflection reward: \(R^{\text{reflect},(i)} \leftarrow R^{\text{ext},(i)} \cdot \mathbf{1}\{c=I^{\mathrm{ext}}\}\).
        \State Update reflection policy \(\varphi_\theta\) by maximizing \(\mathcal{J}_{\text{Self-Reflection}}(\theta)\) via REINFORCE.
    \EndIf
\EndFor
\end{algorithmic}
\end{algorithm}

\paragraph{Decision-Making Objective.}
Since both~$R^{\text{ext}}$ and~$R^{\text{int}}$ are uniform across time steps (Section~\ref{sec:method_overview}), the discounted return reduces to a trajectory-level scalar~$G^{(i)} = \sum_{t=0}^{T_i-1} \gamma^{t}\bigl(R^{\text{ext},(i)} + R^{\text{int},(i)}\bigr)$, and every step within a trajectory shares the same group-relative advantage: $    \hat{A}^{(i)} = \frac{G^{(i)} - \operatorname{mean}\!\bigl(\{G^{(1)}, \dots, G^{(N)}\}\bigr)}{\operatorname{std}\!\bigl(\{G^{(1)}, \dots, G^{(N)}\}\bigr)}.$

Defining the per-token importance ratio as $\rho_{t,j}^{(i)}(\theta) = \frac{\pi_\theta(a_{t,j}^{(i)} \mid s_t^{(i)},\, a_{t, <j}^{(i)})}{\pi_{\theta_{\text{old}}}(a_{t,j}^{(i)} \mid s_t^{(i)},\, a_{t,<j}^{(i)})}$, the decision-making objective is formulated as:
\begin{equation}\label{eq:grpo}
\begin{aligned}
    \mathcal{J}_{\text{Decision-Making}}(\theta)
    &= \mathbb{E}_{x \sim \mathcal{D},\, \{\tau^{(i)}\} \sim \Pi_{\theta_{\text{old}}}(\cdot \mid x,\, \mathcal{M}) \times P} \Bigg[ \frac{1}{N}
    \sum_{i=1}^{N} \frac{1}{T_i} \sum_{t=0}^{T_i-1} \sum_{j=1}^{|a_t^{(i)}|} \bigg( \mathcal{L}_{t,j}^{\text{clip}}\!\bigl(\theta,\, \hat{A}^{(i)}\bigr) \\
    &\qquad - \beta \, D_{\mathrm{KL}}\!\bigl[\pi_\theta(\cdot \mid s_t^{(i)}) \,\|\, \pi_{\text{ref}}(\cdot \mid s_t^{(i)})\bigr] \bigg) \Bigg],
\end{aligned}
\end{equation}
where $|a_t^{(i)}|$ denotes the number of tokens in action $a_t^{(i)}$. Following~\citet{liu2025understanding}, we remove the length normalization term \(\frac{1}{|a_t^{(i)}|}\). The clipped surrogate function is defined as $\mathcal{L}^{\text{clip}}_{t,j}\!\left(\theta,\, \hat{A}^{(i)}\right) = \min \left( \rho_{t,j}^{(i)}(\theta)\, \hat{A}^{(i)},\; \operatorname{clip}\!\left(\rho_{t,j}^{(i)}(\theta),\, 1{-}\epsilon_{\text{clip}},\, 1{+}\epsilon_{\text{clip}}\right) \hat{A}^{(i)} \right)$, where $\epsilon_{\text{clip}}$ bounds the policy update and $\beta$ controls the KL divergence regularization toward the reference policy $\pi_{\text{ref}}$. For the \emph{in-context} self-reflection variant, the total objective is simply $\mathcal{J}_{\text{RetroAgent}}(\theta) = \mathcal{J}_{\text{Decision-Making}}(\theta)$.

\paragraph{Self-Reflection Objective (for RL-Trained Variant).}
The \emph{RL-trained} variant additionally optimizes the reflection policy~$\varphi_\theta$. For each trajectory~$\tau^{(i)}$, $\varphi_\theta$ generates a reflection sequence $z^{(i)} = (\phi^{(i)}_{(x,\tau)},\, c^{(i)},\, l^{(i)})$. The success prediction component $c^{(i)}$ is scored by $R^{\text{reflect},(i)}$ (Equation~\eqref{eq:reflect_reward}). We optimize $\varphi_\theta$ using REINFORCE~\citep{williams1992simple}:
\begin{equation}\label{eq:reinforce}
    \mathcal{J}_{\text{Self-Reflection}}(\theta) =
    \mathbb{E}_{\{z^{(i)} \sim \varphi_{\theta_{\text{old}}}(\cdot \mid \tau^{(i)}) \}}
    \!\biggl[\frac{1}{N}\sum_{i=1}^{N}   \sum_{j=1}^{|z^{(i)}|} 
    \log \varphi_\theta\!\bigl(z_j^{(i)} \mid \tau^{(i)},\,z_{<j}^{(i)} \bigr) \cdot R^{\text{reflect},(i)}\biggr],
\end{equation}
where $|z^{(i)}|$ is the token length of the reflection sequence. Gradients from this objective do not propagate through trajectory generation.
The full RL-trained objective is
\begin{equation}
    \mathcal{J}_{\text{RetroAgent}}(\theta)
    =
    \mathcal{J}_{\text{Decision-Making}}(\theta)
    +
    \lambda_{\text{reflect}} \cdot
    \mathcal{J}_{\text{Self-Reflection}}(\theta).
    \label{eq:full_objective}
\end{equation}

We summarize the \retroagent{} framework in Algorithm~\ref{alg:retroagent} and provide theoretical justifications in Appendix~\ref{app:theoretical_analysis}.

\section{Experiments}

\subsection{Experimental Setup}
\label{sec:setup}
\paragraph{Environments.}
We evaluate \textsc{RetroAgent} across four distinct agentic tasks:
$(\RN{1})$~ALFWorld~\citep{DBLP:conf/iclr/ShridharYCBTH21}, a text-based embodied environment where agents complete household tasks through navigation and object interaction. We assess both in-distribution (seen rooms) and out-of-distribution (unseen rooms) generalization.
$(\RN{2})$~Webshop~\citep{DBLP:conf/nips/Yao0YN22}, a simulated e-commerce environment requiring agents to navigate a web interface to purchase products matching user specifications.
$(\RN{3})$~Sokoban~\citep{racaniere2017imagination}, a planning-heavy puzzle task where agents must push boxes to target locations. Due to the irreversible nature of pushing actions, errors often render puzzles unsolvable. Complexity is governed by board size and box count; we train on $6 \times 6$ boards with 2 boxes, following~\citet{jiang2025metarlinducesexplorationlanguage}.
$(\RN{4})$~MineSweeper~\citep{li2024assessing}, a logic-based puzzle requiring agents to identify mine locations using numerical clues. Difficulty is controlled by board size and mine density; we train on $6 \times 6$ boards with 3 mines. We report Success Rate across all tasks, supplemented by Task Score for WebShop.

\paragraph{Compared Methods.}
We evaluate \retroagent{}gent{} against four categories of competitive baselines, reporting results averaged over three independent runs: $(\RN{1})$~\textit{Prompting-based methods}: We compare against ReAct~\citep{yao2022react} and Reflexion~\citep{shinn2023reflexion}, the latter of which incorporates an in-context self-reflection mechanism for iterative refinement. $(\RN{2})$~\textit{RL algorithms}: We include REINFORCE Leave-One-Out (RLOO)~\citep{kool2019buy, ahmadian2024back}, GRPO~\citep{shao2024deepseekmath}, and Group-in-Group Policy Optimization (GiGPO)~\citep{feng2025groupingroup}. GiGPO represents the current state-of-the-art by utilizing anchor-state grouping for fine-grained credit assignment.
$(\RN{3})$~\textit{RL-based frameworks}: This category includes memory-augmented methods such as MemRL~\citep{zhang2026memrlselfevolvingagentsruntime} (which updates a memory bank while keeping the policy frozen), EvolveR~\citep{wu2025evolverselfevolvingllmagents} (which integrates raw trajectories into optimization), and Mem0~\citep{chhikara2025mem0buildingproductionreadyai}+GRPO and SimpleMem~\citep{liu2026simplememefficientlifelongmemory}+GRPO, (which incorporate persistent memory into the training process). We also compare against SkillRL~\citep{xia2026skillrlevolvingagentsrecursive}, a hybrid approach (supervised finetuning and RL) that induces actionable skills via a teacher model to guide the student's policy optimization, and GRPO with EMPG~\citep{wang2025harnessinguncertaintyentropymodulatedpolicy}, which utilizes entropy-modulated policy gradients for long-horizon optimization.
$(\RN{4})$~\textit{A Meta-RL framework~\citep{beck2025tutorial}}: We compare against \textsc{LaMer}~\citep{jiang2025metarlinducesexplorationlanguage}, which leverages a multi-episode structure to foster active exploration and robust adaptation within a meta-learning context.

\paragraph{Implementation Details.}
We evaluate \retroagent{} on \qwentwoseven{}~\citep{qwen2025qwen25technicalreport} and Llama-3.1-8B-Instruct~\citep{grattafiori2024llama3herdmodels}. Although \retroagent{} is generally compatible with various RL algorithms, we adopt GRPO as the default and implement our framework by adapting the open-source Verl training library~\citep{sheng2024hybridflow}. We employ the task prompts from~\citet{feng2025groupingroup} to enable decision-making via the \textit{ReAct} format~\citep{yao2022react}, in which the model generates step-by-step reasoning before its corresponding action. At training time, the agent distills lessons as memories from trajectories on the training set; at test time, the agent leverages these memories for task completion on the test set. Detailed hyperparameter settings and training configurations are provided in Appendix~\ref{app:implementation_details}.


\subsection{Main Results}
\label{sec:main_results}
\begin{table*}[t]
\caption{Main results across four benchmarks, averaged over three independent runs (mean $\pm$ standard deviation). All improvements are statistically significant with $p < 0.01$. Results marked with $^{*}$ are cited from prior work~\citep{xia2026skillrlevolvingagentsrecursive, feng2025groupingroup, wang2025harnessinguncertaintyentropymodulatedpolicy}. Unless otherwise specified, all training frameworks use the GRPO algorithm. ``Success'' and ``Score'' denote Success Rate and Task Score, respectively. \textcolor{blue}{w/ Teacher Model} indicates methods that require a teacher model for skill induction.}
\label{tab:main_results}
\centering
\small
\setlength{\tabcolsep}{2pt}
\resizebox{\textwidth}{!}{
\begin{tabular}{lccccc}
\toprule
\multirow{2}{*}{\textbf{Method}} & \textbf{ALFWorld} & \multicolumn{2}{c}{\textbf{WebShop}} & \textbf{Sokoban} & \textbf{MineSweeper} \\
\cmidrule(lr){2-2} \cmidrule(lr){3-4} \cmidrule(lr){5-5} \cmidrule(lr){6-6}
& Success (\%) & Score (\%) & Success (\%) & Success (\%) & Success (\%) \\ 
\midrule
\qwentwoseven{} (Zero-Shot) & $16.9 _{\pm1.8}$ & $4.5 _{\pm1.8}$ & $0.8 _{\pm0.0}$ & $2.6 _{\pm0.5}$ & $6.5 _{\pm1.6}$ \\
\multicolumn{6}{l}{\textit{\textcolor{gray}{Prompting-based Methods}}} \\
ReAct$^{*}$~\citep{yao2022react} & 31.2 & 46.2 & 19.5 & 3.9 & 7.0 \\
Reflexion$^{*}$~\citep{shinn2023reflexion} & 42.7 & 58.1 & 28.8 & 4.3 & 7.4 \\
\hdashline
\noalign{\vskip 0.07cm} 
\multicolumn{6}{l}{\textit{\textcolor{gray}{Fine-tuning with RL}}} \\
RLOO$^{*}$~\citep{kool2019buy} & $75.5 _{\pm4.6}$ & $80.3 _{\pm3.2}$ & $65.7 _{\pm4.0}$ & $9.9_{\pm1.6}$ & $32.8 _{\pm4.8}$ \\
GRPO~\citep{shao2024deepseekmath} & $77.3 _{\pm4.3}$ & $75.5 _{\pm3.6}$ & $66.9 _{\pm1.2}$ & $11.2 _{\pm2.5}$ & $39.3 _{\pm2.7}$ \\
GiGPO$^{*}$~\citep{feng2025groupingroup} & $90.8 _{\pm1.3}$ & $84.4 _{\pm2.9}$ & $72.8 _{\pm3.2}$ & $21.9 _{\pm2.8}$ & $41.1 _{\pm1.2}$ \\
\noalign{\vskip 0.07cm} 
\multicolumn{6}{l}{\textit{\textcolor{gray}{Fine-tuning with RL-based Frameworks}}} \\
MemRL$^{*}$~\citep{zhang2026memrlselfevolvingagentsruntime} & 21.4 & 29.5 & 9.2 & $4.2_{\pm3.2}$ & $7.0_{\pm1.4}$ \\
EvolveR$^{*}$~\citep{wu2025evolverselfevolvingllmagents} & 43.8 & 42.5 & 17.6 & $6.0_{\pm3.2}$ & $11.7_{\pm3.1}$ \\

Mem0~\citep{chhikara2025mem0buildingproductionreadyai}+GRPO$^{*}$ & 54.7 & 58.1 & 37.5 & --  & --\\
SimpleMem~\citep{liu2026simplememefficientlifelongmemory}+GRPO$^{*}$ & 62.5 & 67.8 & 46.9 & -- & -- \\
\textsc{SkillRL}$^{*}$~\citep{xia2026skillrlevolvingagentsrecursive} \textcolor{blue}{w/ Teacher Model} & 89.9 & 85.2 & 72.7 & -- & -- \\
GRPO w/ EMPG$^{*}$~\citep{wang2025harnessinguncertaintyentropymodulatedpolicy} & 78.5 & 81.0 & 69.3 & $12.8_{\pm2.3}$ & $40.1_{\pm3.6}$  \\
\noalign{\vskip 0.07cm} 
\multicolumn{6}{l}{\textit{\textcolor{gray}{Fine-tuning with Meta-RL Frameworks}}} \\
\textsc{LaMer}~\citep{jiang2025metarlinducesexplorationlanguage} & $82.3 _{\pm3.6}$ & -- & $61.7 _{\pm4.7}$ & $14.3 _{\pm1.2}$ & $33.3 _{\pm1.8}$ \\
\noalign{\vskip 0.07cm}
\multicolumn{6}{l}{\textit{\textcolor{gray}{RL Training with Extrinsic and Dual Intrinsic Feedback}}} \\
\textsc{RetroAgent} (In-Context Reflection) & $91.7 _{\pm1.2}$ & $87.6 _{\pm2.1}$ & $78.9 _{\pm3.6}$ & $32.6_{\pm4.6}$ & $47.9_{\pm2.0}$ \\
\textsc{RetroAgent} (RL-Trained Reflection) & $\mathbf{95.6_{\pm2.3}}$ & $\mathbf{88.9_{\pm1.3}}$ & $\mathbf{82.3_{\pm 1.6}}$ & $\mathbf{38.3_{\pm 3.4}}$ & $\mathbf{48.2 _{\pm2.0}}$ \\
\bottomrule
\end{tabular}
}
\end{table*}

We present the main results in Table~\ref{tab:main_results} and summarize the key findings below.

\noindent \textbf{Retrospective dual intrinsic feedback improves policy evolution.}
\retroagent{} achieves state-of-the-art performance on all four benchmarks, outperforming GRPO by \(+14.4\), \(+12.0\), \(+21.4\), and \(+8.6\) percentage points on ALFWorld, WebShop, Sokoban, and MineSweeper, respectively. These gains show that augmenting extrinsic rewards with hindsight-generated intrinsic feedback enables more effective learning than success-driven RL alone. On WebShop, \retroagent{} further surpasses the strongest baselines, GiGPO and \textsc{SkillRL}, by \(+6.1\)--\(6.2\%\), confirming the benefit of combining progress-aware exploration with explicit experience reuse.

\noindent \textbf{Numerical and language feedback are complementary.}
\retroagent{} consistently outperforms memory-augmented methods, including MemRL, EvolveR, SimpleMem+GRPO, and \textsc{SkillRL}, showing that textual memory alone is insufficient without progress-aware exploration signals. It also exceeds GRPO w/ EMPG, which uses uncertainty-based intrinsic rewards, indicating that numerical signals alone miss reusable trajectory-level knowledge. These results validate the complementary roles of intrinsic numerical feedback for exploration and intrinsic language feedback for experience reuse.

\noindent \textbf{Distilled lessons are more useful than raw trajectories.}
\retroagent{} substantially outperforms EvolveR, which reuses raw trajectories as in-context demonstrations; on WebShop, \retroagent{} achieves \(78.9\)--\(82.3\%\) success versus \(17.6\%\) for EvolveR. This suggests that raw trajectories may contain noisy or task-specific details, whereas retrospective self-reflection produces concise and transferable lessons for future decisions.

\noindent \textbf{RL-trained self-reflection further improves performance.}
Jointly training the self-reflection policy further improves \retroagent{}, reaching \(95.6\%\), \(82.3\%\), and \(38.3\%\) success on ALFWorld, WebShop, and Sokoban, respectively. This indicates that reflective feedback generation can itself be optimized to better support policy evolution.

\subsection{Test-Time Adaptation and Generalisation}
\label{sec:test_time_adaptation}
\begin{figure*}[th]
\centering
\begin{subfigure}[th]{0.48\linewidth}
    \centering
    \includegraphics[width=\linewidth]{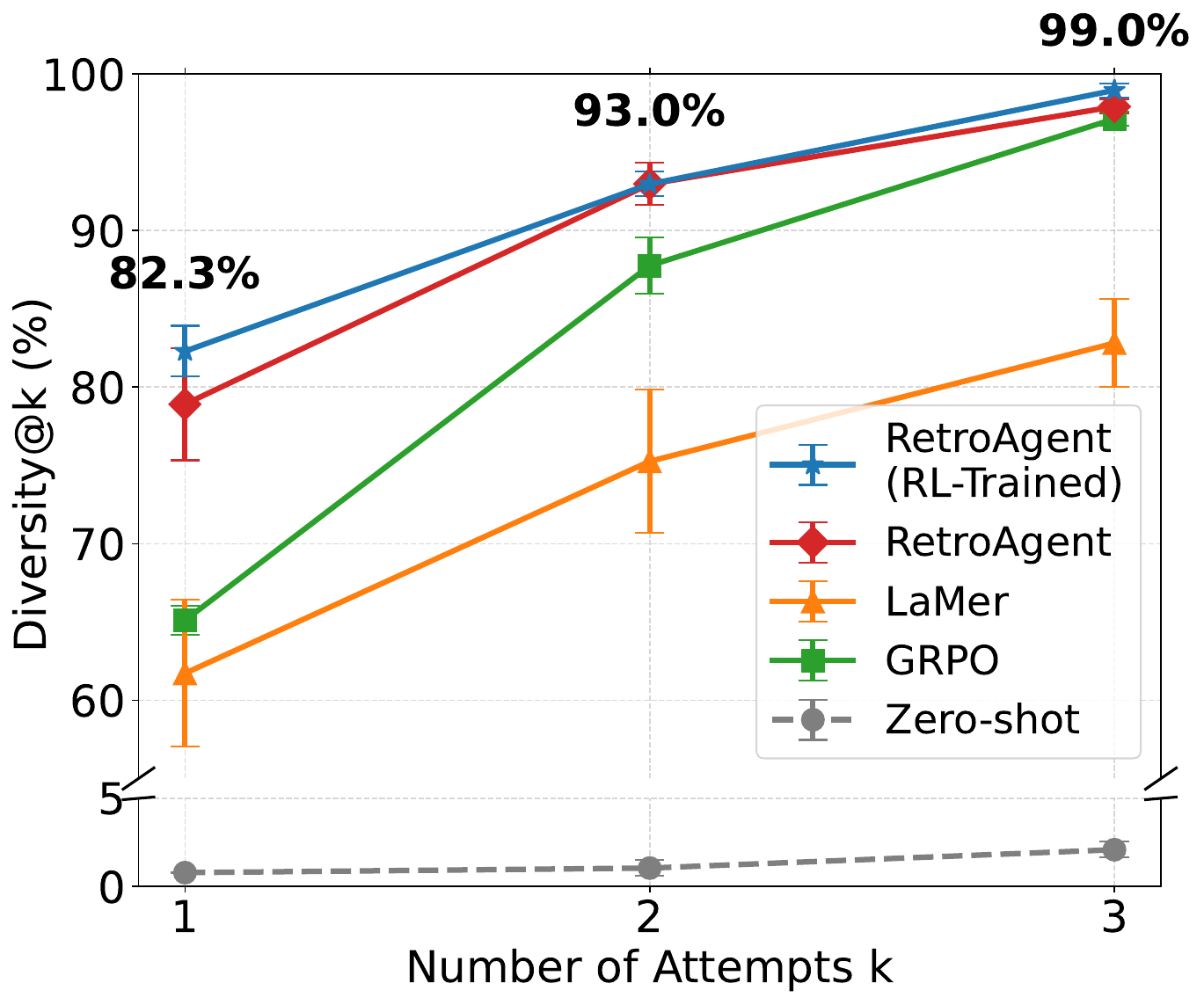}
    \caption{Test-time adaptation on WebShop (ID).}
    \label{fig:test_time_webshop}
\end{subfigure}
\hfill
\begin{subfigure}[th]{0.48\linewidth}
    \centering
    \includegraphics[width=\linewidth]{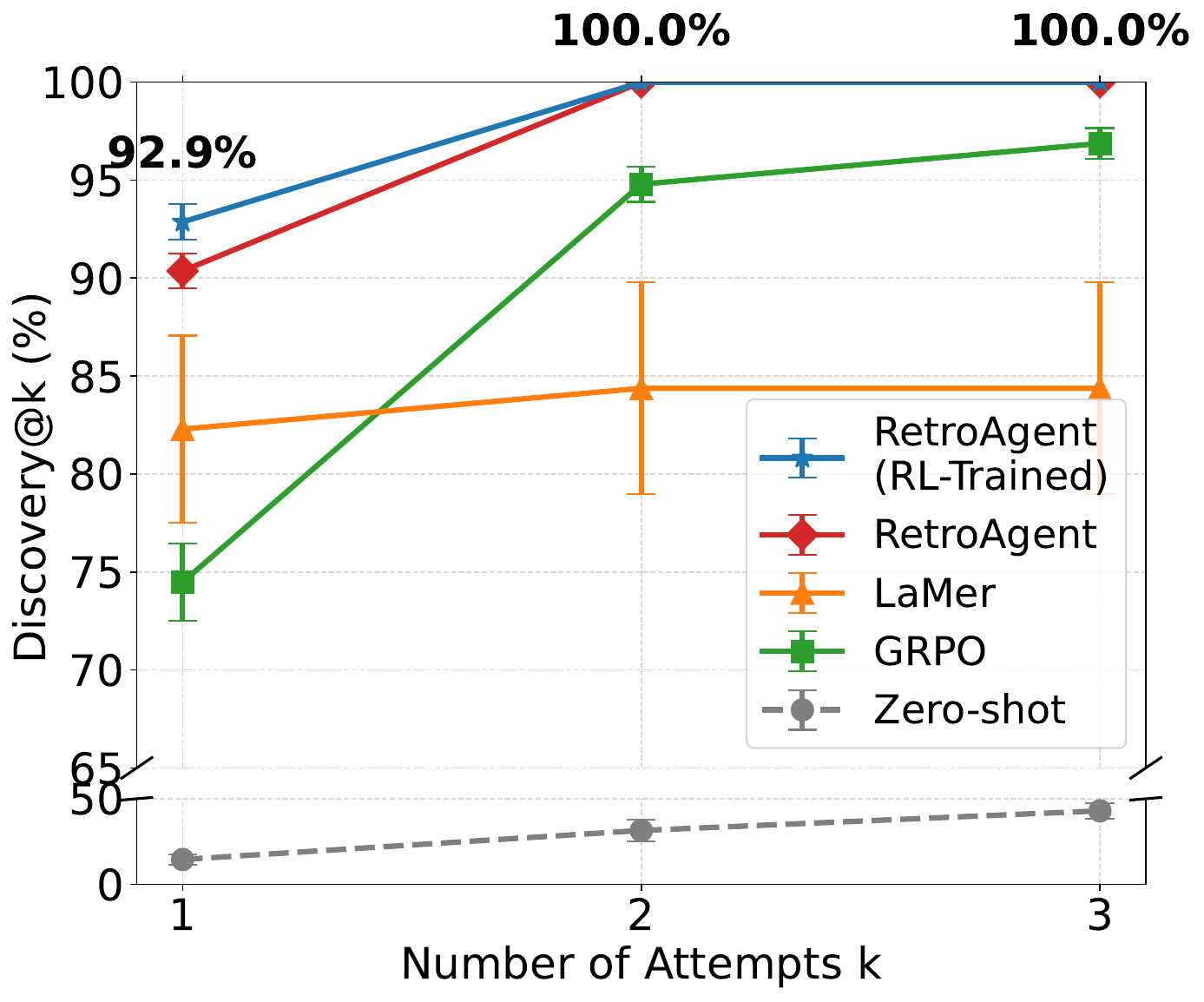}
    \caption{Test-time adaptation on ALFWorld (OOD).}
    \label{fig:test_time_alfworld}
\end{subfigure}
\caption{Test-time adaptation in an in-distribution (ID) setting on WebShop and an out-of-distribution (OOD) setting on ALFWorld.}
\label{fig:test_time_adaptation}
\end{figure*}

\paragraph{Test-Time Adaptation.}
Following~\citet{jiang2025metarlinducesexplorationlanguage}, we evaluate test-time adaptation with \(\mathrm{Discovery}@k\)~\citep{hübotter2026reinforcementlearningselfdistillation}, defined as the probability of solving a task within \(k\) attempts:
$\mathrm{Discovery}@k := P\!\left(\bigvee_{i=1}^{k} r(y_i \mid x)=1\right).$
Results are shown in Figure~\ref{fig:test_time_adaptation}.

\noindent \textbf{\retroagent{} enables rapid test-time adaptation.}
\retroagent{} reaches near-perfect discovery within three attempts in both in-distribution WebShop (\(82.3\% \to 99.0\%\)) and out-of-distribution ALFWorld (\(92.9\% \to 100.0\%\)) settings, consistently outperforming the Meta-RL baseline \textsc{LaMer}. The margin over \textsc{LaMer} further increases with \(k\) in OOD settings, suggesting that retrospective reasoning better exploits repeated attempts.

\begin{table*}[th]
\caption{Impact of memory retrieval on test-time adaptation.}
\label{tab:test_adaptation_self_reflection}
\centering
\begin{threeparttable}
\small
\setlength{\tabcolsep}{4pt}
\resizebox{\textwidth}{!}{
\begin{tabular}{lcccc}
\toprule
\multirow{2.5}{*}{\textbf{Method}}& \multirow{2.5}{*}{\makecell[c]{\textbf{Memory} \\ \textbf{Retrieval}}} & \multicolumn{3}{c}{\textbf{WebShop}} \\
\cmidrule(lr){3-5}
& & Discovery@1 (\%) & Discovery@2 (\%) & Discovery@3 (\%) \\
\midrule
GRPO (Baseline) & -- & $66.9 _{\pm1.2}$ & $87.8_{\pm 1.8}$ & $97.1_{\pm 0.5}$ \\
\textsc{RetroAgent} (In-Context) & $\times$ & $76.8_{\pm 1.6}$ & $91.9_{\pm 1.2}$ & $98.4_{\pm 0.0}$ \\
\textsc{RetroAgent} (RL-Trained) & $\times$ & $77.1_{\pm 1.6}$ & $91.7_{\pm 1.2}$ & $99.0_{\pm 0.5}$ \\
\hdashline
\noalign{\vskip 0.07cm} 
\textsc{RetroAgent} (In-Context) & $\checkmark$ & $78.9 _{\pm3.6}$ & $93.0_{\pm 1.4}$ & $97.9_{\pm 0.5}$ \\
\textsc{RetroAgent} (RL-Trained) & $\checkmark$ & $\mathbf{82.3_{\pm 1.6}}$ & $\mathbf{93.0_{\pm 0.8}}$ & $\mathbf{99.0_{\pm 0.5}}$ \\
\bottomrule
\end{tabular}
}
\end{threeparttable}
\end{table*}

\noindent \textbf{\retroagent{} internalizes intrinsic feedback during training.}
Table~\ref{tab:test_adaptation_self_reflection} ablates test-time memory retrieval. Removing retrieval causes only minor drops in \(\mathrm{Discovery}@1\) and \(\mathrm{Discovery}@2\) (\eg \(78.9\% \to 76.8\%\) for in-context self-reflection), while \(\mathrm{Discovery}@3\) remains unchanged. This suggests that dual intrinsic feedback is largely absorbed into the policy parameters, rather than relying on memory access at inference time.

\begin{figure*}[t]
\centering
\begin{subfigure}[t]{0.48\linewidth}
    \centering
    \includegraphics[width=\linewidth]{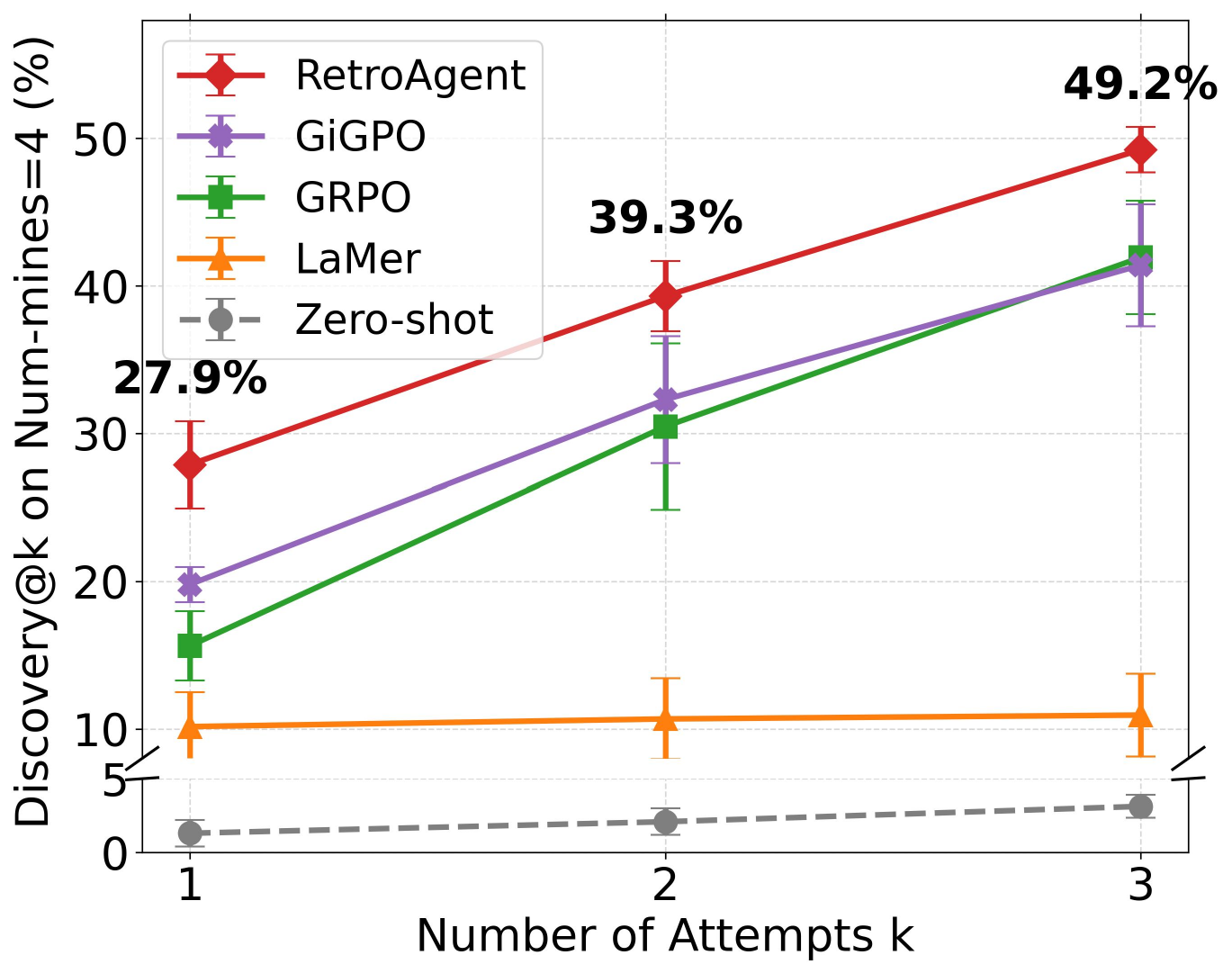}
    \caption{Test-time adaptation using $\text{Discovery}@k$ on harder instances (trained with 3~mines, evaluated with 4~mines).}
    \label{fig:multi_attempt_adaptation_on_difficult_task}
\end{subfigure}
\hfill
\begin{subfigure}[t]{0.48\linewidth}
    \centering
    \includegraphics[width=\linewidth]{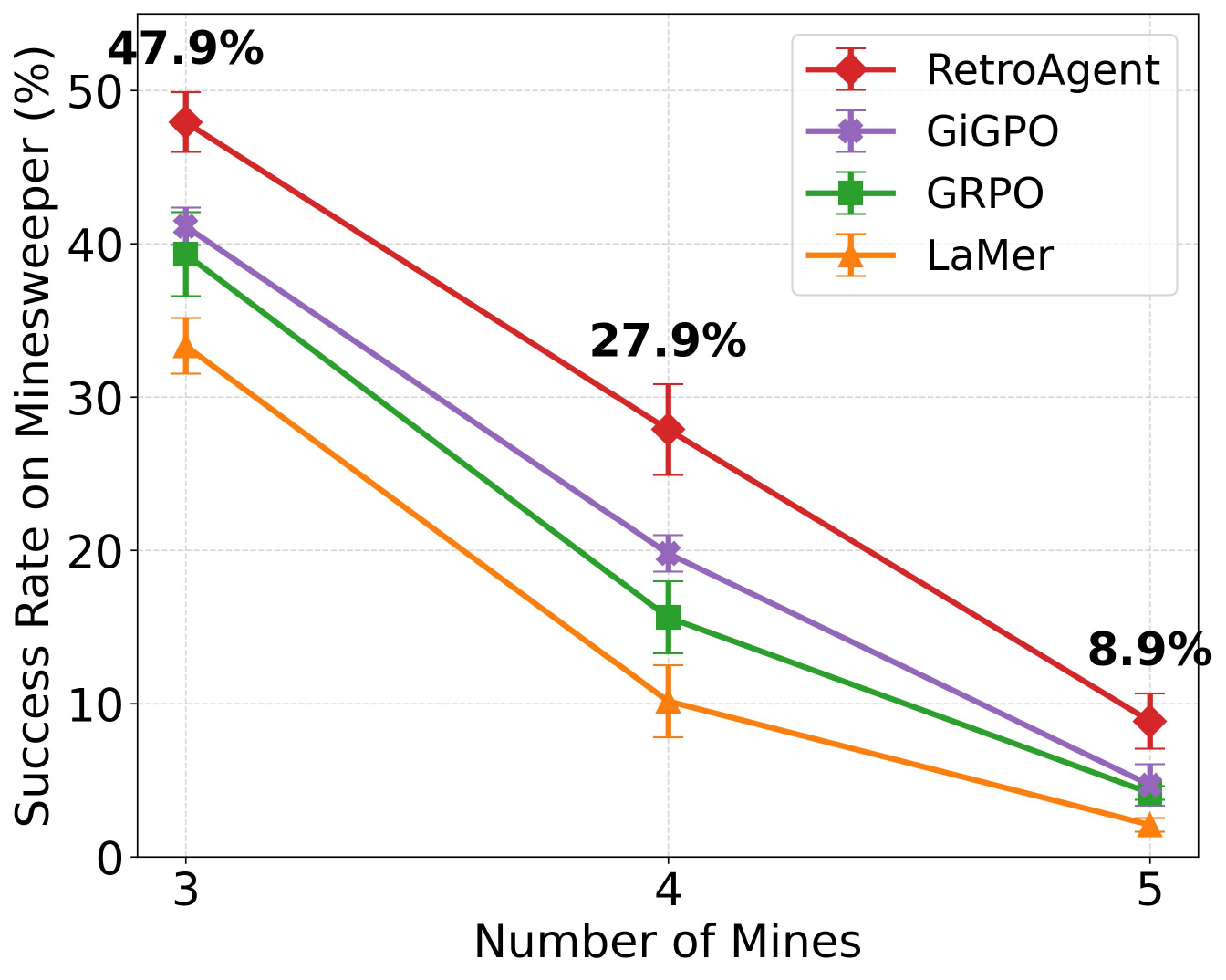}
    \caption{Generalization across increasing difficulty levels (evaluated with the number of mines ranging from 3 to 5).}
    \label{fig:adaptation_on_increasing_difficulty}
\end{subfigure}
\caption{Robustness to challenging tasks on MineSweeper.}
\label{fig:generalization_on_hard_tasks}
\end{figure*}

\paragraph{Robustness to Challenging Tasks.}
Following~\citet{jiang2025metarlinducesexplorationlanguage}, we assess robustness on MineSweeper under two harder-than-training settings (Figure~\ref{fig:generalization_on_hard_tasks}): \((\RN{1})\) increasing the mine count from \(3\) to \(4\), and \((\RN{2})\) varying the mine count from \(3\) to \(5\).

\noindent \textbf{\retroagent{} remains robust under increased difficulty.}
\retroagent{} consistently outperforms all baselines in both settings, showing rapid adaptation to harder instances (Figure~\ref{fig:multi_attempt_adaptation_on_difficult_task}) and graceful degradation as task difficulty increases (Figure~\ref{fig:adaptation_on_increasing_difficulty}).

\subsection{Analysis of In-Context Self-Reflection}
\label{sec:analysis_self_reflection}
\begin{figure*}[th]
\centering
\begin{subfigure}[t]{0.48\linewidth}
\centering
\includegraphics[width=\linewidth]{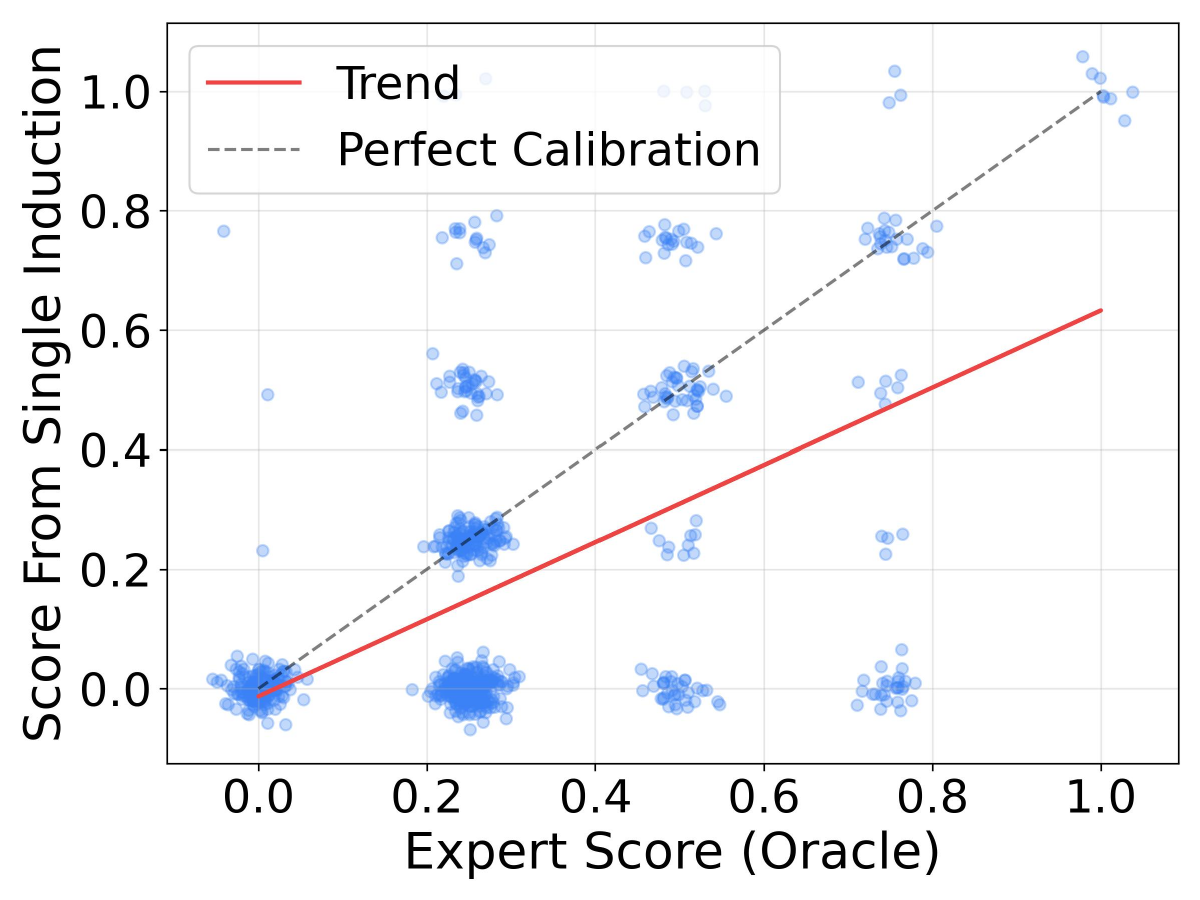}
\caption{Completion scores via single induction.}
\label{fig:reward_calibration_single_induction}
\end{subfigure}%
\hfill
\begin{subfigure}[t]{0.48\linewidth}
\centering
\includegraphics[width=\linewidth]{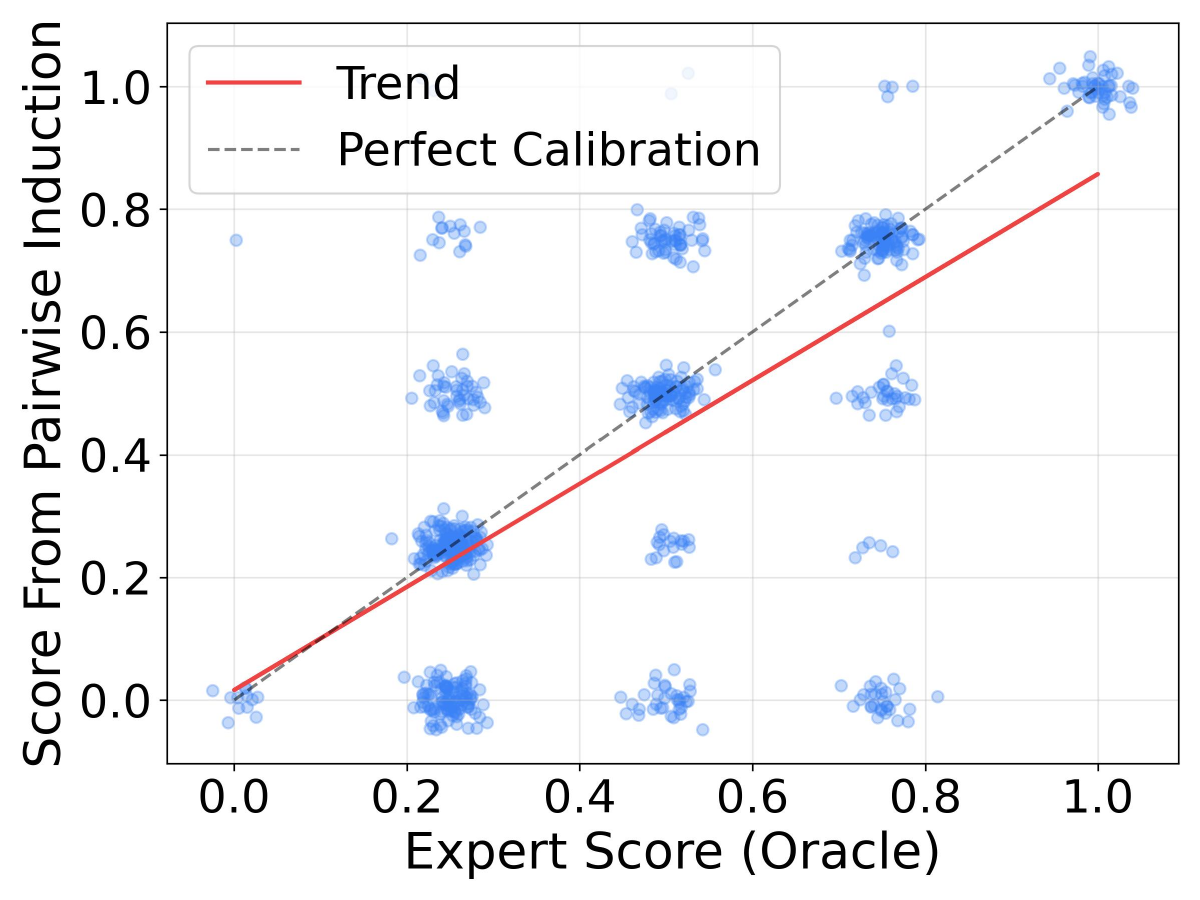}
\caption{Completion scores via pairwise induction.}
\label{fig:reward_calibration_pairwise_induction}
\end{subfigure}

\caption{Accuracy of subtask completion scores generated via single-trajectory (single) \textit{vs.}\ pairwise-trajectory  (pairwise) induction for \qwentwoseven{} on WebShop.}
\label{fig:comparison_intrinsic_reward_calibration}
\end{figure*}




\begin{table*}[th]
\caption{Quality of lessons (\ie memories) generated via single-trajectory \textit{vs.}\ pairwise-trajectory induction, as assessed by GPT-4o.}
\label{tab:reflection_quality_comparison}
\centering
\begin{threeparttable}
\small
\setlength\tabcolsep{4pt} 
\resizebox{\textwidth}{!}{
\begin{tabular}{lcccccccc}
\toprule
\multirow{3}{*}{\textbf{Method}} & 
\multicolumn{2}{c}{\textbf{Hallucination Rate (\%)}} & 
\multicolumn{6}{c}{\textbf{Estimated Utility Score (\%)}} \\
\cmidrule(lr){2-3} \cmidrule(lr){4-9} 
& \multirow{3}{*}{\makecell[c]{Failure \\ ($\downarrow$)}} & \multirow{3}{*}{\makecell[c]{Success \\ ($\downarrow$)}} & \multicolumn{3}{c}{Failure} & \multicolumn{3}{c}{Success} \\ \cmidrule(lr){4-6} \cmidrule(lr){7-9} 
&&& Low ($\downarrow$) & Med ($-$) & High ($\uparrow$) & Low ($\downarrow$) & Med ($-$) & High ($\uparrow$) \\ 
\midrule
Single Induction   & 8.8 & 15.1 & 8.8 & 78.2 & 12.9 & 12.2 & 75.6 & 12.2 \\ 
Pairwise Induction & \textbf{3.8} & \textbf{11.9} & \textbf{3.1} & 76.7 & \textbf{20.1} & \textbf{6.2} & 76.2 & \textbf{17.6}   \\
\bottomrule  
\end{tabular}
}
\end{threeparttable}
\end{table*}


\begin{table*}[th]
\caption{Effect of induction method and augmentation ratio on GRPO performance.
Augmentation Ratio denotes the fraction of sampled trajectories per prompt that receive
memory-augmented generation; the remaining trajectories are sampled without augmentation.}
\label{tab:comparison_single_pairwise_memory}
\centering
\begin{threeparttable}
\small
\setlength\tabcolsep{4pt}
\begin{tabular}{lccc}
\toprule
\textbf{Method} & \multirow{2}{*}{\makecell[c]{\textbf{Augmentation}\\\textbf{Ratio}}}  &
\multicolumn{2}{c}{\textbf{WebShop}} \\
\cmidrule(lr){3-4}
& & Task Score (\%) & Success Rate (\%) \\
\midrule
GRPO                & ---                  & 75.5$_{\pm 3.6}$ & 66.9$_{\pm 1.2}$ \\
\; + Single Induction  & 100\% (Full Group)   & 81.3$_{\pm 2.6}$ & 70.3$_{\pm 2.1}$ \\
\; + Pairwise Induction & 100\% (Full Group)   & 82.3$_{\pm 1.3}$ & 72.9$_{\pm 1.6}$ \\
\; + Pairwise Induction & \phantom{0}50\% (Half Group) & $\mathbf{82.4}_{\pm 2.9}$ & $\mathbf{75.3}_{\pm 4.3}$ \\
\bottomrule
\end{tabular}
\end{threeparttable}
\end{table*}

The effectiveness of \retroagent{} depends on the quality of self-reflection, which determines both the accuracy of intrinsic numerical feedback and the utility of intrinsic language feedback. We compare single-trajectory and pairwise-trajectory induction under the in-context self-reflection mechanism.

For numerical feedback, we use GPT-4o subtask completion scores~\citep{openai2024gpt4ocard} as oracle estimates and measure correlation with each induction method. For language feedback, we use GPT-4o to evaluate lesson quality. We further assess downstream impact by augmenting GRPO with lessons from each method, retrieved by semantic relevance to the task prompt (Table~\ref{tab:comparison_single_pairwise_memory}). Details are provided in Appendix~\ref{app:eval_pairwise_induction_single_induction}.

\noindent \textbf{Pairwise induction yields more accurate and useful self-reflection.}
Pairwise-trajectory induction produces intrinsic numerical feedback that better matches oracle subtask completion scores, as shown by its closer alignment with the dashed oracle curve in Figure~\ref{fig:comparison_intrinsic_reward_calibration}. It also generates higher-quality language feedback, with lower hallucination rates and higher estimated utility (Table~\ref{tab:reflection_quality_comparison}). These improvements translate into better downstream optimization: GRPO augmented with pairwise-induction lessons outperforms the single-induction variant (\(72.9\%\) vs.\ \(70.3\%\); Table~\ref{tab:comparison_single_pairwise_memory}).

\noindent \textbf{Preserving unguided exploration prevents premature convergence.}
Table~\ref{tab:comparison_single_pairwise_memory} shows that half-group memory augmentation outperforms full-group augmentation (\(75.3\%\) vs.\ \(72.9\%\) success). This indicates that applying memory-guided generation to all sampled trajectories reduces diversity and can promote premature convergence to suboptimal strategies.

\subsection{Effect of Intrinsic Numerical Feedback}
\label{sec:impact_intrinsic_rewards}
\begin{table*}[t]
\caption{Impact of discounted returns and intrinsic reward shaping on GRPO. Capability-evolution rewards denote the intrinsic numerical feedback described in Section~\ref{sec:intrinsic_numerical}.}
\label{tab:impact_intrinsic_rewards}
\centering
\begin{threeparttable}
\small
\setlength\tabcolsep{2pt}
\resizebox{\textwidth}{!}{
\begin{tabular}{lcccc}
\toprule
\multirow{2}{*}{\textbf{Method}} &
\multirow{2}{*}{\makecell[c]{\textbf{Discounted}\\\textbf{Returns}}} &
\multirow{2}{*}{\makecell[c]{\textbf{Reward}\\\textbf{Type}}} &
\multicolumn{2}{c}{\textbf{WebShop}} \\
\cmidrule(lr){4-5}
& & & Task Score (\%) & Success Rate (\%) \\
\midrule
GRPO (Baseline) & -- & Extrinsic & $75.5_{\pm 3.6}$ & $66.9_{\pm 1.2}$ \\
GRPO & \checkmark & Extrinsic & $84.2_{\pm 0.2}$ & $74.7_{\pm 2.7}$ \\
\quad + Progress-Guided Rewards & \checkmark & Extrinsic & $84.2_{\pm 1.7}$ & $75.0_{\pm 3.1}$ \\
\quad + Capability-Evolution Rewards & \checkmark & Extrinsic \& Intrinsic & $\mathbf{88.2}_{\pm 2.1}$ & $\mathbf{79.7}_{\pm 3.1}$ \\
\bottomrule
\end{tabular}
}
\end{threeparttable}
\end{table*}


\begin{figure*}[th]
\centering
\begin{subfigure}[t]{0.48\linewidth}
    \centering
    \includegraphics[width=\linewidth]{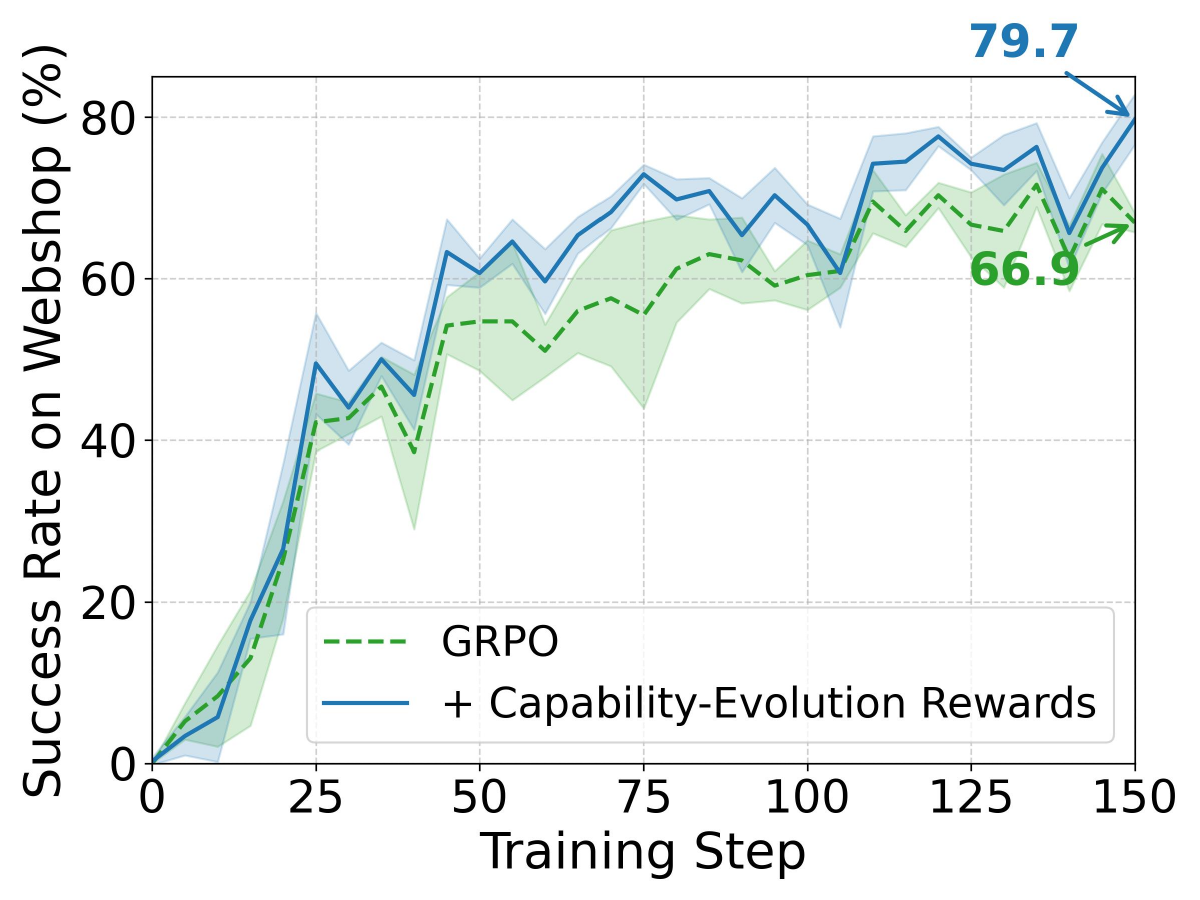}
    \caption{Impact of capability-evolution rewards.}
    \label{fig:intrinsic_rewards}
\end{subfigure}
\hfill
\begin{subfigure}[t]{0.48\linewidth}
    \centering
    \includegraphics[width=\linewidth]{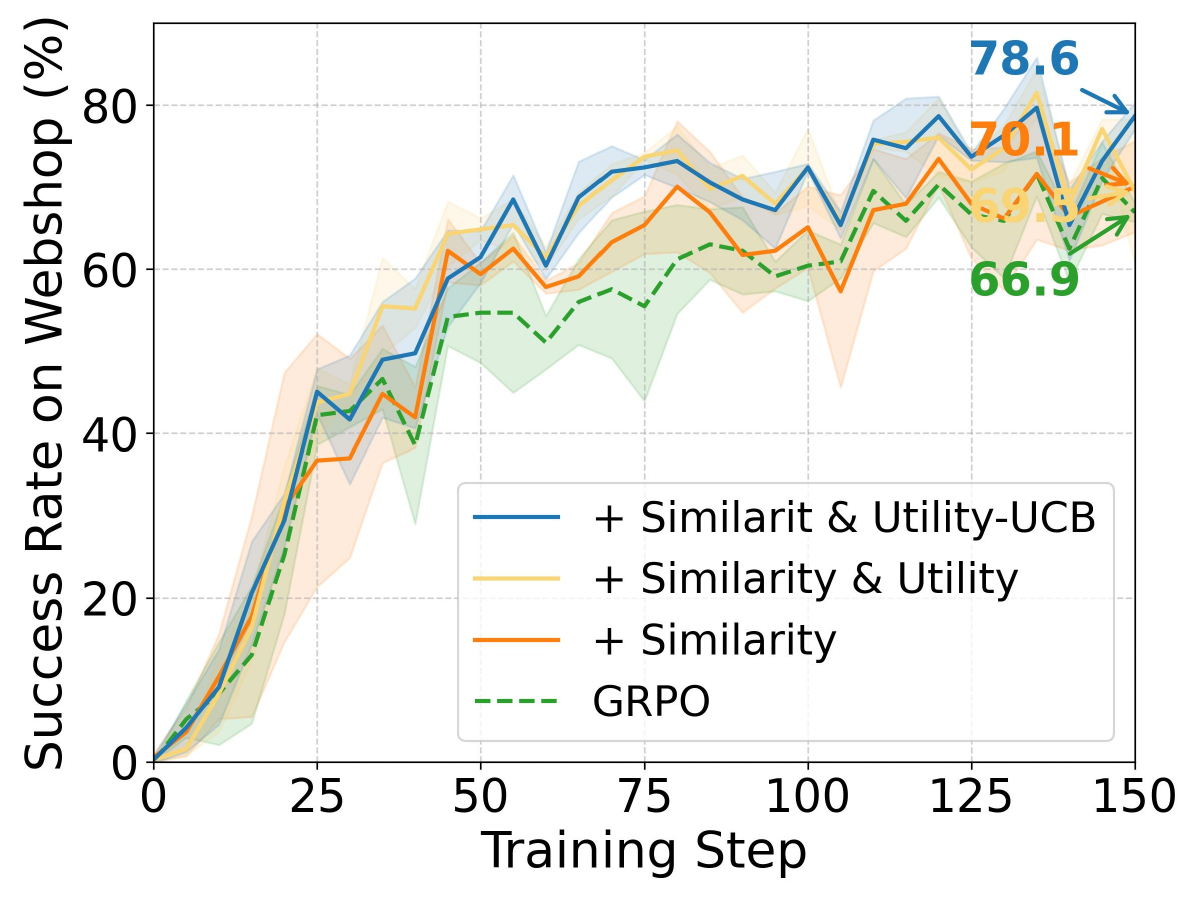}
    \caption{Impact of memory-retrieval strategies.}
    \label{fig:memory_comparison}
\end{subfigure}
\caption{Valid-set performance dynamics on WebShop when augmenting GRPO with intrinsic numerical feedback (a) or intrinsic language feedback (b).}
\label{fig:training_dynamics_individual_feedback}
\end{figure*}

We study the effects of discounted returns and intrinsic reward shaping on GRPO. Table~\ref{tab:impact_intrinsic_rewards} reports evaluation results, and Figure~\ref{fig:intrinsic_rewards} shows validation dynamics. As an additional baseline, we consider \emph{progress-guided rewards}, which replace the potential score \(\phi_{(x,\tau)}\) in Equation~\ref{eq:intrinsic_reward} with the binary environment success score \(I^{\mathrm{Ext}}\), thereby grounding the rectified gain in extrinsic outcomes rather than self-assessed progress.

\noindent \textbf{Intrinsic numerical feedback improves policy optimization.}
As shown in Table~\ref{tab:impact_intrinsic_rewards}, using discounted returns to compute trajectory-level advantages improves GRPO by \(+8.7\) percentage points in task score and \(+7.8\) in success rate on WebShop. Adding capability-evolution rewards further increases performance to \(88.2\%\) task score and \(79.7\%\) success rate, with consistent gains from step~25 onward (Figure~\ref{fig:intrinsic_rewards}). Capability-evolution rewards also outperform progress-guided rewards, indicating that self-reflection-based potential scores provide richer shaping signals than binary extrinsic outcomes alone.

\subsection{Effect of Intrinsic Language Feedback}
\label{sec:impact_intrinsic_language_feedback}
\begin{table*}[t]
\caption{Impact of intrinsic language feedback on GRPO using different memory-retrieval strategies. SimUtil-UCB denotes the our proposed memory retrieval strategy (Section~\ref{sec:intrinsic_language}).}
\label{tab:comparison_retrieved_memory}
\centering
\begin{threeparttable}
\small
\setlength\tabcolsep{2pt} 
\begin{tabular}{lcccc}
\toprule
    \multirow{2}{*}{\textbf{Method}}  & \multirow{2}{*}{\makecell[c]{\textbf{Discounted}\\\textbf{Returns}}}  & \multirow{2}{*}{\makecell[c]{\textbf{Retrieval}\\\textbf{Strategy}}}  & 
\multicolumn{2}{c}{\textbf{WebShop}}  \\
\cmidrule(lr){4-5} 
& & & Task Score (\%) & Success Rate (\%) \\ 
\midrule
GRPO (Baseline) & -- & -- & $75.5_{\pm 3.6}$ & $66.9_{\pm 1.2}$ \\
GRPO & \checkmark & -- & $84.2_{\pm 0.2}$ & $74.7_{\pm 2.7}$ \\
\quad + Memory Retrieval & \checkmark  & Similarity & $79.1_{\pm 7.1}$ & $70.1_{\pm 5.5}$ \\
\quad + Memory Retrieval & \checkmark  & Similarity \& Utility & $78.4_{\pm 11.4}$ & $69.5_{\pm 8.7}$  \\
\quad + Memory Retrieval & \checkmark  & SimUtil-UCB & $\mathbf{86.4 \pm 1.8}$ & $\mathbf{78.6 \pm 1.6}$ \\
\bottomrule  
\end{tabular}
\end{threeparttable}
\end{table*}

Having shown in Section~\ref{sec:analysis_self_reflection} that intrinsic language feedback improves policy optimization, we evaluate SimUtil-UCB against two ablations: similarity-based retrieval (Criterion~1 only) and similarity--utility retrieval (Criteria~1--2, without the exploration bonus). Table~\ref{tab:comparison_retrieved_memory} reports final performance, and Figure~\ref{fig:memory_comparison} shows validation dynamics. All experiments use half-group memory augmentation.

\noindent \textbf{Balancing relevance, utility, and exploration is critical.}
As shown in Table~\ref{tab:comparison_retrieved_memory}, discounted returns improve GRPO, but adding memories retrieved by similarity-only or similarity--utility criteria degrades performance. This contrasts with Table~\ref{tab:comparison_single_pairwise_memory}, where similarity-based retrieval improves standard GRPO without discounted returns, suggesting that discounted returns may amplify low-quality memory-guided behaviors. In contrast, SimUtil-UCB consistently improves performance, reaching an \(86.4\%\) task score and a \(78.6\%\) success rate (Table~\ref{tab:comparison_retrieved_memory}; Figure~\ref{fig:memory_comparison}). By adding the UCB exploration bonus (Equation~\ref{eq:ucb}), SimUtil-UCB avoids over-reliance on semantically similar or historically high-utility memories and promotes broader use of under-explored lessons.

Figure~\ref{fig:reflection_counts} further analyzes retrieval frequency. Each memory instance is initialized with count \(1\), which increments after retrieval. SimUtil-UCB distributes access more evenly, with most instances retrieved about \(5\) times (Figure~\ref{fig:similarity_utility_ucb_reflection_counts}), whereas similarity-based retrieval concentrates on a small subset, with many instances exceeding \(15\) retrievals (Figure~\ref{fig:similarity_reflection_counts}). This confirms that the UCB bonus diversifies memory usage and contributes to stronger final performance.
\begin{figure*}[th]
    \centering
    
    \begin{subfigure}[th]{0.32\linewidth}
        \centering
        \includegraphics[width=\linewidth]{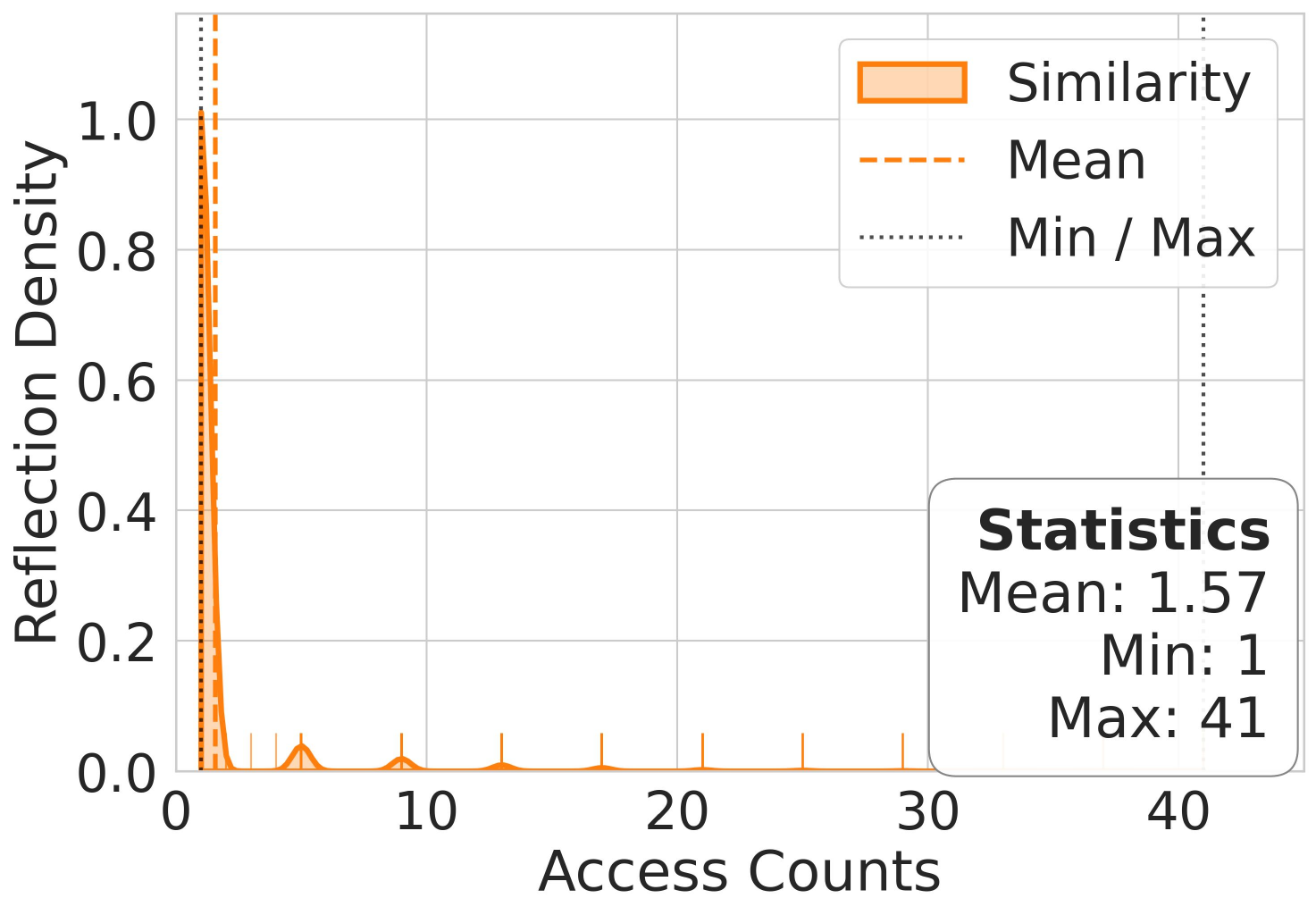}
        \caption{Similarity-based retrieval.}
        \label{fig:similarity_reflection_counts}
    \end{subfigure}
    \hfill
    \begin{subfigure}[th]{0.32\linewidth}
        \centering
        \includegraphics[width=\linewidth]{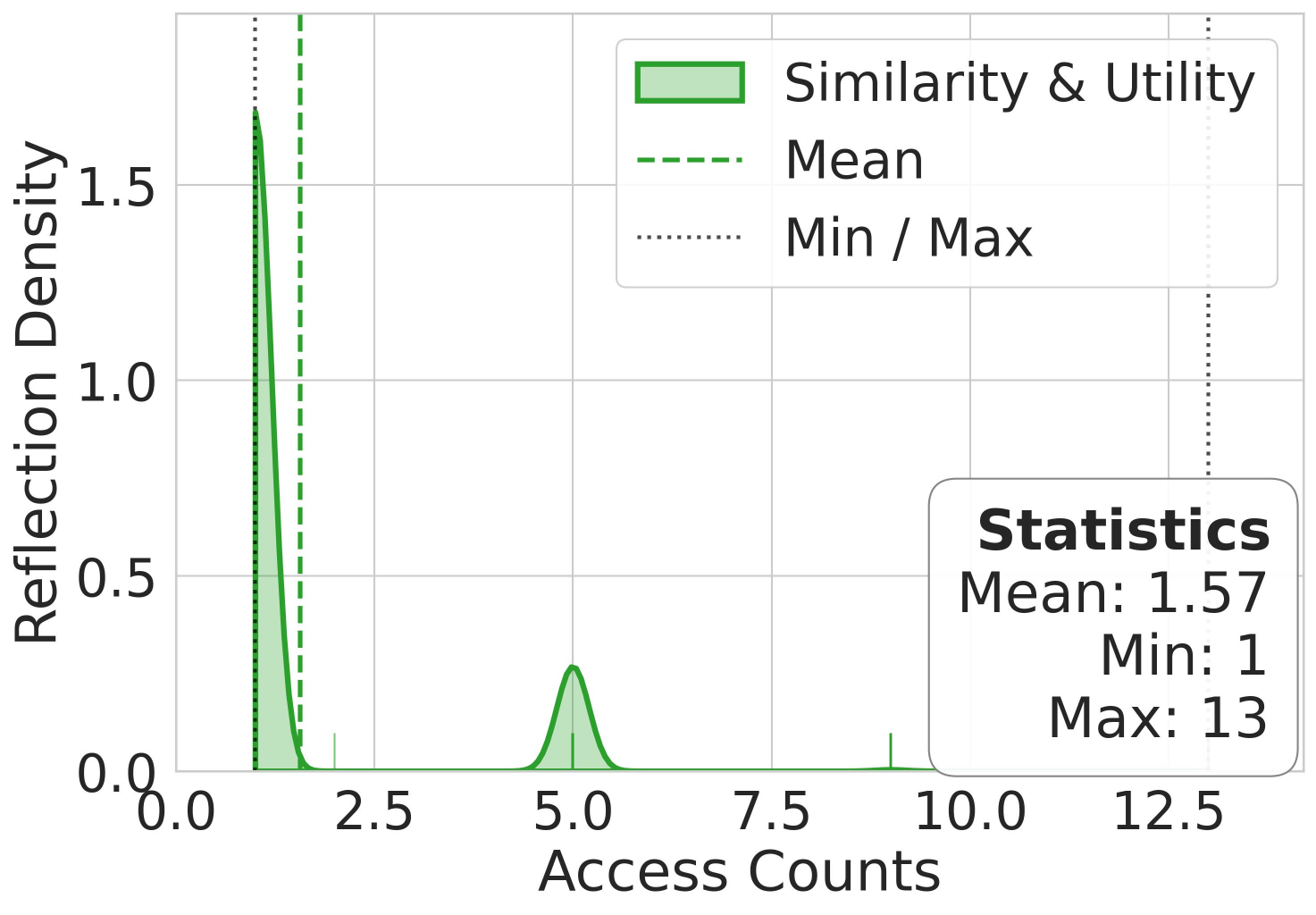}
        \caption{Similarity \& utility.}
        \label{fig:similarity_utility_reflection_counts}
    \end{subfigure}
    \hfill
    \begin{subfigure}[th]{0.32\linewidth}
        \centering
        \includegraphics[width=\linewidth]{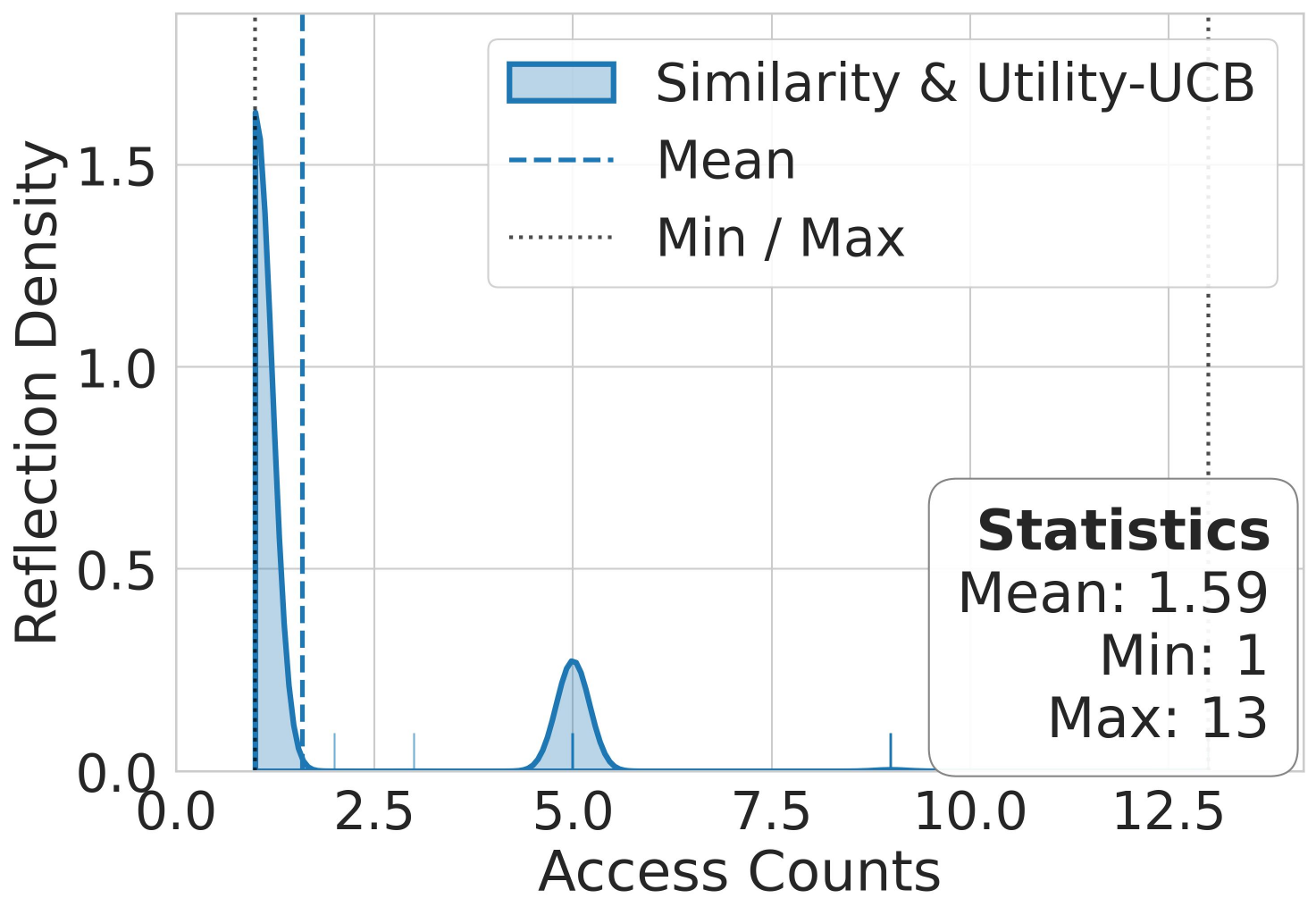}
        \caption{SimUtil-UCB.}
        \label{fig:similarity_utility_ucb_reflection_counts}
    \end{subfigure}
    
    \caption{Distribution of accumulated memory usage counts across retrieval strategies on WebShop, estimated via kernel density estimation (KDE)~\citep{chen2017tutorial}. Each panel shows how frequently stored memory instances are accessed under a given strategy.}
    \label{fig:reflection_counts}
\end{figure*}

\subsection{Effect of Combining Dual Intrinsic Feedback}
\label{sec:analysis_dual_intrinsic_feedback}
\begin{table*}[th]
\caption{Individual and combined effects of intrinsic numerical and language feedback under different self-reflection mechanisms on WebShop. Rows above the dashed line ablate each feedback type in isolation; rows below combine both (Dual).}
\label{tab:impact_combining_dual_intrinsic_feedback}
\centering
\begin{threeparttable}
\small
\setlength{\tabcolsep}{2pt}
\resizebox{\textwidth}{!}{
\begin{tabular}{lcccc}
\toprule
\multirow{2.5}{*}{\textbf{Method}} & \multirow{2.5}{*}{\makecell[c]{\textbf{Intrinsic} \\ \textbf{Feedback}}} & \multirow{2.5}{*}{\makecell[c]{\textbf{Self-Reflection} \\ \textbf{Mechanism}}} & \multicolumn{2}{c}{\textbf{WebShop}} \\
\cmidrule(lr){4-5}
& & & Task Score (\%) & Success Rate (\%) \\
\midrule
GRPO (Baseline) & -- & -- & $75.5 _{\pm3.6}$ & $66.9 _{\pm1.2}$ \\
\quad + Capability-Evolution Rewards & Numerical & Pairwise Induction & $88.2 _{\pm2.1}$ & $79.7 _{\pm3.1}$ \\
\quad + SimUtil-UCB Memory Retrieval & Language & Pairwise Induction & $86.4 _{\pm1.8}$ & $78.6 _{\pm1.6}$ \\
\hdashline
\noalign{\vskip 0.07cm} 
\textsc{RetroAgent} (In-Context) & Dual & Pairwise Induction & $87.6_{\pm 2.1}$ & $78.9_{\pm 3.6}$ \\
\textsc{RetroAgent} (RL-Trained) & Dual & Pairwise Induction & $87.0_{\pm 1.4}$ & $77.1_{\pm 1.0}$ \\
\textsc{RetroAgent} (RL-Trained) & Dual & Single Induction & $\mathbf{88.9_{\pm 1.3}}$ & $\mathbf{82.3_{\pm 1.6}}$ \\

\bottomrule
\end{tabular}
}
\end{threeparttable}
\end{table*}

\begin{figure*}[t]
\centering
\begin{subfigure}[t]{0.48\linewidth}
    \centering
    \includegraphics[width=\linewidth]{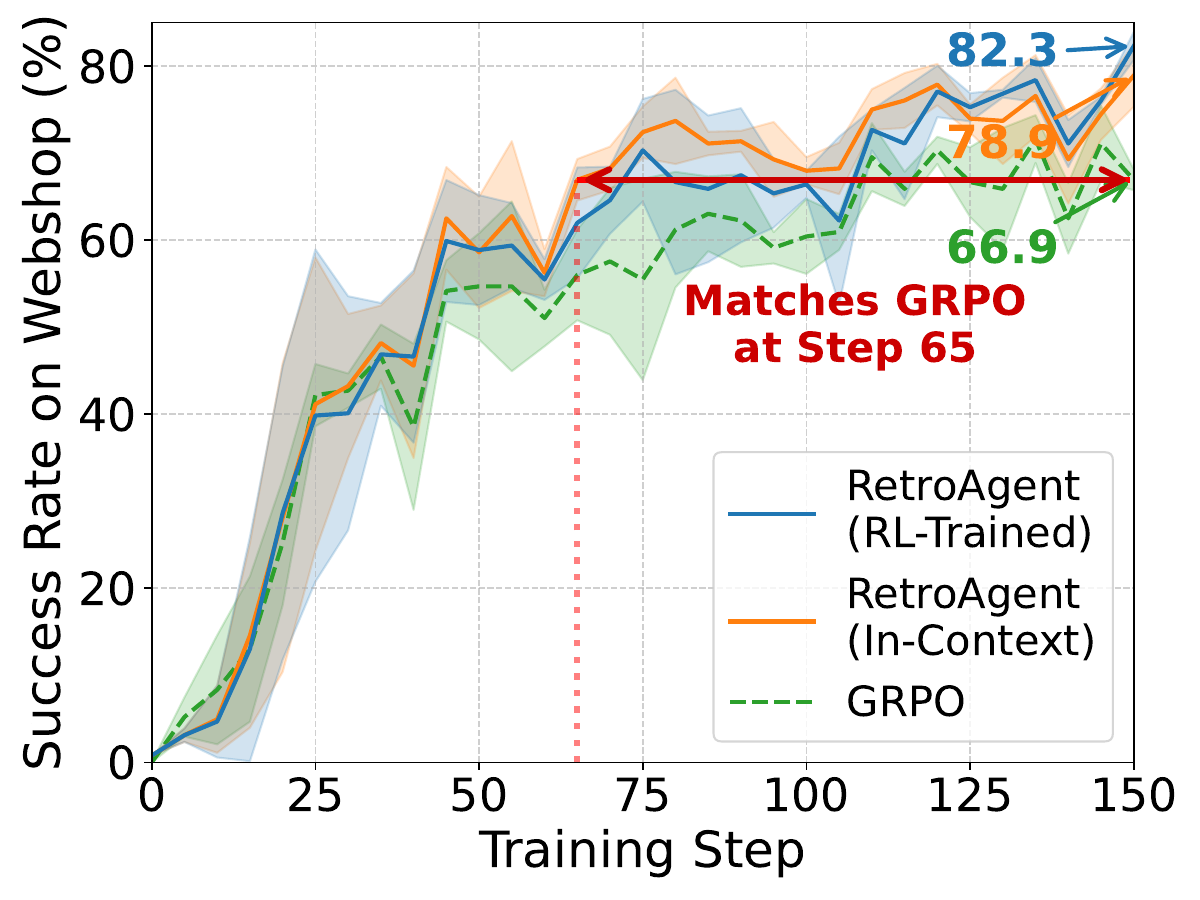}
    \caption{Valid-set performance over the course of training.}
    \label{fig:training_dynamics_reflection}
\end{subfigure}
\hfill
\begin{subfigure}[t]{0.48\linewidth}
    \centering
    \includegraphics[width=\linewidth]{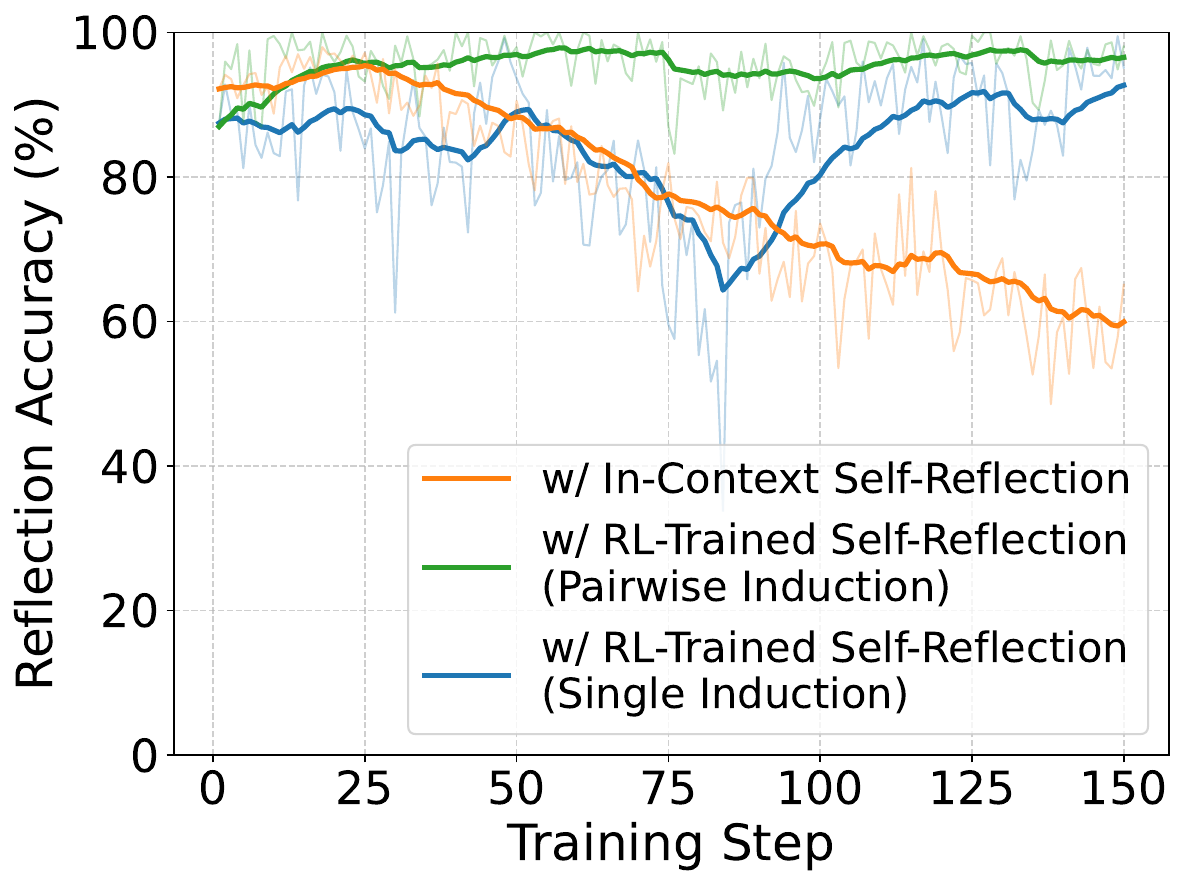}
    \caption{Reflection accuracy over the course of training, smoothed with exponential moving average (EMA)~\citep{klinker2011exponential}.}
    \label{fig:comparison_reflection_accuracy}
\end{subfigure}
\caption{In-context \textit{vs.} RL-trained self-reflection mechanisms in \textsc{RetroAgent} on WebShop.}
\label{fig:comparison_reflection_mechanisms}
\end{figure*}
We present results for combining intrinsic numerical and language feedback in
Table~\ref{tab:impact_combining_dual_intrinsic_feedback} and compare in-context versus
RL-trained reflection mechanisms in Figure~\ref{fig:comparison_reflection_mechanisms}.

\noindent \textbf{Combining dual intrinsic feedback facilitates superior agentic reasoning.} As shown in Table~\ref{tab:impact_combining_dual_intrinsic_feedback}, \textsc{RetroAgent} achieves notable performance gains (\eg $\approx +3\%$ success rate) by integrating dual intrinsic feedback compared to using either capability-evolution rewards or SimUtil-UCB memory retrieval in isolation. The
in-context variant, however, slightly underperforms GRPO with capability-evolution rewards
only, suggesting that simultaneous exploration signals from both feedback channels might
interfere with each other during action selection.

\noindent \textbf{Joint optimization preserves reflection quality and improves RL training.} In Figure~\ref{fig:comparison_reflection_accuracy}, the reflection accuracy of the
in-context variant declines steadily as the policy improves (orange curve), even though
extrinsic success signals remain available. In contrast, the RL-trained self-reflection
mechanism maintains accuracy throughout training (blue curve). Although accuracy dips
slightly before step~75---likely because decision-making policy improvement temporarily outpaces reflection
adaptation---it recovers and increases steadily thereafter. The initial gap relative to the
in-context baseline arises because the RL-trained variant uses single induction, which is
less informative than pairwise induction (consistent with
Section~\ref{sec:analysis_self_reflection}).

We validate the choice of single induction by comparing it against a pairwise variant that
conditions on a reference trajectory:
$z = f_{\text{reflect}}(\tau_{\text{ref}}, \tau)$. Although including
$\tau_{\text{ref}}$ yields the highest reflection accuracy (green curve,
Figure~\ref{fig:comparison_reflection_accuracy}), it does not improve task performance
(Table~\ref{tab:impact_combining_dual_intrinsic_feedback}). This discrepancy suggests that
contrastive comparison enables the reflector to infer outcomes from relative differences
between trajectories rather than developing robust standalone evaluation capability.

\subsection{Effect of Intrinsic Feedback on Exploration}
\label{sec:impact_exploration}
Both intrinsic numerical and language feedback are designed to improve RL by guiding exploration: capability-evolution rewards encourage promising action sequences, while retrieved lessons discourage repeated failures and reinforce effective strategies. We test this hypothesis on the WebShop test set under three settings: \((\RN{1})\) GRPO with capability-evolution rewards only, \((\RN{2})\) GRPO with SimUtil-UCB memory retrieval only, and \((\RN{3})\) \retroagent{} with in-context or RL-trained self-reflection. We quantify trajectory diversity using the Vendi Score~\citep{friedman2023vendi} over both successful and failed trajectories.

\begin{table*}[th]
\caption{Impact of intrinsic feedback on trajectory diversity on WebShop, measured by the
Vendi Score~\citep{friedman2023vendi}. A score of 0.00 for \qwentwoseven{} indicates that fewer than two successful trajectories were generated, precluding diversity measurement.}
\label{tab:exploration_intrinsic_feedback}
\centering
\begin{threeparttable}
\small
\resizebox{\textwidth}{!}{
\setlength{\tabcolsep}{8pt}
\begin{tabular}{lccc}
    \toprule
    \multirow{2}{*}{\textbf{Method}}
        & \multirow{2}{*}{\makecell[c]{\textbf{Intrinsic} \\ \textbf{Feedback}}}
        & \multicolumn{2}{c}{\textbf{Vendi Score ($\uparrow$)}} \\
    \cmidrule(lr){3-4}
    & & Successful Traj. & Failed Traj. \\
    \midrule
    \qwentwoseven{} & -- & 0.00\tnote{*} & 1.89 \\
    GRPO (Baseline) & -- & 1.85 & 1.71 \\
    \quad + Capability-Evolution Rewards & Numerical & 2.04 & 1.82 \\
    \quad + SimUtil-UCB Memory Retrieval & Language & 2.13 & 1.97 \\
    \noalign{\vskip 0.05cm}
    \hdashline
    \noalign{\vskip 0.07cm}
    \textsc{RetroAgent} (In-Context Self-Reflection) & Dual & 2.01 & 1.78 \\
    \textsc{RetroAgent} (RL-Trained Self-Reflection) & Dual & \textbf{2.20} & \textbf{1.94} \\
    \bottomrule
\end{tabular}
}
\end{threeparttable}
\end{table*}

\noindent \textbf{Intrinsic feedback promotes useful exploration.}
All intrinsic-feedback variants obtain higher Vendi Scores on successful trajectories than GRPO, indicating more diverse successful strategies. The in-context \retroagent{} variant shows slightly lower diversity than the single-feedback ablations, suggesting that the two feedback channels may partially constrain exploration when used jointly (Table~\ref{tab:impact_combining_dual_intrinsic_feedback}).

\begin{figure*}[th]
\centering
\begin{subfigure}[t]{0.48\linewidth}
    \centering
    \includegraphics[width=\linewidth]{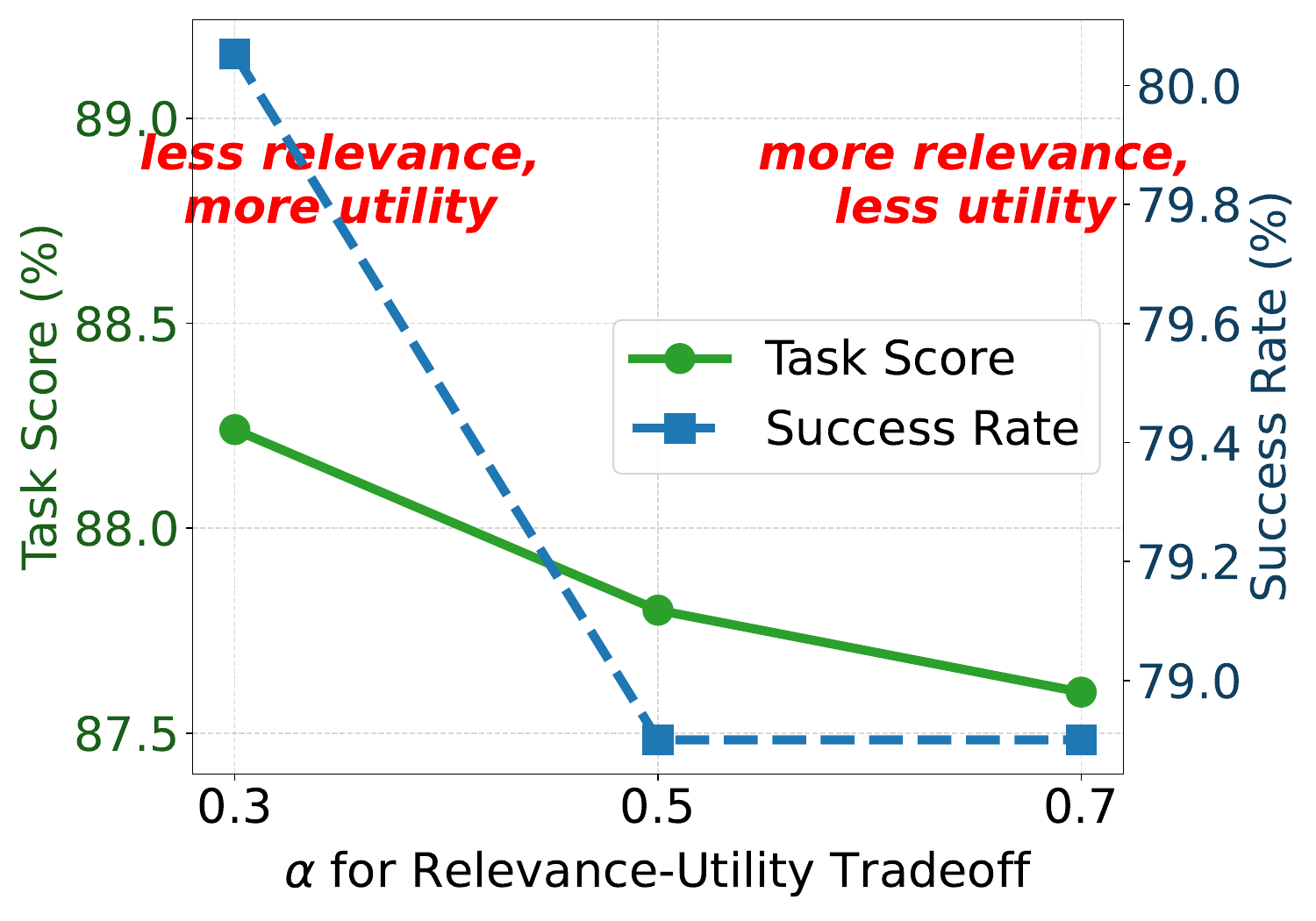}
    \caption{Impact of the relevance--utility tradeoff coefficient~\(\alpha\) on \retroagent{} (in-context self-reflection).}
    \label{fig:relevance_utility_tradeoff}
\end{subfigure}
\hfill
\begin{subfigure}[t]{0.48\linewidth}
    \centering
    \includegraphics[width=\linewidth]{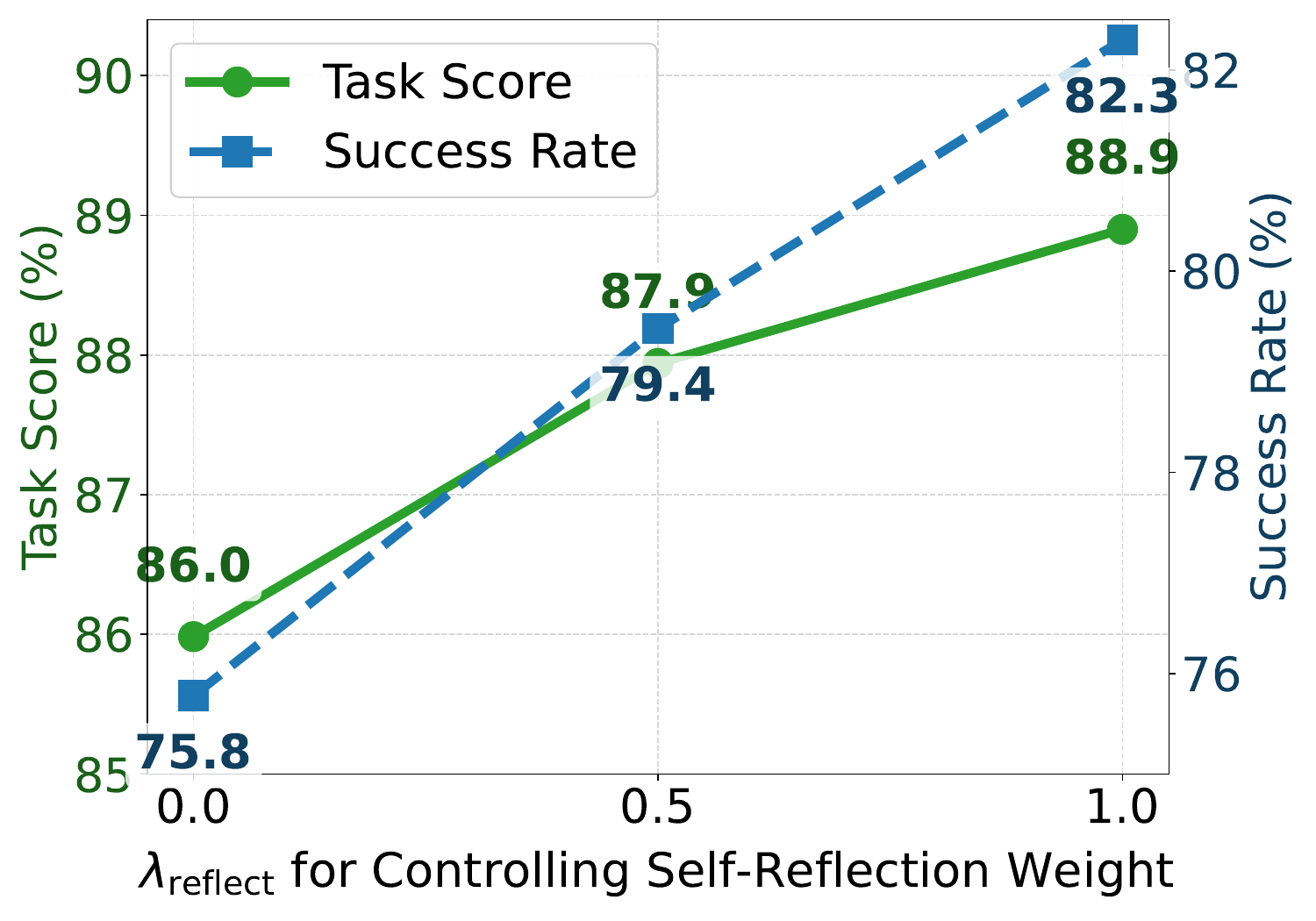}
    \caption{Impact of the self-reflection objective weight~\(\lambda_{\text{reflect}}\) on \retroagent{} (RL-trained self-reflection).}
    \label{fig:reflect_loss_weight}
\end{subfigure}
\caption{Sensitivity of \retroagent{} to two key coefficients on the
WebShop test set, reported in terms of task score and success rate
(averaged over three runs).}
\label{fig:coefficient_changes}
\end{figure*}



\subsection{Training Efficiency}
\label{sec:training_efficiency}
\begin{wrapfigure}{r}{0.48\linewidth}
    \centering
    \includegraphics[width=\linewidth]{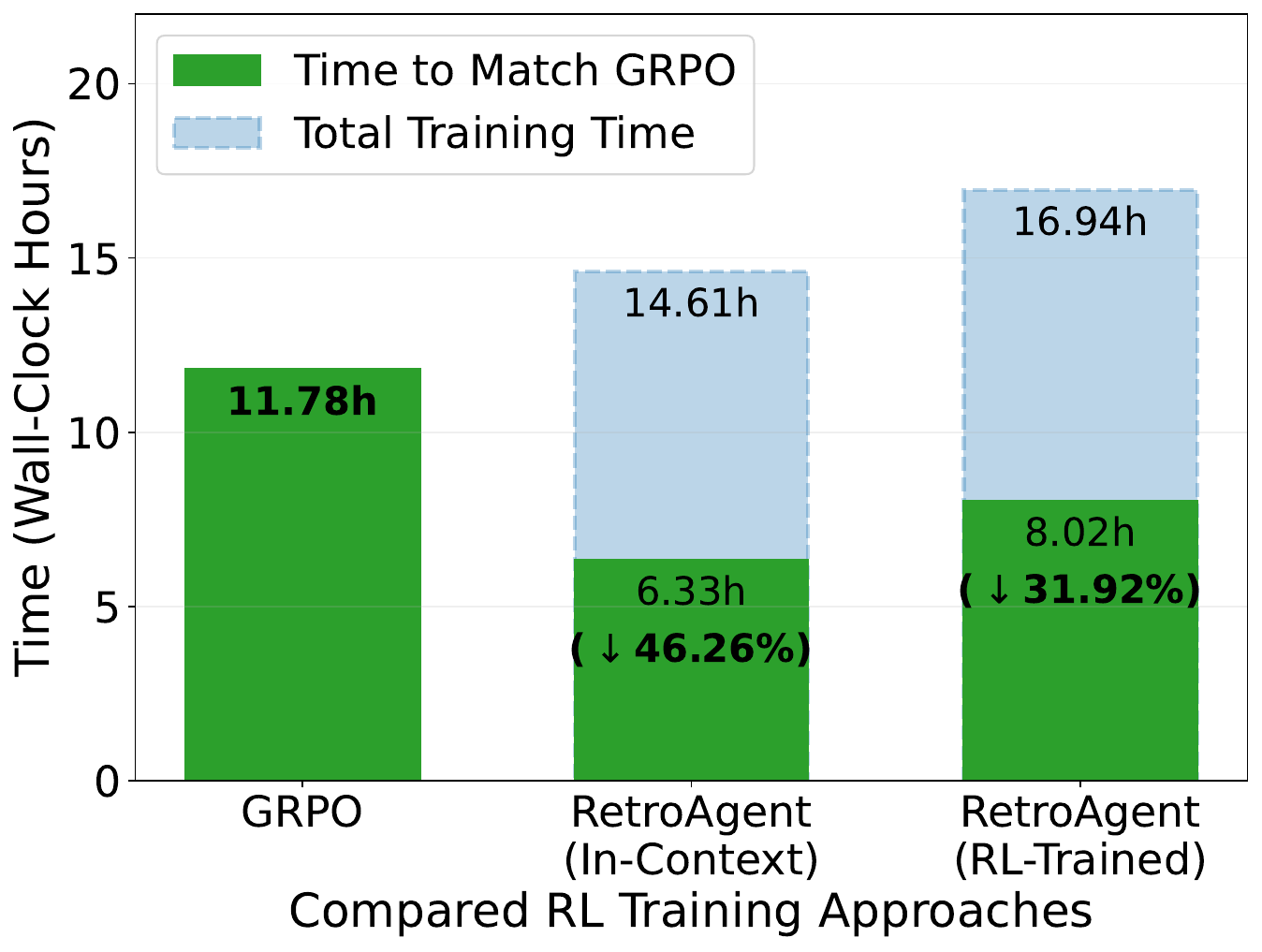}
    \caption{Training time (wall-clock hours) on WebShop. ``Time to Match GRPO'' denotes
    the time required for each \textsc{RetroAgent} variant to reach the peak performance of
    the GRPO baseline.}
    \label{fig:training_efficiency_comparison}
\end{wrapfigure}
We evaluate training efficiency by comparing \retroagent{} with the GRPO baseline in terms of training time (Figure~\ref{fig:training_efficiency_comparison}).

\noindent \textbf{Intrinsic feedback accelerates convergence.}
Although \retroagent{} incurs higher total training cost than GRPO, it reaches GRPO's peak performance substantially earlier. The in-context variant matches the GRPO peak at step~65, while the RL-trained variant does so at step~73 (Figure~\ref{fig:training_dynamics_reflection}), reducing the time required to reach this performance by \(46\%\) and \(32\%\), respectively. The RL-trained variant converges slightly more slowly, likely due to the additional optimization of the reflection objective.

\subsection{Sensitivity to the Relevance--Utility Tradeoff Coefficient}
\label{sec:relevance_utility_tradeoff}
We examine the relevance--utility tradeoff in memory retrieval for \retroagent{} with in-context self-reflection by varying the coefficient \(\alpha\) from \(0.3\) (utility-oriented) to \(0.7\) (relevance-oriented). As shown in Figure~\ref{fig:relevance_utility_tradeoff}, \retroagent{} achieves higher WebShop task scores and success rates when utility is prioritized (\(\alpha=0.3\)). This highlights the importance of accounting for historical utility, rather than relying solely on semantic relevance, when retrieving experiential memories.

\subsection{Sensitivity to the Self-Reflection Objective Weight}
\label{sec:self_reflection_weight}


We examine the effect of the self-reflection objective
weight~$\lambda_{\text{reflect}}$ on the final performance of
\retroagent{} with RL-trained self-reflection mechanism by varying
$\lambda_{\text{reflect}}$ from~$0$ (self-reflection loss disabled)
to~$1$. As shown in Figure~\ref{fig:reflect_loss_weight}, increasing
$\lambda_{\text{reflect}}$ consistently improves performance on
WebShop, raising the success rate from 75.8\% to 82.3\% and the task
score from 86.0\% to 88.9\%.

\subsection{Generalization Across Model Architectures}
\label{sec:generalization_scales}
\begin{table*}[th]
\caption{Performance of \textsc{RetroAgent} on Llama-3.1-8B-Instruct across four agentic benchmarks. All improvements are statistically significant ($p < 0.01$).}
\label{tab:llama_results}
\centering
\setlength{\tabcolsep}{2pt}
\resizebox{\textwidth}{!}{
\begin{tabular}{lccccc}
\toprule
\multirow{2}{*}{\textbf{Method}} & \textbf{ALFWorld} & \multicolumn{2}{c}{\textbf{WebShop}} & \textbf{Sokoban} & \textbf{MineSweeper} \\
\cmidrule(lr){2-2} \cmidrule(lr){3-4} \cmidrule(lr){5-5} \cmidrule(lr){6-6}
& Success (\%) & Score (\%) & Success (\%) & Success (\%) & Success (\%) \\ 
\midrule
Llama-3.1-8B-Instruct (Zero-shot) & $29.2 _{\pm0.9}$ & $0.2 _{\pm0.4}$ & $0.1 _{\pm0.2}$ & $5.7_{\pm0.5}$ & $7.0_{\pm0.8}$ \\
GRPO (Baseline) & $72.7 _{\pm2.3}$ & $78.0 _{\pm2.3}$ & $67.6 _{\pm2.8}$ & $12.2_{\pm1.2}$ & $42.4_{\pm2.5}$ \\
\textsc{LaMer}~\citep{jiang2025metarlinducesexplorationlanguage} & $76.0 _{\pm1.8}$ & - & $70.3 _{\pm3.6}$ & $15.9 _{\pm2.4}$ & $32.0 _{\pm3.4}$ \\
GiGPO~\citep{feng2025groupingroup} & $90.9_{\pm3.6}$ & $87.8_{\pm2.3}$ & $77.7_{\pm3.9}$ & $13.5_{\pm1.2}$ & $48.2_{\pm2.0}$ \\
\textsc{RetroAgent} (In-Context) & $\mathbf{93.1_{\pm1.5}}$ & $87.8_{\pm1.8}$ & $71.9_{\pm3.6}$ & $\mathbf{39.1_{\pm1.3}}$  & $52.3_{\pm1.6}$\\
\textsc{RetroAgent} (RL-Trained) & $91.4_{\pm1.4}$  & $\mathbf{89.5_{\pm2.1}}$ & $\mathbf{80.5_{\pm2.2}}$ & $24.5_{\pm2.8}$ & $\mathbf{59.9_{\pm3.2}}$  \\
\bottomrule
\end{tabular}
}
\end{table*}



\noindent \textbf{\retroagent{} generalizes across model architectures.}
To assess architectural generality, we evaluate \retroagent{} with Llama-3.1-8B-Instruct~\citep{grattafiori2024llama3herdmodels}. As shown in Table~\ref{tab:llama_results}, \retroagent{} consistently achieves state-of-the-art performance across all four tasks. The RL-trained self-reflection variant slightly underperforms the in-context variant on ALFWorld and Sokoban, likely due to interference between reflection and decision-making objectives during joint optimization. Specifically, the auxiliary reflection loss may weaken the primary policy-gradient signal. We leave improved multi-objective balancing to future work.


\subsection{Scaling Across Model Sizes}
\label{sec:scales}



\begin{figure*}[ht]
    \centering
    \begin{subfigure}[t]{0.48\linewidth}
        \centering
        \includegraphics[width=\linewidth]{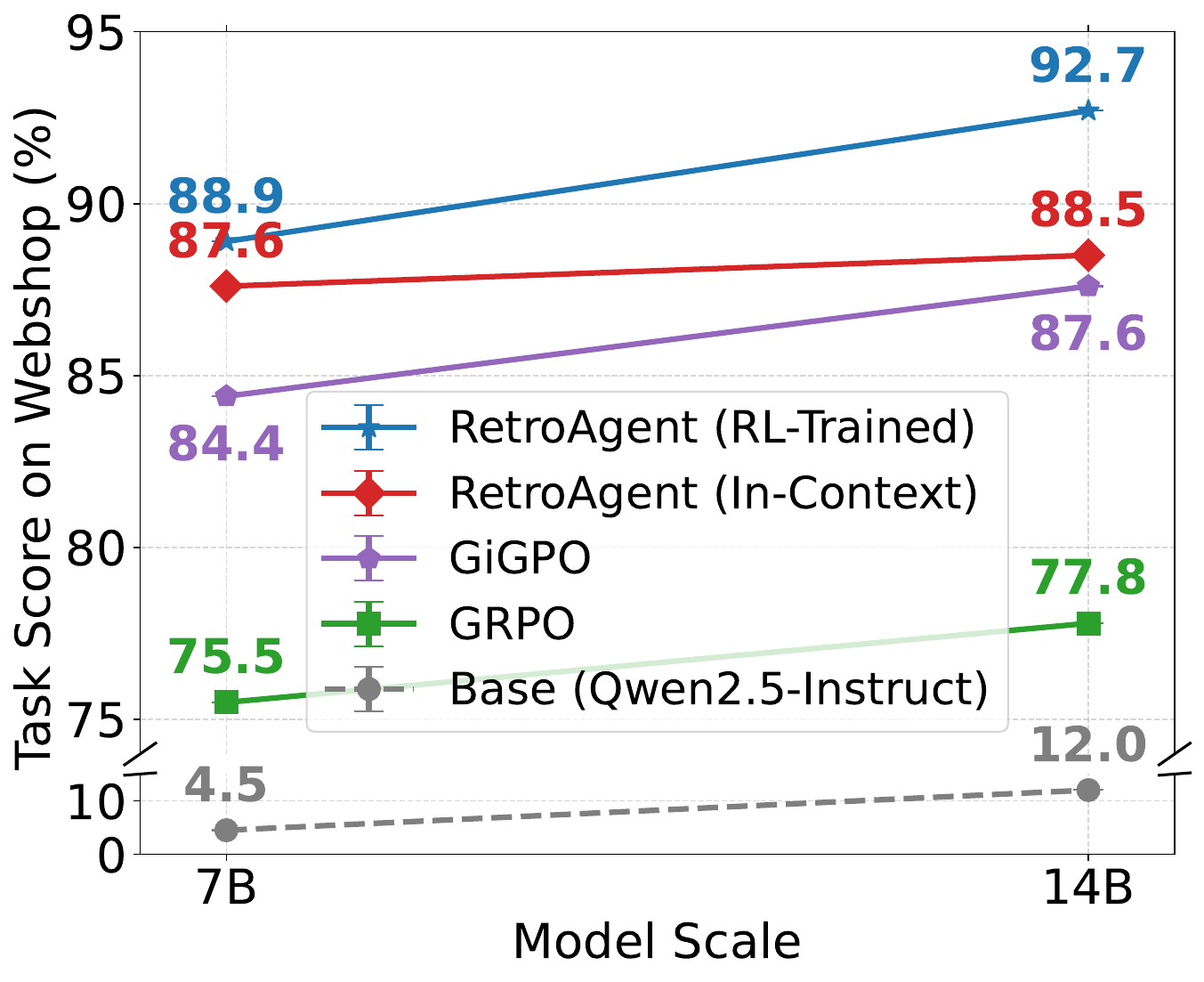}
        \caption{Task score comparison.}
        \label{fig:task_score_model_scales}
    \end{subfigure}
    \hfill
    \begin{subfigure}[t]{0.48\linewidth}
        \centering
        \includegraphics[width=\linewidth]{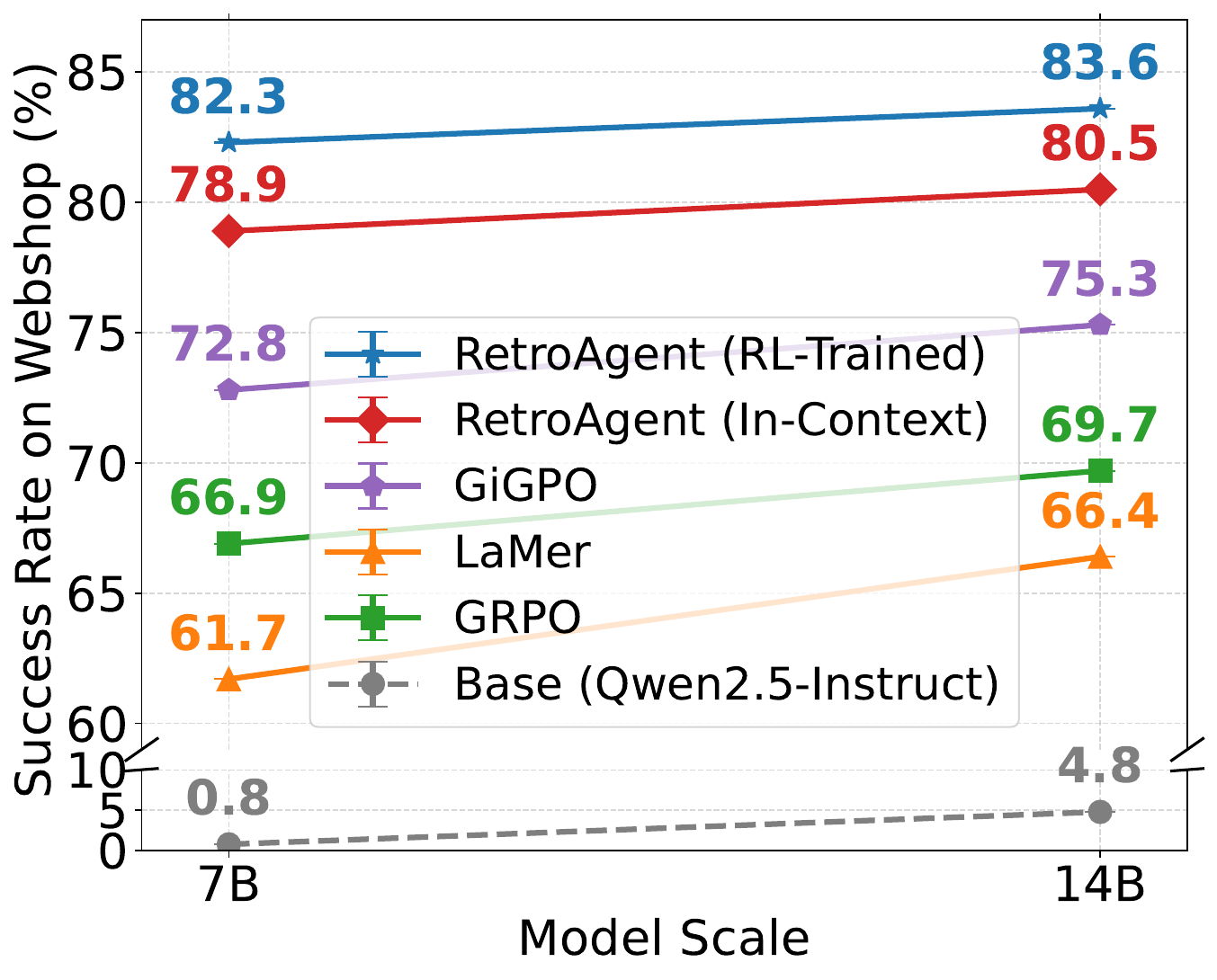}
        \caption{Success rate comparison.}
        \label{fig:success_rate_model_scales}
    \end{subfigure}
    \caption{Performance comparison of various methods across different model scales.}
    \label{fig:performance_model_scales}
\end{figure*}

\noindent \textbf{\retroagent{} generalizes across model scales.}
We evaluate \retroagent{} on WebShop using Qwen2.5-Instruct models at 7B and 14B scales. As shown in Figure~\ref{fig:performance_model_scales}, \retroagent{} consistently outperforms competitive baselines at both scales. Scaling from 7B to 14B yields modest gains: task score improves by \(+0.9\%\)--\(+3.8\%\), and success rate by \(+1.3\%\)--\(+1.6\%\). These results suggest that larger models can better encode and exploit experiential knowledge, but that model scaling alone provides limited benefits when the compared models share similar pretraining distributions and thus similar underlying reasoning capabilities.

\subsection{Qualitative Analysis}
\label{sec:qualitative_analysis}
\begin{figure}[th]
    \centering
    \includegraphics[width=\linewidth]{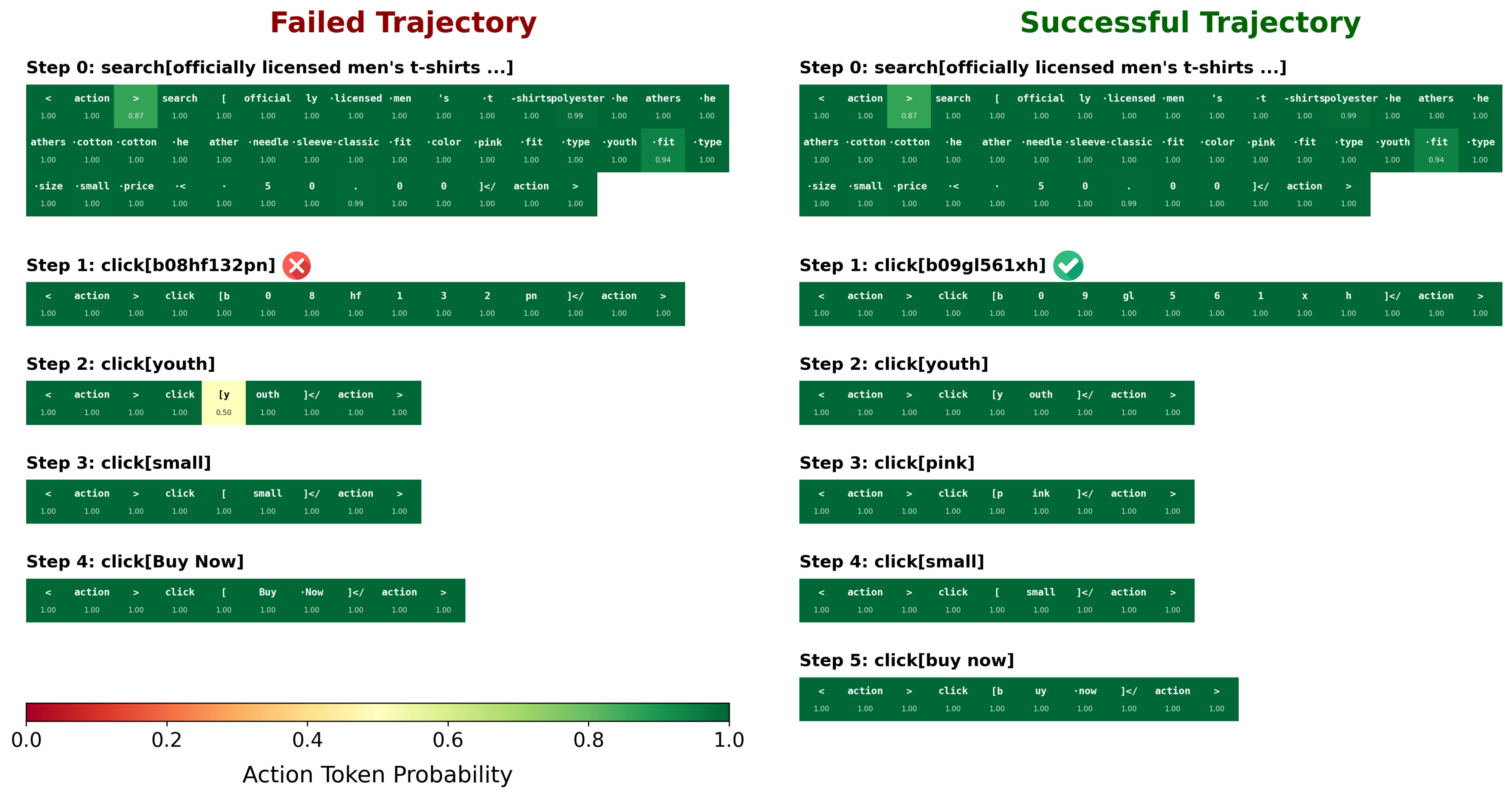}
    \caption{Qualitative comparison of \retroagent{} (in-context
    self-reflection) on the WebShop validation set between training
    step~65 (failed trajectory, left) and training step~150 (successful
    trajectory, right). For conciseness, only action tokens and their
    corresponding probabilities are shown at each decision step.}
    \label{fig:compared_traj_token_prob}
\end{figure}
We qualitatively examine \retroagent{}'s policy evolution by analyzing how lessons distilled from similar past tasks guide decision-making during training. We compare an early failed trajectory from \retroagent{} with in-context self-reflection at step~65 and a later successful trajectory at step~150 on the WebShop validation set. As shown in Figure~\ref{fig:compared_traj_token_prob}, at step~65, \retroagent{} selects an incorrect item at decision Step~1, fails to choose the required pink variant, and assigns low token-level confidence to the correct category ``youth.'' In contrast, at step~150, \retroagent{} retrieves relevant lessons from memory and selects the correct item with the required attributes at higher confidence. Full trajectories are provided in Appendix~\ref{app:generated_trajectories}.

\section{Conclusion}

We present \retroagent{}, an online RL framework that bridges one-off task solving and continuous adaptation. Through a hindsight self-reflection mechanism, \retroagent{} generates dual intrinsic feedback: $(\RN{1})$ intrinsic numerical feedback that rewards promising exploration by tracking incremental subtask completion, and $(\RN{2})$ intrinsic language feedback that distills reusable lessons into a memory buffer. This memory is retrieved via SimUtil-UCB, which balances relevance, utility, and exploration to leverage prior experience effectively. By jointly learning from extrinsic task-success rewards and retrospective dual intrinsic feedback, \retroagent{} enables efficient experiential learning. Experiments across four diverse agentic tasks show that \retroagent{} consistently achieves SOTA performance while exhibiting strong test-time adaptation and out-of-distribution generalization. These results suggest that dual intrinsic feedback is a promising direction for building continuously adaptive agents. Future work includes developing more effective multi-objective optimization strategies for jointly training self-reflection and decision-making, and extending \retroagent{} to multi-agent and open-ended settings.

\clearpage


\section*{Ethics Statement}
Throughout this research, we have adhered to ethical guidelines that prioritize privacy, fairness, and the responsible development of autonomous agents. All experiments are conducted on publicly available interactive decision-making benchmarks (ALFWorld, WebShop, Sokoban, and MineSweeper) within fully simulated, sandboxed environments. Our study does not involve human subjects, personally identifiable information, or private data, thereby safeguarding user privacy by design. The proposed intrinsic feedback mechanisms---capability-evolution rewards and memory-based language feedback---operate entirely within these simulated environments and do not interact with real users or real-world systems. All training is performed using open-source base models, ensuring transparency and reproducibility. These measures were implemented to minimize potential negative impacts on users and society.


\bibliography{main}

@inproceedings{
liu2025understanding,
title={Understanding R1-Zero-Like Training: A Critical Perspective},
author={Zichen Liu and Changyu Chen and Wenjun Li and Penghui Qi and Tianyu Pang and Chao Du and Wee Sun Lee and Min Lin},
booktitle={Second Conference on Language Modeling},
year={2025},
url={https://openreview.net/forum?id=5PAF7PAY2Y}
}

@inproceedings{
tajwar2025training,
title={Training a Generally Curious Agent},
author={Fahim Tajwar and Yiding Jiang and Abitha Thankaraj and Sumaita Sadia Rahman and J Zico Kolter and Jeff Schneider and Russ Salakhutdinov},
booktitle={Forty-second International Conference on Machine Learning},
year={2025},
url={https://openreview.net/forum?id=UeB3Hdrhda}
}

@misc{jiang2025metarlinducesexplorationlanguage,
      title={Meta-RL Induces Exploration in Language Agents}, 
      author={Yulun Jiang and Liangze Jiang and Damien Teney and Michael Moor and Maria Brbic},
      year={2025},
      eprint={2512.16848},
      archivePrefix={arXiv},
      primaryClass={cs.LG},
      url={https://arxiv.org/abs/2512.16848}, 
}

@inproceedings{yao2022react,
  title={React: Synergizing reasoning and acting in language models},
  author={Yao, Shunyu and Zhao, Jeffrey and Yu, Dian and Du, Nan and Shafran, Izhak and Narasimhan, Karthik R and Cao, Yuan},
  booktitle={The eleventh international conference on learning representations},
  year={2022}
}

@inproceedings{
feng2025groupingroup,
title={Group-in-Group Policy Optimization for {LLM} Agent Training},
author={Lang Feng and Zhenghai Xue and Tingcong Liu and Bo An},
booktitle={The Thirty-ninth Annual Conference on Neural Information Processing Systems},
year={2025},
url={https://openreview.net/forum?id=QXEhBMNrCW}
}

@inproceedings{
shinn2023reflexion,
title={Reflexion: language agents with verbal reinforcement learning},
author={Noah Shinn and Federico Cassano and Ashwin Gopinath and Karthik R Narasimhan and Shunyu Yao},
booktitle={Thirty-seventh Conference on Neural Information Processing Systems},
year={2023},
url={https://openreview.net/forum?id=vAElhFcKW6}
}

@inproceedings{park2023generative,
  title={Generative agents: Interactive simulacra of human behavior},
  author={Park, Joon Sung and O'Brien, Joseph and Cai, Carrie Jun and Morris, Meredith Ringel and Liang, Percy and Bernstein, Michael S},
  booktitle={Proceedings of the 36th annual acm symposium on user interface software and technology},
  pages={1--22},
  year={2023}
}

@article{
wang2024voyager,
title={Voyager: An Open-Ended Embodied Agent with Large Language Models},
author={Guanzhi Wang and Yuqi Xie and Yunfan Jiang and Ajay Mandlekar and Chaowei Xiao and Yuke Zhu and Linxi Fan and Anima Anandkumar},
journal={Transactions on Machine Learning Research},
issn={2835-8856},
year={2024},
url={https://openreview.net/forum?id=ehfRiF0R3a},
note={}
}

@inproceedings{
wang2024mobileagentv,
title={Mobile-Agent-v2: Mobile Device Operation Assistant with Effective Navigation via Multi-Agent Collaboration},
author={Junyang Wang and Haiyang Xu and Haitao Jia and Xi Zhang and Ming Yan and Weizhou Shen and Ji Zhang and Fei Huang and Jitao Sang},
booktitle={The Thirty-eighth Annual Conference on Neural Information Processing Systems},
year={2024},
url={https://openreview.net/forum?id=O0nBMRlkc8}
}

@article{schick2023toolformer,
  title={Toolformer: Language models can teach themselves to use tools},
  author={Schick, Timo and Dwivedi-Yu, Jane and Dess{\`\i}, Roberto and Raileanu, Roberta and Lomeli, Maria and Hambro, Eric and Zettlemoyer, Luke and Cancedda, Nicola and Scialom, Thomas},
  journal={Advances in Neural Information Processing Systems},
  volume={36},
  pages={68539--68551},
  year={2023}
}

@misc{
putta2025agent,
title={Agent Q: Advanced Reasoning and Learning for Autonomous {AI} Agents},
author={Pranav Putta and Edmund Mills and Naman Garg and Sumeet Ramesh Motwani and Elan Sopher Markowitz and Julia Kiseleva and Chelsea Finn and Divyansh Garg and Rafael Rafailov},
year={2025},
url={https://openreview.net/forum?id=LuytzzohTa}
}

@misc{shao2024deepseekmathpushinglimitsmathematical,
      title={DeepSeekMath: Pushing the Limits of Mathematical Reasoning in Open Language Models}, 
      author={Zhihong Shao and Peiyi Wang and Qihao Zhu and Runxin Xu and Junxiao Song and Xiao Bi and Haowei Zhang and Mingchuan Zhang and Y. K. Li and Y. Wu and Daya Guo},
      year={2024},
      eprint={2402.03300},
      archivePrefix={arXiv},
      primaryClass={cs.CL},
      url={https://arxiv.org/abs/2402.03300}, 
}

@article{trivedi2024appworld,
  title={Appworld: A controllable world of apps and people for benchmarking interactive coding agents},
  author={Trivedi, Harsh and Khot, Tushar and Hartmann, Mareike and Manku, Ruskin and Dong, Vinty and Li, Edward and Gupta, Shashank and Sabharwal, Ashish and Balasubramanian, Niranjan},
  journal={arXiv preprint arXiv:2407.18901},
  year={2024}
}

@article{schulman2017proximal,
  title={Proximal policy optimization algorithms},
  author={Schulman, John and Wolski, Filip and Dhariwal, Prafulla and Radford, Alec and Klimov, Oleg},
  journal={arXiv preprint arXiv:1707.06347},
  year={2017}
}

@article{friedman2023vendi,
  title={The Vendi Score: A Diversity Evaluation Metric for Machine Learning},
  author={Friedman, Dan and Dieng, Adji Bousso},
  journal={Transactions on Machine Learning Research},
  issn={2835-8856},
  year={2023}
}

@article{lee2023rlaif,
  title={Rlaif: Scaling reinforcement learning from human feedback with ai feedback},
  author={Lee, Harrison and Phatale, Samrat and Mansoor, Hassan and Lu, Kellie Ren and Mesnard, Thomas and Ferret, Johan and Bishop, Colton and Hall, Ethan and Carbune, Victor and Rastogi, Abhinav},
  year={2023}
}

@misc{liu2026simplememefficientlifelongmemory,
      title={SimpleMem: Efficient Lifelong Memory for LLM Agents}, 
      author={Jiaqi Liu and Yaofeng Su and Peng Xia and Siwei Han and Zeyu Zheng and Cihang Xie and Mingyu Ding and Huaxiu Yao},
      year={2026},
      eprint={2601.02553},
      archivePrefix={arXiv},
      primaryClass={cs.AI},
      url={https://arxiv.org/abs/2601.02553}, 
}

@misc{hübotter2026reinforcementlearningselfdistillation,
      title={Reinforcement Learning via Self-Distillation}, 
      author={Jonas Hübotter and Frederike Lübeck and Lejs Behric and Anton Baumann and Marco Bagatella and Daniel Marta and Ido Hakimi and Idan Shenfeld and Thomas Kleine Buening and Carlos Guestrin and Andreas Krause},
      year={2026},
      eprint={2601.20802},
      archivePrefix={arXiv},
      primaryClass={cs.LG},
      url={https://arxiv.org/abs/2601.20802}, 
}

@misc{fang2026mempexploringagentprocedural,
      title={Memp: Exploring Agent Procedural Memory}, 
      author={Runnan Fang and Yuan Liang and Xiaobin Wang and Jialong Wu and Shuofei Qiao and Pengjun Xie and Fei Huang and Huajun Chen and Ningyu Zhang},
      year={2026},
      eprint={2508.06433},
      archivePrefix={arXiv},
      primaryClass={cs.CL},
      url={https://arxiv.org/abs/2508.06433}, 
}

@misc{chhikara2025mem0buildingproductionreadyai,
      title={Mem0: Building Production-Ready AI Agents with Scalable Long-Term Memory}, 
      author={Prateek Chhikara and Dev Khant and Saket Aryan and Taranjeet Singh and Deshraj Yadav},
      year={2025},
      eprint={2504.19413},
      archivePrefix={arXiv},
      primaryClass={cs.CL},
      url={https://arxiv.org/abs/2504.19413}, 
}

@misc{wu2025evolverselfevolvingllmagents,
      title={EvolveR: Self-Evolving LLM Agents through an Experience-Driven Lifecycle}, 
      author={Rong Wu and Xiaoman Wang and Jianbiao Mei and Pinlong Cai and Daocheng Fu and Cheng Yang and Licheng Wen and Xuemeng Yang and Yufan Shen and Yuxin Wang and Botian Shi},
      year={2025},
      eprint={2510.16079},
      archivePrefix={arXiv},
      primaryClass={cs.CL},
      url={https://arxiv.org/abs/2510.16079}, 
}

@misc{xia2026skillrlevolvingagentsrecursive,
      title={SkillRL: Evolving Agents via Recursive Skill-Augmented Reinforcement Learning}, 
      author={Peng Xia and Jianwen Chen and Hanyang Wang and Jiaqi Liu and Kaide Zeng and Yu Wang and Siwei Han and Yiyang Zhou and Xujiang Zhao and Haifeng Chen and Zeyu Zheng and Cihang Xie and Huaxiu Yao},
      year={2026},
      eprint={2602.08234},
      archivePrefix={arXiv},
      primaryClass={cs.LG},
      url={https://arxiv.org/abs/2602.08234}, 
}

@misc{zhang2026memrlselfevolvingagentsruntime,
      title={MemRL: Self-Evolving Agents via Runtime Reinforcement Learning on Episodic Memory}, 
      author={Shengtao Zhang and Jiaqian Wang and Ruiwen Zhou and Junwei Liao and Yuchen Feng and Zhuo Li and Yujie Zheng and Weinan Zhang and Ying Wen and Zhiyu Li and Feiyu Xiong and Yutao Qi and Bo Tang and Muning Wen},
      year={2026},
      eprint={2601.03192},
      archivePrefix={arXiv},
      primaryClass={cs.CL},
      url={https://arxiv.org/abs/2601.03192}, 
}

@article{williams1992simple,
  title={Simple statistical gradient-following algorithms for connectionist reinforcement learning},
  author={Williams, Ronald J},
  journal={Machine learning},
  volume={8},
  number={3},
  pages={229--256},
  year={1992},
  publisher={Springer}
}

@article{shao2024deepseekmath,
  title={Deepseekmath: Pushing the limits of mathematical reasoning in open language models},
  author={Shao, Zhihong and Wang, Peiyi and Zhu, Qihao and Xu, Runxin and Song, Junxiao and Bi, Xiao and Zhang, Haowei and Zhang, Mingchuan and Li, YK and Wu, Yang and others},
  journal={arXiv preprint arXiv:2402.03300},
  year={2024}
}

@article{klinker2011exponential,
  title={Exponential moving average versus moving exponential average},
  author={Klinker, Frank},
  journal={Mathematische Semesterberichte},
  volume={58},
  number={1},
  pages={97--107},
  year={2011},
  publisher={Springer}
}

@misc{xu2025provablylearninglanguagefeedback,
      title={Provably Learning from Language Feedback}, 
      author={Wanqiao Xu and Allen Nie and Ruijie Zheng and Aditya Modi and Adith Swaminathan and Ching-An Cheng},
      year={2025},
      eprint={2506.10341},
      archivePrefix={arXiv},
      primaryClass={cs.LG},
      url={https://arxiv.org/abs/2506.10341}, 
}

@article{lyons2011monitoring,
  title={Monitoring, metacognition, and executive function: Elucidating the role of self-reflection in the development of self-regulation},
  author={Lyons, Kristen E and Zelazo, Philip David},
  journal={Advances in child development and behavior},
  volume={40},
  pages={379--412},
  year={2011},
  publisher={Elsevier}
}

@article{chen2017tutorial,
  title={A tutorial on kernel density estimation and recent advances},
  author={Chen, Yen-Chi},
  journal={Biostatistics \& Epidemiology},
  volume={1},
  number={1},
  pages={161--187},
  year={2017},
  publisher={Taylor \& Francis}
}

@book{sutton1998reinforcement,
  title={Reinforcement learning: An introduction},
  author={Sutton, Richard S and Barto, Andrew G and others},
  volume={1},
  number={1},
  year={1998},
  publisher={MIT press Cambridge}
}

@misc{zhou2025mementofinetuningllmagents,
      title={Memento: Fine-tuning LLM Agents without Fine-tuning LLMs}, 
      author={Huichi Zhou and Yihang Chen and Siyuan Guo and Xue Yan and Kin Hei Lee and Zihan Wang and Ka Yiu Lee and Guchun Zhang and Kun Shao and Linyi Yang and Jun Wang},
      year={2025},
      eprint={2508.16153},
      archivePrefix={arXiv},
      primaryClass={cs.LG},
      url={https://arxiv.org/abs/2508.16153}, 
}

@article{zhang2025critique,
  title={Critique-grpo: Advancing llm reasoning with natural language and numerical feedback},
  author={Zhang, Xiaoying and Zhang, Yipeng and Sun, Hao and Feng, Kaituo and Lu, Chaochao and Yang, Chao and Meng, Helen},
  journal={arXiv preprint arXiv:2506.03106},
  year={2025}
}

@article{racaniere2017imagination,
  title={Imagination-augmented agents for deep reinforcement learning},
  author={Racani{\`e}re, S{\'e}bastien and Weber, Th{\'e}ophane and Reichert, David and Buesing, Lars and Guez, Arthur and Jimenez Rezende, Danilo and Puigdom{\`e}nech Badia, Adri{\`a} and Vinyals, Oriol and Heess, Nicolas and Li, Yujia and others},
  journal={Advances in neural information processing systems},
  volume={30},
  year={2017}
}

@inproceedings{li2024assessing,
  title={Assessing logical puzzle solving in large language models: Insights from a minesweeper case study},
  author={Li, Yinghao and Wang, Haorui and Zhang, Chao},
  booktitle={Proceedings of the 2024 Conference of the North American Chapter of the Association for Computational Linguistics: Human Language Technologies (Volume 1: Long Papers)},
  pages={59--81},
  year={2024}
}

@inproceedings{DBLP:conf/nips/Yao0YN22,
  author       = {Shunyu Yao and
                  Howard Chen and
                  John Yang and
                  Karthik Narasimhan},
  editor       = {Sanmi Koyejo and
                  S. Mohamed and
                  A. Agarwal and
                  Danielle Belgrave and
                  K. Cho and
                  A. Oh},
  title        = {WebShop: Towards Scalable Real-World Web Interaction with Grounded
                  Language Agents},
  booktitle    = {Advances in Neural Information Processing Systems 35: Annual Conference
                  on Neural Information Processing Systems 2022, NeurIPS 2022, New Orleans,
                  LA, USA, November 28 - December 9, 2022},
  year         = {2022},
  url          = {http://papers.nips.cc/paper\_files/paper/2022/hash/82ad13ec01f9fe44c01cb91814fd7b8c-Abstract-Conference.html},
  bibsource    = {dblp computer science bibliography, https://dblp.org}
}

@inproceedings{DBLP:conf/iclr/ShridharYCBTH21,
  author       = {Mohit Shridhar and
                  Xingdi Yuan and
                  Marc{-}Alexandre C{\^{o}}t{\'{e}} and
                  Yonatan Bisk and
                  Adam Trischler and
                  Matthew J. Hausknecht},
  title        = {ALFWorld: Aligning Text and Embodied Environments for Interactive
                  Learning},
  booktitle    = {9th International Conference on Learning Representations, {ICLR} 2021,
                  Virtual Event, Austria, May 3-7, 2021},
  publisher    = {OpenReview.net},
  year         = {2021},
  url          = {https://openreview.net/forum?id=0IOX0YcCdTn},
  bibsource    = {dblp computer science bibliography, https://dblp.org}
}

@article{DBLP:journals/tmlr/WeiTBRZBYBZMCHVLDF22,
  author       = {Jason Wei and
                  Yi Tay and
                  Rishi Bommasani and
                  Colin Raffel and
                  Barret Zoph and
                  Sebastian Borgeaud and
                  Dani Yogatama and
                  Maarten Bosma and
                  Denny Zhou and
                  Donald Metzler and
                  Ed H. Chi and
                  Tatsunori Hashimoto and
                  Oriol Vinyals and
                  Percy Liang and
                  Jeff Dean and
                  William Fedus},
  title        = {Emergent Abilities of Large Language Models},
  journal      = {Trans. Mach. Learn. Res.},
  volume       = {2022},
  year         = {2022},
  url          = {https://openreview.net/forum?id=yzkSU5zdwD},
  bibsource    = {dblp computer science bibliography, https://dblp.org}
}

@inproceedings{
yao2024retroformer,
title={Retroformer: Retrospective Large Language Agents with Policy Gradient Optimization},
author={Weiran Yao and Shelby Heinecke and Juan Carlos Niebles and Zhiwei Liu and Yihao Feng and Le Xue and Rithesh R N and Zeyuan Chen and Jianguo Zhang and Devansh Arpit and Ran Xu and Phil L Mui and Huan Wang and Caiming Xiong and Silvio Savarese},
booktitle={The Twelfth International Conference on Learning Representations},
year={2024},
url={https://openreview.net/forum?id=KOZu91CzbK}
}

@article{madaan2023self,
  title={Self-refine: Iterative refinement with self-feedback},
  author={Madaan, Aman and Tandon, Niket and Gupta, Prakhar and Hallinan, Skyler and Gao, Luyu and Wiegreffe, Sarah and Alon, Uri and Dziri, Nouha and Prabhumoye, Shrimai and Yang, Yiming and others},
  journal={Advances in Neural Information Processing Systems},
  volume={36},
  pages={46534--46594},
  year={2023}
}

@misc{
kool2019buy,
title={Buy 4 {REINFORCE} Samples, Get a Baseline for Free!},
author={Wouter Kool and Herke van Hoof and Max Welling},
year={2019},
url={https://openreview.net/forum?id=r1lgTGL5DE}
}

@inproceedings{
yao2022webshop,
title={WebShop: Towards Scalable Real-World Web Interaction with Grounded Language Agents},
author={Shunyu Yao and Howard Chen and John Yang and Karthik R Narasimhan},
booktitle={Advances in Neural Information Processing Systems},
editor={Alice H. Oh and Alekh Agarwal and Danielle Belgrave and Kyunghyun Cho},
year={2022},
url={https://openreview.net/forum?id=R9KnuFlvnU}
}

@inproceedings{
zhou2024archer,
title={Ar{CH}er: Training Language Model Agents via Hierarchical Multi-Turn {RL}},
author={Yifei Zhou and Andrea Zanette and Jiayi Pan and Sergey Levine and Aviral Kumar},
booktitle={Forty-first International Conference on Machine Learning},
year={2024},
url={https://openreview.net/forum?id=b6rA0kAHT1}
}

@misc{zhang2025agentrlscalingagenticreinforcement,
      title={AgentRL: Scaling Agentic Reinforcement Learning with a Multi-Turn, Multi-Task Framework}, 
      author={Hanchen Zhang and Xiao Liu and Bowen Lv and Xueqiao Sun and Bohao Jing and Iat Long Iong and Zhenyu Hou and Zehan Qi and Hanyu Lai and Yifan Xu and Rui Lu and Hongning Wang and Jie Tang and Yuxiao Dong},
      year={2025},
      eprint={2510.04206},
      archivePrefix={arXiv},
      primaryClass={cs.AI},
      url={https://arxiv.org/abs/2510.04206}, 
}

@misc{wang2025cogitoergoludoagent,
      title={Cogito, Ergo Ludo: An Agent that Learns to Play by Reasoning and Planning}, 
      author={Sai Wang and Yu Wu and Zhongwen Xu},
      year={2025},
      eprint={2509.25052},
      archivePrefix={arXiv},
      primaryClass={cs.AI},
      url={https://arxiv.org/abs/2509.25052}, 
}

@article{comanici2025gemini,
  title={Gemini 2.5: Pushing the frontier with advanced reasoning, multimodality, long context, and next generation agentic capabilities},
  author={Comanici, Gheorghe and Bieber, Eric and Schaekermann, Mike and Pasupat, Ice and Sachdeva, Noveen and Dhillon, Inderjit and Blistein, Marcel and Ram, Ori and Zhang, Dan and Rosen, Evan and others},
  journal={arXiv preprint arXiv:2507.06261},
  year={2025}
}

@article{anthrophic2025skills,
  title={Introducing Agent Skills},
  author={Anthropic},
  journal={Claude Blog},
  year={2025}
}

@article{auer2002finite,
  title={Finite-time analysis of the multiarmed bandit problem},
  author={Auer, Peter and Cesa-Bianchi, Nicolo and Fischer, Paul},
  journal={Machine learning},
  volume={47},
  number={2},
  pages={235--256},
  year={2002},
  publisher={Springer}
}

@inproceedings{
liu2026exploratory,
title={Exploratory Memory-Augmented {LLM} Agent via Hybrid On- and Off-Policy Optimization},
author={Zeyuan Liu and Jeonghye Kim and Xufang Luo and Dongsheng Li and Yuqing Yang},
booktitle={The Fourteenth International Conference on Learning Representations},
year={2026},
url={https://openreview.net/forum?id=UOzxviKVFO}
}

@inproceedings{zhang-etal-2022-toward-self,
    title = "Toward Self-Learning End-to-End Task-oriented Dialog Systems",
    author = "Zhang, Xiaoying  and
      Peng, Baolin  and
      Gao, Jianfeng  and
      Meng, Helen",
    editor = "Lemon, Oliver  and
      Hakkani-Tur, Dilek  and
      Li, Junyi Jessy  and
      Ashrafzadeh, Arash  and
      Garcia, Daniel Hern{\'a}ndez  and
      Alikhani, Malihe  and
      Vandyke, David  and
      Du{\v{s}}ek, Ond{\v{r}}ej",
    booktitle = "Proceedings of the 23rd Annual Meeting of the Special Interest Group on Discourse and Dialogue",
    month = sep,
    year = "2022",
    address = "Edinburgh, UK",
    publisher = "Association for Computational Linguistics",
    url = "https://aclanthology.org/2022.sigdial-1.49/",
    doi = "10.18653/v1/2022.sigdial-1.49",
    pages = "516--530"
}

@article{liu2025gem,
  title={Gem: A gym for agentic llms},
  author={Liu, Zichen and Sims, Anya and Duan, Keyu and Chen, Changyu and Yu, Simon and Zhou, Xiangxin and Xu, Haotian and Xiong, Shaopan and Liu, Bo and Tan, Chenmien and others},
  journal={arXiv preprint arXiv:2510.01051},
  year={2025}
}

@article{abel2023definition,
  title={A definition of continual reinforcement learning},
  author={Abel, David and Barreto, Andr{\'e} and Van Roy, Benjamin and Precup, Doina and van Hasselt, Hado P and Singh, Satinder},
  journal={Advances in Neural Information Processing Systems},
  volume={36},
  pages={50377--50407},
  year={2023}
}

@article{lin1992self,
  title={Self-improving reactive agents based on reinforcement learning, planning and teaching},
  author={Lin, Long-Ji},
  journal={Machine learning},
  volume={8},
  number={3},
  pages={293--321},
  year={1992},
  publisher={Springer}
}

@article{graves2014neural,
  title={Neural turing machines},
  author={Graves, Alex and Wayne, Greg and Danihelka, Ivo},
  journal={arXiv preprint arXiv:1410.5401},
  year={2014}
}

@inproceedings{DBLP:conf/nips/Ouyang0JAWMZASR22,
  author       = {Long Ouyang and
                  Jeffrey Wu and
                  Xu Jiang and
                  Diogo Almeida and
                  Carroll L. Wainwright and
                  Pamela Mishkin and
                  Chong Zhang and
                  Sandhini Agarwal and
                  Katarina Slama and
                  Alex Ray and
                  John Schulman and
                  Jacob Hilton and
                  Fraser Kelton and
                  Luke Miller and
                  Maddie Simens and
                  Amanda Askell and
                  Peter Welinder and
                  Paul F. Christiano and
                  Jan Leike and
                  Ryan Lowe},
  editor       = {Sanmi Koyejo and
                  S. Mohamed and
                  A. Agarwal and
                  Danielle Belgrave and
                  K. Cho and
                  A. Oh},
  title        = {Training language models to follow instructions with human feedback},
  booktitle    = {Advances in Neural Information Processing Systems 35: Annual Conference
                  on Neural Information Processing Systems 2022, NeurIPS 2022, New Orleans,
                  LA, USA, November 28 - December 9, 2022},
  year         = {2022},
  url          = {http://papers.nips.cc/paper\_files/paper/2022/hash/b1efde53be364a73914f58805a001731-Abstract-Conference.html},
  bibsource    = {dblp computer science bibliography, https://dblp.org}
}

@article{rafiee2026toward,
  title={Toward Enactive Artificial Intelligence},
  author={Rafiee, Banafsheh and Sutton, Richard},
  journal={arXiv preprint arXiv:2605.24238},
  year={2026}
}

@incollection{lehman2023evolution,
  title={Evolution through large models},
  author={Lehman, Joel and Gordon, Jonathan and Jain, Shawn and Ndousse, Kamal and Yeh, Cathy and Stanley, Kenneth O},
  booktitle={Handbook of evolutionary machine learning},
  pages={331--366},
  year={2023},
  publisher={Springer}
}

@article{hinton1987learning,
  title={How learning can guide evolution},
  author={Hinton, Geoffrey E and Nowlan, Steven J and others},
  journal={Complex systems},
  volume={1},
  number={3},
  pages={495--502},
  year={1987}
}

@misc{weng2026learning_beyond_gradients,
  title        = {Learning Beyond Gradients},
  author       = {Weng, Jiayi},
  year         = {2026},
  month        = may,
  howpublished = {\url{https://trinkle23897.github.io/learning-beyond-gradients/}},
  note         = {Blog post}
}

@inproceedings{
kirk2024understanding,
title={Understanding the Effects of {RLHF} on {LLM} Generalisation and Diversity},
author={Robert Kirk and Ishita Mediratta and Christoforos Nalmpantis and Jelena Luketina and Eric Hambro and Edward Grefenstette and Roberta Raileanu},
booktitle={The Twelfth International Conference on Learning Representations},
year={2024},
url={https://openreview.net/forum?id=PXD3FAVHJT}
}

@inproceedings{goyal2022retrieval,
  title={Retrieval-augmented reinforcement learning},
  author={Goyal, Anirudh and Friesen, Abram and Banino, Andrea and Weber, Theophane and Ke, Nan Rosemary and Badia, Adria Puigdomenech and Guez, Arthur and Mirza, Mehdi and Humphreys, Peter C and Konyushova, Ksenia and others},
  booktitle={International Conference on Machine Learning},
  pages={7740--7765},
  year={2022},
  organization={PMLR}
}

@article{singh2025openai,
  title={Openai gpt-5 system card},
  author={Singh, Aaditya and Fry, Adam and Perelman, Adam and Tart, Adam and Ganesh, Adi and El-Kishky, Ahmed and McLaughlin, Aidan and Low, Aiden and Ostrow, AJ and Ananthram, Akhila and others},
  journal={arXiv preprint arXiv:2601.03267},
  year={2025}
}

@inproceedings{song-etal-2024-trial,
    title = "Trial and Error: Exploration-Based Trajectory Optimization of {LLM} Agents",
    author = "Song, Yifan  and
      Yin, Da  and
      Yue, Xiang  and
      Huang, Jie  and
      Li, Sujian  and
      Lin, Bill Yuchen",
    editor = "Ku, Lun-Wei  and
      Martins, Andre  and
      Srikumar, Vivek",
    booktitle = "Proceedings of the 62nd Annual Meeting of the Association for Computational Linguistics (Volume 1: Long Papers)",
    month = aug,
    year = "2024",
    address = "Bangkok, Thailand",
    publisher = "Association for Computational Linguistics",
    url = "https://aclanthology.org/2024.acl-long.409/",
    doi = "10.18653/v1/2024.acl-long.409",
    pages = "7584--7600"
}

@inproceedings{xi-etal-2025-agentgym,
    title = "{A}gent{G}ym: Evaluating and Training Large Language Model-based Agents across Diverse Environments",
    author = "Xi, Zhiheng  and
      Ding, Yiwen  and
      Chen, Wenxiang  and
      Hong, Boyang  and
      Guo, Honglin  and
      Wang, Junzhe  and
      Guo, Xin  and
      Yang, Dingwen  and
      Liao, Chenyang  and
      He, Wei  and
      Gao, Songyang  and
      Chen, Lu  and
      Zheng, Rui  and
      Zou, Yicheng  and
      Gui, Tao  and
      Zhang, Qi  and
      Qiu, Xipeng  and
      Huang, Xuanjing  and
      Wu, Zuxuan  and
      Jiang, Yu-Gang",
    editor = "Che, Wanxiang  and
      Nabende, Joyce  and
      Shutova, Ekaterina  and
      Pilehvar, Mohammad Taher",
    booktitle = "Proceedings of the 63rd Annual Meeting of the Association for Computational Linguistics (Volume 1: Long Papers)",
    month = jul,
    year = "2025",
    address = "Vienna, Austria",
    publisher = "Association for Computational Linguistics",
    url = "https://aclanthology.org/2025.acl-long.1355/",
    doi = "10.18653/v1/2025.acl-long.1355",
    pages = "27914--27961",
    ISBN = "979-8-89176-251-0"
}

@article{xie2024osworld,
  title={Osworld: Benchmarking multimodal agents for open-ended tasks in real computer environments},
  author={Xie, Tianbao and Zhang, Danyang and Chen, Jixuan and Li, Xiaochuan and Zhao, Siheng and Cao, Ruisheng and Hua, Toh J and Cheng, Zhoujun and Shin, Dongchan and Lei, Fangyu and others},
  journal={Advances in Neural Information Processing Systems},
  volume={37},
  pages={52040--52094},
  year={2024}
}

@inproceedings{zhang2025ufo,
  title={Ufo: A ui-focused agent for windows os interaction},
  author={Zhang, Chaoyun and Li, Liqun and He, Shilin and Zhang, Xu and Qiao, Bo and Qin, Si and Ma, Minghua and Kang, Yu and Lin, Qingwei and Rajmohan, Saravan and others},
  booktitle={Proceedings of the 2025 Conference of the Nations of the Americas Chapter of the Association for Computational Linguistics: Human Language Technologies (Volume 1: Long Papers)},
  pages={597--622},
  year={2025}
}

@misc{chen2025reinforcementlearninglonghorizoninteractive,
      title={Reinforcement Learning for Long-Horizon Interactive LLM Agents}, 
      author={Kevin Chen and Marco Cusumano-Towner and Brody Huval and Aleksei Petrenko and Jackson Hamburger and Vladlen Koltun and Philipp Krähenbühl},
      year={2025},
      eprint={2502.01600},
      archivePrefix={arXiv},
      primaryClass={cs.LG},
      url={https://arxiv.org/abs/2502.01600}, 
}

@misc{openai2024gpt4ocard,
      title={GPT-4o System Card}, 
      author={OpenAI and : and Aaron Hurst and Adam Lerer and Adam P. Goucher and Adam Perelman and Aditya Ramesh and Aidan Clark and AJ Ostrow and Akila Welihinda and Alan Hayes and Alec Radford and Aleksander Mądry and Alex Baker-Whitcomb and Alex Beutel and Alex Borzunov and Alex Carney and Alex Chow and Alex Kirillov and Alex Nichol and Alex Paino and Alex Renzin and Alex Tachard Passos and Alexander Kirillov and Alexi Christakis and Alexis Conneau and Ali Kamali and Allan Jabri and Allison Moyer and Allison Tam and Amadou Crookes and Amin Tootoochian and Amin Tootoonchian and Ananya Kumar and Andrea Vallone and Andrej Karpathy and Andrew Braunstein and Andrew Cann and Andrew Codispoti and Andrew Galu and Andrew Kondrich and Andrew Tulloch and Andrey Mishchenko and Angela Baek and Angela Jiang and Antoine Pelisse and Antonia Woodford and Anuj Gosalia and Arka Dhar and Ashley Pantuliano and Avi Nayak and Avital Oliver and Barret Zoph and Behrooz Ghorbani and Ben Leimberger and Ben Rossen and Ben Sokolowsky and Ben Wang and Benjamin Zweig and Beth Hoover and Blake Samic and Bob McGrew and Bobby Spero and Bogo Giertler and Bowen Cheng and Brad Lightcap and Brandon Walkin and Brendan Quinn and Brian Guarraci and Brian Hsu and Bright Kellogg and Brydon Eastman and Camillo Lugaresi and Carroll Wainwright and Cary Bassin and Cary Hudson and Casey Chu and Chad Nelson and Chak Li and Chan Jun Shern and Channing Conger and Charlotte Barette and Chelsea Voss and Chen Ding and Cheng Lu and Chong Zhang and Chris Beaumont and Chris Hallacy and Chris Koch and Christian Gibson and Christina Kim and Christine Choi and Christine McLeavey and Christopher Hesse and Claudia Fischer and Clemens Winter and Coley Czarnecki and Colin Jarvis and Colin Wei and Constantin Koumouzelis and Dane Sherburn and Daniel Kappler and Daniel Levin and Daniel Levy and David Carr and David Farhi and David Mely and David Robinson and David Sasaki and Denny Jin and Dev Valladares and Dimitris Tsipras and Doug Li and Duc Phong Nguyen and Duncan Findlay and Edede Oiwoh and Edmund Wong and Ehsan Asdar and Elizabeth Proehl and Elizabeth Yang and Eric Antonow and Eric Kramer and Eric Peterson and Eric Sigler and Eric Wallace and Eugene Brevdo and Evan Mays and Farzad Khorasani and Felipe Petroski Such and Filippo Raso and Francis Zhang and Fred von Lohmann and Freddie Sulit and Gabriel Goh and Gene Oden and Geoff Salmon and Giulio Starace and Greg Brockman and Hadi Salman and Haiming Bao and Haitang Hu and Hannah Wong and Haoyu Wang and Heather Schmidt and Heather Whitney and Heewoo Jun and Hendrik Kirchner and Henrique Ponde de Oliveira Pinto and Hongyu Ren and Huiwen Chang and Hyung Won Chung and Ian Kivlichan and Ian O'Connell and Ian O'Connell and Ian Osband and Ian Silber and Ian Sohl and Ibrahim Okuyucu and Ikai Lan and Ilya Kostrikov and Ilya Sutskever and Ingmar Kanitscheider and Ishaan Gulrajani and Jacob Coxon and Jacob Menick and Jakub Pachocki and James Aung and James Betker and James Crooks and James Lennon and Jamie Kiros and Jan Leike and Jane Park and Jason Kwon and Jason Phang and Jason Teplitz and Jason Wei and Jason Wolfe and Jay Chen and Jeff Harris and Jenia Varavva and Jessica Gan Lee and Jessica Shieh and Ji Lin and Jiahui Yu and Jiayi Weng and Jie Tang and Jieqi Yu and Joanne Jang and Joaquin Quinonero Candela and Joe Beutler and Joe Landers and Joel Parish and Johannes Heidecke and John Schulman and Jonathan Lachman and Jonathan McKay and Jonathan Uesato and Jonathan Ward and Jong Wook Kim and Joost Huizinga and Jordan Sitkin and Jos Kraaijeveld and Josh Gross and Josh Kaplan and Josh Snyder and Joshua Achiam and Joy Jiao and Joyce Lee and Juntang Zhuang and Justyn Harriman and Kai Fricke and Kai Hayashi and Karan Singhal and Katy Shi and Kavin Karthik and Kayla Wood and Kendra Rimbach and Kenny Hsu and Kenny Nguyen and Keren Gu-Lemberg and Kevin Button and Kevin Liu and Kiel Howe and Krithika Muthukumar and Kyle Luther and Lama Ahmad and Larry Kai and Lauren Itow and Lauren Workman and Leher Pathak and Leo Chen and Li Jing and Lia Guy and Liam Fedus and Liang Zhou and Lien Mamitsuka and Lilian Weng and Lindsay McCallum and Lindsey Held and Long Ouyang and Louis Feuvrier and Lu Zhang and Lukas Kondraciuk and Lukasz Kaiser and Luke Hewitt and Luke Metz and Lyric Doshi and Mada Aflak and Maddie Simens and Madelaine Boyd and Madeleine Thompson and Marat Dukhan and Mark Chen and Mark Gray and Mark Hudnall and Marvin Zhang and Marwan Aljubeh and Mateusz Litwin and Matthew Zeng and Max Johnson and Maya Shetty and Mayank Gupta and Meghan Shah and Mehmet Yatbaz and Meng Jia Yang and Mengchao Zhong and Mia Glaese and Mianna Chen and Michael Janner and Michael Lampe and Michael Petrov and Michael Wu and Michele Wang and Michelle Fradin and Michelle Pokrass and Miguel Castro and Miguel Oom Temudo de Castro and Mikhail Pavlov and Miles Brundage and Miles Wang and Minal Khan and Mira Murati and Mo Bavarian and Molly Lin and Murat Yesildal and Nacho Soto and Natalia Gimelshein and Natalie Cone and Natalie Staudacher and Natalie Summers and Natan LaFontaine and Neil Chowdhury and Nick Ryder and Nick Stathas and Nick Turley and Nik Tezak and Niko Felix and Nithanth Kudige and Nitish Keskar and Noah Deutsch and Noel Bundick and Nora Puckett and Ofir Nachum and Ola Okelola and Oleg Boiko and Oleg Murk and Oliver Jaffe and Olivia Watkins and Olivier Godement and Owen Campbell-Moore and Patrick Chao and Paul McMillan and Pavel Belov and Peng Su and Peter Bak and Peter Bakkum and Peter Deng and Peter Dolan and Peter Hoeschele and Peter Welinder and Phil Tillet and Philip Pronin and Philippe Tillet and Prafulla Dhariwal and Qiming Yuan and Rachel Dias and Rachel Lim and Rahul Arora and Rajan Troll and Randall Lin and Rapha Gontijo Lopes and Raul Puri and Reah Miyara and Reimar Leike and Renaud Gaubert and Reza Zamani and Ricky Wang and Rob Donnelly and Rob Honsby and Rocky Smith and Rohan Sahai and Rohit Ramchandani and Romain Huet and Rory Carmichael and Rowan Zellers and Roy Chen and Ruby Chen and Ruslan Nigmatullin and Ryan Cheu and Saachi Jain and Sam Altman and Sam Schoenholz and Sam Toizer and Samuel Miserendino and Sandhini Agarwal and Sara Culver and Scott Ethersmith and Scott Gray and Sean Grove and Sean Metzger and Shamez Hermani and Shantanu Jain and Shengjia Zhao and Sherwin Wu and Shino Jomoto and Shirong Wu and Shuaiqi and Xia and Sonia Phene and Spencer Papay and Srinivas Narayanan and Steve Coffey and Steve Lee and Stewart Hall and Suchir Balaji and Tal Broda and Tal Stramer and Tao Xu and Tarun Gogineni and Taya Christianson and Ted Sanders and Tejal Patwardhan and Thomas Cunninghman and Thomas Degry and Thomas Dimson and Thomas Raoux and Thomas Shadwell and Tianhao Zheng and Todd Underwood and Todor Markov and Toki Sherbakov and Tom Rubin and Tom Stasi and Tomer Kaftan and Tristan Heywood and Troy Peterson and Tyce Walters and Tyna Eloundou and Valerie Qi and Veit Moeller and Vinnie Monaco and Vishal Kuo and Vlad Fomenko and Wayne Chang and Weiyi Zheng and Wenda Zhou and Wesam Manassra and Will Sheu and Wojciech Zaremba and Yash Patil and Yilei Qian and Yongjik Kim and Youlong Cheng and Yu Zhang and Yuchen He and Yuchen Zhang and Yujia Jin and Yunxing Dai and Yury Malkov},
      year={2024},
      eprint={2410.21276},
      archivePrefix={arXiv},
      primaryClass={cs.CL},
      url={https://arxiv.org/abs/2410.21276}, 
}

@article{sheng2024hybridflow,
  title   = {HybridFlow: A Flexible and Efficient RLHF Framework},
  author  = {Guangming Sheng and Chi Zhang and Zilingfeng Ye and Xibin Wu and Wang Zhang and Ru Zhang and Yanghua Peng and Haibin Lin and Chuan Wu},
  year    = {2024},
  journal = {arXiv preprint arXiv: 2409.19256}
}

@misc{grattafiori2024llama3herdmodels,
      title={The Llama 3 Herd of Models}, 
      author={Aaron Grattafiori and Abhimanyu Dubey and Abhinav Jauhri and Abhinav Pandey and Abhishek Kadian and Ahmad Al-Dahle and Aiesha Letman and Akhil Mathur and Alan Schelten and Alex Vaughan and Amy Yang and Angela Fan and Anirudh Goyal and Anthony Hartshorn and Aobo Yang and Archi Mitra and Archie Sravankumar and Artem Korenev and Arthur Hinsvark and Arun Rao and Aston Zhang and Aurelien Rodriguez and Austen Gregerson and Ava Spataru and Baptiste Roziere and Bethany Biron and Binh Tang and Bobbie Chern and Charlotte Caucheteux and Chaya Nayak and Chloe Bi and Chris Marra and Chris McConnell and Christian Keller and Christophe Touret and Chunyang Wu and Corinne Wong and Cristian Canton Ferrer and Cyrus Nikolaidis and Damien Allonsius and Daniel Song and Danielle Pintz and Danny Livshits and Danny Wyatt and David Esiobu and Dhruv Choudhary and Dhruv Mahajan and Diego Garcia-Olano and Diego Perino and Dieuwke Hupkes and Egor Lakomkin and Ehab AlBadawy and Elina Lobanova and Emily Dinan and Eric Michael Smith and Filip Radenovic and Francisco Guzmán and Frank Zhang and Gabriel Synnaeve and Gabrielle Lee and Georgia Lewis Anderson and Govind Thattai and Graeme Nail and Gregoire Mialon and Guan Pang and Guillem Cucurell and Hailey Nguyen and Hannah Korevaar and Hu Xu and Hugo Touvron and Iliyan Zarov and Imanol Arrieta Ibarra and Isabel Kloumann and Ishan Misra and Ivan Evtimov and Jack Zhang and Jade Copet and Jaewon Lee and Jan Geffert and Jana Vranes and Jason Park and Jay Mahadeokar and Jeet Shah and Jelmer van der Linde and Jennifer Billock and Jenny Hong and Jenya Lee and Jeremy Fu and Jianfeng Chi and Jianyu Huang and Jiawen Liu and Jie Wang and Jiecao Yu and Joanna Bitton and Joe Spisak and Jongsoo Park and Joseph Rocca and Joshua Johnstun and Joshua Saxe and Junteng Jia and Kalyan Vasuden Alwala and Karthik Prasad and Kartikeya Upasani and Kate Plawiak and Ke Li and Kenneth Heafield and Kevin Stone and Khalid El-Arini and Krithika Iyer and Kshitiz Malik and Kuenley Chiu and Kunal Bhalla and Kushal Lakhotia and Lauren Rantala-Yeary and Laurens van der Maaten and Lawrence Chen and Liang Tan and Liz Jenkins and Louis Martin and Lovish Madaan and Lubo Malo and Lukas Blecher and Lukas Landzaat and Luke de Oliveira and Madeline Muzzi and Mahesh Pasupuleti and Mannat Singh and Manohar Paluri and Marcin Kardas and Maria Tsimpoukelli and Mathew Oldham and Mathieu Rita and Maya Pavlova and Melanie Kambadur and Mike Lewis and Min Si and Mitesh Kumar Singh and Mona Hassan and Naman Goyal and Narjes Torabi and Nikolay Bashlykov and Nikolay Bogoychev and Niladri Chatterji and Ning Zhang and Olivier Duchenne and Onur Çelebi and Patrick Alrassy and Pengchuan Zhang and Pengwei Li and Petar Vasic and Peter Weng and Prajjwal Bhargava and Pratik Dubal and Praveen Krishnan and Punit Singh Koura and Puxin Xu and Qing He and Qingxiao Dong and Ragavan Srinivasan and Raj Ganapathy and Ramon Calderer and Ricardo Silveira Cabral and Robert Stojnic and Roberta Raileanu and Rohan Maheswari and Rohit Girdhar and Rohit Patel and Romain Sauvestre and Ronnie Polidoro and Roshan Sumbaly and Ross Taylor and Ruan Silva and Rui Hou and Rui Wang and Saghar Hosseini and Sahana Chennabasappa and Sanjay Singh and Sean Bell and Seohyun Sonia Kim and Sergey Edunov and Shaoliang Nie and Sharan Narang and Sharath Raparthy and Sheng Shen and Shengye Wan and Shruti Bhosale and Shun Zhang and Simon Vandenhende and Soumya Batra and Spencer Whitman and Sten Sootla and Stephane Collot and Suchin Gururangan and Sydney Borodinsky and Tamar Herman and Tara Fowler and Tarek Sheasha and Thomas Georgiou and Thomas Scialom and Tobias Speckbacher and Todor Mihaylov and Tong Xiao and Ujjwal Karn and Vedanuj Goswami and Vibhor Gupta and Vignesh Ramanathan and Viktor Kerkez and Vincent Gonguet and Virginie Do and Vish Vogeti and Vítor Albiero and Vladan Petrovic and Weiwei Chu and Wenhan Xiong and Wenyin Fu and Whitney Meers and Xavier Martinet and Xiaodong Wang and Xiaofang Wang and Xiaoqing Ellen Tan and Xide Xia and Xinfeng Xie and Xuchao Jia and Xuewei Wang and Yaelle Goldschlag and Yashesh Gaur and Yasmine Babaei and Yi Wen and Yiwen Song and Yuchen Zhang and Yue Li and Yuning Mao and Zacharie Delpierre Coudert and Zheng Yan and Zhengxing Chen and Zoe Papakipos and Aaditya Singh and Aayushi Srivastava and Abha Jain and Adam Kelsey and Adam Shajnfeld and Adithya Gangidi and Adolfo Victoria and Ahuva Goldstand and Ajay Menon and Ajay Sharma and Alex Boesenberg and Alexei Baevski and Allie Feinstein and Amanda Kallet and Amit Sangani and Amos Teo and Anam Yunus and Andrei Lupu and Andres Alvarado and Andrew Caples and Andrew Gu and Andrew Ho and Andrew Poulton and Andrew Ryan and Ankit Ramchandani and Annie Dong and Annie Franco and Anuj Goyal and Aparajita Saraf and Arkabandhu Chowdhury and Ashley Gabriel and Ashwin Bharambe and Assaf Eisenman and Azadeh Yazdan and Beau James and Ben Maurer and Benjamin Leonhardi and Bernie Huang and Beth Loyd and Beto De Paola and Bhargavi Paranjape and Bing Liu and Bo Wu and Boyu Ni and Braden Hancock and Bram Wasti and Brandon Spence and Brani Stojkovic and Brian Gamido and Britt Montalvo and Carl Parker and Carly Burton and Catalina Mejia and Ce Liu and Changhan Wang and Changkyu Kim and Chao Zhou and Chester Hu and Ching-Hsiang Chu and Chris Cai and Chris Tindal and Christoph Feichtenhofer and Cynthia Gao and Damon Civin and Dana Beaty and Daniel Kreymer and Daniel Li and David Adkins and David Xu and Davide Testuggine and Delia David and Devi Parikh and Diana Liskovich and Didem Foss and Dingkang Wang and Duc Le and Dustin Holland and Edward Dowling and Eissa Jamil and Elaine Montgomery and Eleonora Presani and Emily Hahn and Emily Wood and Eric-Tuan Le and Erik Brinkman and Esteban Arcaute and Evan Dunbar and Evan Smothers and Fei Sun and Felix Kreuk and Feng Tian and Filippos Kokkinos and Firat Ozgenel and Francesco Caggioni and Frank Kanayet and Frank Seide and Gabriela Medina Florez and Gabriella Schwarz and Gada Badeer and Georgia Swee and Gil Halpern and Grant Herman and Grigory Sizov and Guangyi and Zhang and Guna Lakshminarayanan and Hakan Inan and Hamid Shojanazeri and Han Zou and Hannah Wang and Hanwen Zha and Haroun Habeeb and Harrison Rudolph and Helen Suk and Henry Aspegren and Hunter Goldman and Hongyuan Zhan and Ibrahim Damlaj and Igor Molybog and Igor Tufanov and Ilias Leontiadis and Irina-Elena Veliche and Itai Gat and Jake Weissman and James Geboski and James Kohli and Janice Lam and Japhet Asher and Jean-Baptiste Gaya and Jeff Marcus and Jeff Tang and Jennifer Chan and Jenny Zhen and Jeremy Reizenstein and Jeremy Teboul and Jessica Zhong and Jian Jin and Jingyi Yang and Joe Cummings and Jon Carvill and Jon Shepard and Jonathan McPhie and Jonathan Torres and Josh Ginsburg and Junjie Wang and Kai Wu and Kam Hou U and Karan Saxena and Kartikay Khandelwal and Katayoun Zand and Kathy Matosich and Kaushik Veeraraghavan and Kelly Michelena and Keqian Li and Kiran Jagadeesh and Kun Huang and Kunal Chawla and Kyle Huang and Lailin Chen and Lakshya Garg and Lavender A and Leandro Silva and Lee Bell and Lei Zhang and Liangpeng Guo and Licheng Yu and Liron Moshkovich and Luca Wehrstedt and Madian Khabsa and Manav Avalani and Manish Bhatt and Martynas Mankus and Matan Hasson and Matthew Lennie and Matthias Reso and Maxim Groshev and Maxim Naumov and Maya Lathi and Meghan Keneally and Miao Liu and Michael L. Seltzer and Michal Valko and Michelle Restrepo and Mihir Patel and Mik Vyatskov and Mikayel Samvelyan and Mike Clark and Mike Macey and Mike Wang and Miquel Jubert Hermoso and Mo Metanat and Mohammad Rastegari and Munish Bansal and Nandhini Santhanam and Natascha Parks and Natasha White and Navyata Bawa and Nayan Singhal and Nick Egebo and Nicolas Usunier and Nikhil Mehta and Nikolay Pavlovich Laptev and Ning Dong and Norman Cheng and Oleg Chernoguz and Olivia Hart and Omkar Salpekar and Ozlem Kalinli and Parkin Kent and Parth Parekh and Paul Saab and Pavan Balaji and Pedro Rittner and Philip Bontrager and Pierre Roux and Piotr Dollar and Polina Zvyagina and Prashant Ratanchandani and Pritish Yuvraj and Qian Liang and Rachad Alao and Rachel Rodriguez and Rafi Ayub and Raghotham Murthy and Raghu Nayani and Rahul Mitra and Rangaprabhu Parthasarathy and Raymond Li and Rebekkah Hogan and Robin Battey and Rocky Wang and Russ Howes and Ruty Rinott and Sachin Mehta and Sachin Siby and Sai Jayesh Bondu and Samyak Datta and Sara Chugh and Sara Hunt and Sargun Dhillon and Sasha Sidorov and Satadru Pan and Saurabh Mahajan and Saurabh Verma and Seiji Yamamoto and Sharadh Ramaswamy and Shaun Lindsay and Shaun Lindsay and Sheng Feng and Shenghao Lin and Shengxin Cindy Zha and Shishir Patil and Shiva Shankar and Shuqiang Zhang and Shuqiang Zhang and Sinong Wang and Sneha Agarwal and Soji Sajuyigbe and Soumith Chintala and Stephanie Max and Stephen Chen and Steve Kehoe and Steve Satterfield and Sudarshan Govindaprasad and Sumit Gupta and Summer Deng and Sungmin Cho and Sunny Virk and Suraj Subramanian and Sy Choudhury and Sydney Goldman and Tal Remez and Tamar Glaser and Tamara Best and Thilo Koehler and Thomas Robinson and Tianhe Li and Tianjun Zhang and Tim Matthews and Timothy Chou and Tzook Shaked and Varun Vontimitta and Victoria Ajayi and Victoria Montanez and Vijai Mohan and Vinay Satish Kumar and Vishal Mangla and Vlad Ionescu and Vlad Poenaru and Vlad Tiberiu Mihailescu and Vladimir Ivanov and Wei Li and Wenchen Wang and Wenwen Jiang and Wes Bouaziz and Will Constable and Xiaocheng Tang and Xiaojian Wu and Xiaolan Wang and Xilun Wu and Xinbo Gao and Yaniv Kleinman and Yanjun Chen and Ye Hu and Ye Jia and Ye Qi and Yenda Li and Yilin Zhang and Ying Zhang and Yossi Adi and Youngjin Nam and Yu and Wang and Yu Zhao and Yuchen Hao and Yundi Qian and Yunlu Li and Yuzi He and Zach Rait and Zachary DeVito and Zef Rosnbrick and Zhaoduo Wen and Zhenyu Yang and Zhiwei Zhao and Zhiyu Ma},
      year={2024},
      eprint={2407.21783},
      archivePrefix={arXiv},
      primaryClass={cs.AI},
      url={https://arxiv.org/abs/2407.21783}, 
}

@misc{qwen2025qwen25technicalreport,
      title={Qwen2.5 Technical Report}, 
      author={Qwen and : and An Yang and Baosong Yang and Beichen Zhang and Binyuan Hui and Bo Zheng and Bowen Yu and Chengyuan Li and Dayiheng Liu and Fei Huang and Haoran Wei and Huan Lin and Jian Yang and Jianhong Tu and Jianwei Zhang and Jianxin Yang and Jiaxi Yang and Jingren Zhou and Junyang Lin and Kai Dang and Keming Lu and Keqin Bao and Kexin Yang and Le Yu and Mei Li and Mingfeng Xue and Pei Zhang and Qin Zhu and Rui Men and Runji Lin and Tianhao Li and Tianyi Tang and Tingyu Xia and Xingzhang Ren and Xuancheng Ren and Yang Fan and Yang Su and Yichang Zhang and Yu Wan and Yuqiong Liu and Zeyu Cui and Zhenru Zhang and Zihan Qiu},
      year={2025},
      eprint={2412.15115},
      archivePrefix={arXiv},
      primaryClass={cs.CL},
      url={https://arxiv.org/abs/2412.15115}, 
}

@inproceedings{ahmadian2024back,
  title={Back to basics: Revisiting REINFORCE-style optimization for learning from human feedback in LLMs},
  author={Ahmadian, Arash and Cremer, Chris and Gall{\'e}, Matthias and Fadaee, Marzieh and Kreutzer, Julia and Pietquin, Olivier and {\"U}st{\"u}n, Ahmet and Hooker, Sara},
  booktitle={Proceedings of the 62nd Annual Meeting of the Association for Computational Linguistics (Volume 1: Long Papers)},
  pages={12248--12267},
  year={2024}
}

@article{beck2025tutorial,
  title={A tutorial on meta-reinforcement learning},
  author={Beck, Jacob and Vuorio, Risto and Zheran Liu, Evan and Xiong, Zheng and Zintgraf, Luisa and Finn, Chelsea and Whiteson, Shimon},
  journal={Foundations and Trends in Machine Learning},
  volume={18},
  number={2-3},
  pages={224--384},
  year={2025},
  publisher={Emerald Publishing Limited}
}

@article{wang2025ragen,
  title={Ragen: Understanding self-evolution in llm agents via multi-turn reinforcement learning},
  author={Wang, Zihan and Wang, Kangrui and Wang, Qineng and Zhang, Pingyue and Li, Linjie and Yang, Zhengyuan and Jin, Xing and Yu, Kefan and Nguyen, Minh Nhat and Liu, Licheng and others},
  journal={arXiv preprint arXiv:2504.20073},
  year={2025}
}

@misc{wei2025reinforcingmultiturnreasoningllm,
      title={Reinforcing Multi-Turn Reasoning in LLM Agents via Turn-Level Reward Design}, 
      author={Quan Wei and Siliang Zeng and Chenliang Li and William Brown and Oana Frunza and Wei Deng and Anderson Schneider and Yuriy Nevmyvaka and Yang Katie Zhao and Alfredo Garcia and Mingyi Hong},
      year={2025},
      eprint={2505.11821},
      archivePrefix={arXiv},
      primaryClass={cs.LG},
      url={https://arxiv.org/abs/2505.11821}, 
}

@misc{gao2025navigateunknownenhancingllm,
      title={Navigate the Unknown: Enhancing LLM Reasoning with Intrinsic Motivation Guided Exploration}, 
      author={Jingtong Gao and Ling Pan and Yejing Wang and Rui Zhong and Chi Lu and Qingpeng Cai and Peng Jiang and Xiangyu Zhao},
      year={2025},
      eprint={2505.17621},
      archivePrefix={arXiv},
      primaryClass={cs.LG},
      url={https://arxiv.org/abs/2505.17621}, 
}

@inproceedings{
liu2025position,
title={Position: Truly Self-Improving Agents Require Intrinsic Metacognitive Learning},
author={Tennison Liu and Mihaela van der Schaar},
booktitle={Forty-second International Conference on Machine Learning Position Paper Track},
year={2025},
url={https://openreview.net/forum?id=4KhDd0Ozqe}
}

@misc{wang2025harnessinguncertaintyentropymodulatedpolicy,
      title={Harnessing Uncertainty: Entropy-Modulated Policy Gradients for Long-Horizon LLM Agents}, 
      author={Jiawei Wang and Jiacai Liu and Yuqian Fu and Yingru Li and Xintao Wang and Yuan Lin and Yu Yue and Lin Zhang and Yang Wang and Ke Wang},
      year={2025},
      eprint={2509.09265},
      archivePrefix={arXiv},
      primaryClass={cs.LG},
      url={https://arxiv.org/abs/2509.09265}, 
}

@misc{wang2025sparlreinforcingllmagents,
      title={SPA-RL: Reinforcing LLM Agents via Stepwise Progress Attribution}, 
      author={Hanlin Wang and Chak Tou Leong and Jiashuo Wang and Jian Wang and Wenjie Li},
      year={2025},
      eprint={2505.20732},
      archivePrefix={arXiv},
      primaryClass={cs.CL},
      url={https://arxiv.org/abs/2505.20732}, 
}
\bibliographystyle{tmlr}

\clearpage
\appendix

\tableofcontents
\addtocontents{toc}{\protect\setcounter{tocdepth}{2}}
\clearpage


\section{Theoretical Analysis -- Joint benefit of dual intrinsic feedback}
\label{app:theoretical_analysis}

We analyze how reflection memory and intrinsic numerical feedback jointly affect
policy optimization. Memory retrieval changes the rollout distribution by
increasing the probability of sampling trajectories with high reflected
potential, while the capability-evolution reward assigns larger return to such
trajectories. Thus, trajectories exhibiting partial progress can be reinforced
even without extrinsic success.

Let \(q_0=x\) be the original prompt and
\[
    q_m=f_{\mathrm{memory}}(x,\mathcal{M})
\]
be the memory-augmented prompt. For a fixed behavior policy
\(\pi_{\theta_{\mathrm{old}}}\), define
\[
    Q_0(\tau)=p_{\theta_{\mathrm{old}}}(\tau\mid q_0),
    \qquad
    Q_m(\tau)=p_{\theta_{\mathrm{old}}}(\tau\mid q_m).
\]
If memory-augmented rollouts are sampled with probability \(\rho\in[0,1]\), the
rollout distribution is
\begin{equation}
    Q_\rho(\tau)=(1-\rho)Q_0(\tau)+\rho Q_m(\tau).
    \label{eq:joint_mixture_distribution}
\end{equation}
In \retroagent{}, \(\rho=1/2\).

Recall that \(\tau\) is \(\delta\)-valuable at iteration \(k\) if
\[
    \phi_{(x,\tau),k}\ge \Phi_x^{(k)}+\delta .
\]
Let
\[
    p_{k,\delta}^0
    =
    \Pr_{\tau\sim Q_0}
    \left[\tau\in\mathcal{V}_{k,\delta}(x)\right],
    \qquad
    p_{k,\delta}^m
    =
    \Pr_{\tau\sim Q_m}
    \left[\tau\in\mathcal{V}_{k,\delta}(x)\right].
\]
Then, under \(Q_\rho\),
\begin{equation}
    p_{k,\delta}^{\rho}
    =
    \Pr_{\tau\sim Q_\rho}
    \left[\tau\in\mathcal{V}_{k,\delta}(x)\right]
    =
    (1-\rho)p_{k,\delta}^0+\rho p_{k,\delta}^m .
    \label{eq:joint_mixture_probability}
\end{equation}

\paragraph{Intrinsic feedback as selective gradient weighting.}

At iteration \(k\), the augmented reward is
\begin{equation}
    R_k^{\mathrm{aug}}(\tau)
    =
    R^{\mathrm{ext}}(\tau)
    +
    R_k^{\mathrm{int}}(\tau),
    \qquad
    R_k^{\mathrm{int}}(\tau)
    =
    \left[\phi_{(x,\tau),k}-\Phi_x^{(k)}\right]_+ .
    \label{eq:joint_augmented_reward}
\end{equation}
For \(q\in\{q_0,q_m\}\), define
\begin{equation}
    J_k^{\mathrm{aug}}(\theta;q)
    =
    \mathbb{E}_{\tau\sim p_\theta(\cdot\mid q)}
    \left[
        R_k^{\mathrm{aug}}(\tau)
    \right].
    \label{eq:joint_prompt_objective}
\end{equation}
Treating rewards as trajectory-level feedback, the likelihood-ratio identity
gives
\begin{equation}
\begin{aligned}
    \nabla_\theta J_k^{\mathrm{aug}}(\theta;q)
    &=
    \mathbb{E}_{\tau\sim p_\theta(\cdot\mid q)}
    \left[
        R_k^{\mathrm{aug}}(\tau)
        \nabla_\theta\log p_\theta(\tau\mid q)
    \right] \\
    &=
    \mathbb{E}
    \left[
        R^{\mathrm{ext}}(\tau)
        \nabla_\theta\log p_\theta(\tau\mid q)
    \right]
    +
    \mathbb{E}
    \left[
        R_k^{\mathrm{int}}(\tau)
        \nabla_\theta\log p_\theta(\tau\mid q)
    \right].
\end{aligned}
\label{eq:joint_pg_decomposition}
\end{equation}
Thus, intrinsic feedback adds the gradient component
\begin{equation}
    g_k^{\mathrm{int}}(\theta;q)
    =
    \mathbb{E}_{\tau\sim p_\theta(\cdot\mid q)}
    \left[
        R_k^{\mathrm{int}}(\tau)
        \nabla_\theta\log p_\theta(\tau\mid q)
    \right].
    \label{eq:joint_intrinsic_gradient}
\end{equation}
For every \(\tau\in\mathcal{V}_{k,\delta}(x)\),
\(R_k^{\mathrm{int}}(\tau)\ge\delta\), whereas trajectories below the current
baseline receive no intrinsic reward. Hence, the capability-evolution reward
selectively upweights trajectories whose reflected potential exceeds the
current capability threshold.

In practice, returns are replaced by advantages. Let
\begin{equation}
    A_k^{\mathrm{aug}}(\tau,q)
    =
    R_k^{\mathrm{aug}}(\tau)-b_k(q),
    \label{eq:joint_augmented_advantage}
\end{equation}
where \(b_k(q)\) is an action-independent baseline, such as the GRPO group mean.
Since
\[
    \mathbb{E}_{\tau\sim p_\theta(\cdot\mid q)}
    \left[
        b_k(q)\nabla_\theta\log p_\theta(\tau\mid q)
    \right]=0,
\]
the policy gradient is equivalently
\begin{equation}
    \nabla_\theta J_k^{\mathrm{aug}}(\theta;q)
    =
    \mathbb{E}_{\tau\sim p_\theta(\cdot\mid q)}
    \left[
        A_k^{\mathrm{aug}}(\tau,q)
        \nabla_\theta\log p_\theta(\tau\mid q)
    \right].
    \label{eq:joint_advantage_pg}
\end{equation}
Therefore, even if \(R^{\mathrm{ext}}(\tau)=0\), a trajectory can receive
positive gradient weight whenever its augmented advantage is positive. Intrinsic
feedback consequently separates failures with meaningful reflected progress
from uninformative failures.

\begin{proposition}[Intrinsic feedback lower-bounds valuable-trajectory weight]
\label{prop:joint_intrinsic_weight}
Fix \(q\in\{q_0,q_m\}\), and define
\[
    p_{k,\delta}(\theta;q)
    =
    \Pr_{\tau\sim p_\theta(\cdot\mid q)}
    \left[
        \tau\in\mathcal{V}_{k,\delta}(x)
    \right].
\]
For
\[
    J_k^{\mathrm{int}}(\theta;q)
    =
    \mathbb{E}_{\tau\sim p_\theta(\cdot\mid q)}
    \left[
        R_k^{\mathrm{int}}(\tau)
    \right],
\]
we have
\begin{equation}
    J_k^{\mathrm{int}}(\theta;q)
    \ge
    \delta\,p_{k,\delta}(\theta;q).
    \label{eq:joint_intrinsic_lower_bound}
\end{equation}
Moreover,
\begin{equation}
\begin{aligned}
    \nabla_\theta J_k^{\mathrm{int}}(\theta;q)
    &=
    \mathbb{E}
    \left[
        R_k^{\mathrm{int}}(\tau)
        \nabla_\theta\log p_\theta(\tau\mid q)
        \mathbf{1}\{\tau\in\mathcal{V}_{k,\delta}(x)\}
    \right] \\
    &\quad+
    \mathbb{E}
    \left[
        R_k^{\mathrm{int}}(\tau)
        \nabla_\theta\log p_\theta(\tau\mid q)
        \mathbf{1}\{\tau\notin\mathcal{V}_{k,\delta}(x)\}
    \right],
\end{aligned}
\label{eq:joint_intrinsic_gradient_decomposition}
\end{equation}
where the first term assigns at least \(\delta\) intrinsic return weight to each
\(\delta\)-valuable trajectory.
\end{proposition}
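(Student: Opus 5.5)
The lower bound~\eqref{eq:joint_intrinsic_lower_bound} is the argument of Proposition~\ref{prop:intrinsic_policy_search}, with the trajectory distribution \(\pi(\cdot\mid x)\times P\) replaced by \(p_\theta(\cdot\mid q)\) for a fixed prompt \(q\in\{q_0,q_m\}\). First I check pointwise bounds on \(R_k^{\mathrm{int}}\). If \(\tau\in\mathcal{V}_{k,\delta}(x)\), then \(\phi_{(x,\tau),k}-\Phi_x^{(k)}\ge\delta>0\), so the rectifier is inactive and \(R_k^{\mathrm{int}}(\tau)\ge\delta\). Otherwise, \(R_k^{\mathrm{int}}(\tau)\ge 0\) because \([\cdot]_+\ge0\). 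Together these give
\[
R_k^{\mathrm{int}}(\tau)\ge \delta\,\mathbf{1}\{\tau\in\mathcal{V}_{k,\delta}(x)\}
\]
for every \(\tau\). Taking expectation under \(p_\theta(\cdot\mid q)\) and using monotonicity of expectation yields \(J_k^{\mathrm{int}}(\theta;q)\ge\delta\,p_{k,\delta}(\theta;q)\). Integrability is not an issue: \(\phi\in[0,1]\) and \(\Phi_x^{(k)}\ge0\), so \(0\le R_k^{\mathrm{int}}\le1\).

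For the gradient identity~\eqref{eq:joint_intrinsic_gradient_decomposition}, I first invoke the likelihood-ratio identity already used in~\eqref{eq:joint_pg_decomposition}, restricted to the intrinsic term. This gives \(\nabla_\theta J_k^{\mathrm{int}}(\theta;q)=g_k^{\mathrm{int}}(\theta;q)\) as in~\eqref{eq:joint_intrinsic_gradient}. I then insert the partition of unity \(1=\mathbf{1}\{\tau\in\mathcal{V}_{k,\delta}(x)\}+\mathbf{1}\{\tau\notin\mathcal{V}_{k,\delta}(x)\}\) inside the expectation and split it by linearity, which gives the two displayed terms. The final clause, that each \(\delta\)-valuable trajectory carries intrinsic return weight at least \(\delta\), is the pointwise bound from the first step applied on the event \(\{\tau\in\mathcal{V}_{k,\delta}(x)\}\). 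The multiplier of \(\nabla_\theta\log p_\theta(\tau\mid q)\) in the first term is therefore at least \(\delta\).

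The main obstacle is justifying the interchange of \(\nabla_\theta\) and expectation in the likelihood-ratio step. This needs three conditions. First, \(R_k^{\mathrm{int}}\) must not depend on \(\theta\) during differentiation. I would state explicitly that \(\phi_{(x,\tau),k}\) and \(\Phi_x^{(k)}\) are frozen at iteration \(k\), produced by the reflection function and baseline rather than differentiated through; this matches the stop-gradient convention stated for the reflection objective. Second, the trajectory space must be finite or countable, which holds for token sequences of bounded length \(T\). Third, \(p_\theta\) must be differentiable in \(\theta\), which holds because it is a product of softmax policy probabilities and \(\theta\)-independent transition probabilities. With a finite sum, the interchange is immediate. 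I would also note that the transition terms drop out of \(\nabla_\theta\log p_\theta\). The rest is routine.
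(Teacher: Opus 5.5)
Your proposal is correct and follows the paper's proof. The lower bound comes from the pointwise bound $R_k^{\mathrm{int}}(\tau)\ge\delta\,\mathbf{1}\{\tau\in\mathcal{V}_{k,\delta}(x)\}$ plus monotonicity of expectation, and the decomposition comes from inserting the partition of unity into the likelihood-ratio gradient. Your added justifications are rigor the paper leaves implicit rather than a different route: freezing $\phi_{(x,\tau),k}$ and $\Phi_x^{(k)}$ at iteration $k$, which matters because the RL-trained reflection policy shares $\theta$, and interchanging $\nabla_\theta$ with a finite sum over bounded-length trajectories.
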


\begin{proof}
For \(\tau\in\mathcal{V}_{k,\delta}(x)\),
\(R_k^{\mathrm{int}}(\tau)\ge\delta\), and otherwise
\(R_k^{\mathrm{int}}(\tau)\ge0\). Hence,
\[
\begin{aligned}
    J_k^{\mathrm{int}}(\theta;q)
    &=
    \mathbb{E}
    \left[
        R_k^{\mathrm{int}}(\tau)
    \right] \\
    &\ge
    \mathbb{E}
    \left[
        \delta\,\mathbf{1}\{\tau\in\mathcal{V}_{k,\delta}(x)\}
    \right] \\
    &=
    \delta\,p_{k,\delta}(\theta;q).
\end{aligned}
\]
The gradient decomposition follows by inserting
\[
    1
    =
    \mathbf{1}\{\tau\in\mathcal{V}_{k,\delta}(x)\}
    +
    \mathbf{1}\{\tau\notin\mathcal{V}_{k,\delta}(x)\}
\]
into the likelihood-ratio gradient.
\end{proof}

\paragraph{Joint effect.}

The mixture objective is
\begin{equation}
    J_k^\rho(\theta)
    =
    (1-\rho)J_k^{\mathrm{aug}}(\theta;q_0)
    +
    \rho J_k^{\mathrm{aug}}(\theta;q_m).
    \label{eq:joint_mixture_objective}
\end{equation}
Its gradient is
\begin{equation}
\begin{aligned}
    \nabla_\theta J_k^\rho(\theta)
    &=
    (1-\rho)
    \mathbb{E}_{\tau\sim Q_0}
    \left[
        A_k^{\mathrm{aug}}(\tau,q_0)
        \nabla_\theta\log p_\theta(\tau\mid q_0)
    \right] \\
    &\quad+
    \rho
    \mathbb{E}_{\tau\sim Q_m}
    \left[
        A_k^{\mathrm{aug}}(\tau,q_m)
        \nabla_\theta\log p_\theta(\tau\mid q_m)
    \right].
\end{aligned}
\label{eq:joint_mixture_pg}
\end{equation}
Thus, memory retrieval affects the sampling distribution, while intrinsic
feedback affects return and advantage weights.

\begin{theorem}[Joint improvement from memory retrieval and intrinsic reward]
\label{thm:joint_memory_intrinsic}
Assume memory augmentation increases the probability of sampling a
\(\delta\)-valuable trajectory by at least \(\Delta_{\mathcal{M}}>0\):
\begin{equation}
    p_{k,\delta}^m
    \ge
    p_{k,\delta}^0+\Delta_{\mathcal{M}} .
    \label{eq:joint_memory_gain_assumption}
\end{equation}
Then
\begin{equation}
    p_{k,\delta}^\rho
    \ge
    p_{k,\delta}^0+\rho\Delta_{\mathcal{M}},
    \label{eq:joint_probability_gain}
\end{equation}
and
\begin{equation}
    \mathbb{E}_{\tau\sim Q_\rho}
    \left[
        R_k^{\mathrm{int}}(\tau)
    \right]
    \ge
    \delta
    \left(
        p_{k,\delta}^0+\rho\Delta_{\mathcal{M}}
    \right).
    \label{eq:joint_intrinsic_gain_bound}
\end{equation}
Therefore, relative to base-only sampling, the intrinsic-reward lower bound
improves by at least \(\rho\delta\Delta_{\mathcal{M}}\). For
\(\rho=1/2\), the gain is at least \(\delta\Delta_{\mathcal{M}}/2\).
\end{theorem}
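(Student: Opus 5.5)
The argument has two steps: a mixture identity for the probability of valuable trajectories, then the lower bound from Proposition~\ref{prop:joint_intrinsic_weight} applied to the mixture distribution.

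\textbf{Step 1: the probability gain.} Start from the mixture identity \eqref{eq:joint_mixture_probability}, which holds because $Q_\rho$ is a convex combination of $Q_0$ and $Q_m$. The probability of the event $\mathcal{V}_{k,\delta}(x)$ is linear in the measure, so
\[
p_{k,\delta}^\rho=(1-\rho)p_{k,\delta}^0+\rho p_{k,\delta}^m .
\]
Substitute the assumption \eqref{eq:joint_memory_gain_assumption}. Since $\rho\ge 0$, multiplying the inequality by $\rho$ preserves its direction, which gives
\[
p_{k,\delta}^\rho\ge(1-\rho)p_{k,\delta}^0+\rho\bigl(p_{k,\delta}^0+\Delta_{\mathcal{M}}\bigr)=p_{k,\delta}^0+\rho\Delta_{\mathcal{M}}.
\]
This is \eqref{eq:joint_probability_gain}.

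\textbf{Step 2: the intrinsic-reward bound.} Use the pointwise bound from the proof of Proposition~\ref{prop:joint_intrinsic_weight}:
\[
R_k^{\mathrm{int}}(\tau)\ge\delta\,\mathbf{1}\{\tau\in\mathcal{V}_{k,\delta}(x)\}
\]
for every $\tau$. This holds because $R_k^{\mathrm{int}}$ is nonnegative everywhere and is at least $\delta$ on $\mathcal{V}_{k,\delta}(x)$. The inequality is deterministic and does not depend on the sampling law. Taking expectation under $Q_\rho$ therefore yields $\mathbb{E}_{Q_\rho}[R_k^{\mathrm{int}}]\ge\delta p_{k,\delta}^\rho$. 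Chaining this with Step 1, and using $\delta>0$, gives \eqref{eq:joint_intrinsic_gain_bound}.

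\textbf{Final claims.} Under base-only sampling ($\rho=0$), the same argument gives the lower bound $\delta p_{k,\delta}^0$. Subtracting this from \eqref{eq:joint_intrinsic_gain_bound} shows the lower bound improves by $\rho\delta\Delta_{\mathcal{M}}$. Setting $\rho=1/2$ gives a gain of at least $\delta\Delta_{\mathcal{M}}/2$.

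\textbf{Main obstacle.} There is essentially none; the only point needing care is the potential score. The baseline $\Phi_x^{(k)}$ and the scores $\phi_{(x,\tau),k}$ must be treated as fixed functions of $\tau$ at iteration $k$. In particular, $\phi$ depends only on the trajectory, not on whether it was sampled under $q_0$ or $q_m$. This makes $\mathcal{V}_{k,\delta}(x)$ a single fixed event under both $Q_0$ and $Q_m$, so that linearity of expectation applies to it directly. I would state this as a standing convention at the start of the proof.
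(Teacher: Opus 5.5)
Your proposal is correct and follows the paper's proof essentially step for step: the mixture identity combined with the assumption gives the probability gain, and the pointwise bound $R_k^{\mathrm{int}}(\tau)\ge\delta\,\mathbf{1}\{\tau\in\mathcal{V}_{k,\delta}(x)\}$, integrated under $Q_\rho$, gives the intrinsic-reward bound. Your remark that $\phi_{(x,\tau),k}$ and $\Phi_x^{(k)}$ must be fixed functions of $\tau$, independent of which prompt generated it, makes explicit an assumption the paper uses silently when it writes Equation~\eqref{eq:joint_mixture_probability}.
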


\begin{proof}
By Equation~\eqref{eq:joint_mixture_probability} and
Equation~\eqref{eq:joint_memory_gain_assumption},
\[
\begin{aligned}
    p_{k,\delta}^{\rho}
    &=
    (1-\rho)p_{k,\delta}^0+\rho p_{k,\delta}^m \\
    &\ge
    (1-\rho)p_{k,\delta}^0
    +
    \rho
    \left(
        p_{k,\delta}^0+\Delta_{\mathcal{M}}
    \right) \\
    &=
    p_{k,\delta}^0+\rho\Delta_{\mathcal{M}}.
\end{aligned}
\]
Since every \(\delta\)-valuable trajectory receives intrinsic reward at least
\(\delta\), and all intrinsic rewards are nonnegative,
\[
    \mathbb{E}_{\tau\sim Q_\rho}
    \left[
        R_k^{\mathrm{int}}(\tau)
    \right]
    \ge
    \delta p_{k,\delta}^{\rho}
    \ge
    \delta
    \left(
        p_{k,\delta}^0+\rho\Delta_{\mathcal{M}}
    \right).
\]
\end{proof}

\paragraph{Internalization through shared parameters.}

Because base and memory-augmented prompts share policy parameters, a
high-advantage trajectory sampled under one prompt can increase the likelihood
of related behavior under the other prompt when their score gradients are
aligned.

Let \(\tau^+\in\mathcal{V}_{k,\delta}(x)\) be sampled under
\(q_a\in\{q_0,q_m\}\), and suppose
\(A_k^{\mathrm{aug}}(\tau^+,q_a)>0\). A single-sample update is
\begin{equation}
    \theta^+
    =
    \theta
    +
    \eta
    A_k^{\mathrm{aug}}(\tau^+,q_a)
    \nabla_\theta\log p_\theta(\tau^+\mid q_a),
    \label{eq:joint_single_sample_update}
\end{equation}
with learning rate \(\eta>0\). If \(\tau^+\) is extrinsically failed but
valuable, then \(R^{\mathrm{ext}}(\tau^+)=0\) and
\(R_k^{\mathrm{int}}(\tau^+)\ge\delta\), so it may still be reinforced whenever
its augmented advantage is positive.

\begin{proposition}[Internalization of memory-guided exploration]
\label{prop:joint_internalization}
Let \(\tau^+\in\mathcal{V}_{k,\delta}(x)\) be sampled under
\(q_a\in\{q_0,q_m\}\), and let \(q_b\in\{q_0,q_m\}\). Under
Equation~\eqref{eq:joint_single_sample_update},
\begin{equation}
\begin{aligned}
    \log p_{\theta^+}(\tau^+\mid q_b)
    -
    \log p_\theta(\tau^+\mid q_b)
    &=
    \eta
    A_k^{\mathrm{aug}}(\tau^+,q_a)
    \left\langle
        \nabla_\theta\log p_\theta(\tau^+\mid q_b),
        \nabla_\theta\log p_\theta(\tau^+\mid q_a)
    \right\rangle \\
    &\quad+
    O(\eta^2).
\end{aligned}
\label{eq:joint_internalization_change}
\end{equation}
Thus, if the inner product is positive, then for sufficiently small \(\eta\) the
update increases the likelihood of \(\tau^+\) under \(q_b\) to first order.
\end{proposition}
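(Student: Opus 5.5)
The plan is a first-order Taylor expansion of the map \(\theta\mapsto \log p_\theta(\tau^+\mid q_b)\) around the current parameter \(\theta\), evaluated at the updated parameter \(\theta^+\) from Equation~\eqref{eq:joint_single_sample_update}. Write the update direction as
\[
    v = A_k^{\mathrm{aug}}(\tau^+,q_a)\,\nabla_\theta\log p_\theta(\tau^+\mid q_a),
\]
so that \(\theta^+=\theta+\eta v\). Here \(\tau^+\), \(q_a\), and the advantage are all held fixed; the advantage is treated as a scalar computed from the sampled group and is not differentiated. We assume \(\log p_\theta(\tau^+\mid q_b)\) is twice continuously differentiable in \(\theta\) near \(\theta\). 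This holds for the autoregressive softmax parameterization on a trajectory of positive probability, since the log-likelihood is then a finite sum of smooth token log-probabilities.

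First, Taylor's theorem with Lagrange (or integral) remainder gives
\[
    \log p_{\theta+\eta v}(\tau^+\mid q_b)
    = \log p_\theta(\tau^+\mid q_b)
    + \eta\bigl\langle \nabla_\theta\log p_\theta(\tau^+\mid q_b),\, v\bigr\rangle
    + \tfrac{\eta^2}{2}\, v^\top H(\tilde\theta)\, v,
\]
where \(H\) is the Hessian of \(\log p_\cdot(\tau^+\mid q_b)\) and \(\tilde\theta\) lies on the segment between \(\theta\) and \(\theta+\eta v\).

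Second, substitute the definition of \(v\) and pull out the scalar advantage. The linear term becomes exactly \(\eta A_k^{\mathrm{aug}}(\tau^+,q_a)\langle \nabla\log p_\theta(\tau^+\mid q_b),\nabla\log p_\theta(\tau^+\mid q_a)\rangle\), which is the first-order term in Equation~\eqref{eq:joint_internalization_change}.

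The remaining step, and the only delicate one, is justifying that the remainder is \(O(\eta^2)\). For \(\eta\le 1\), the point \(\tilde\theta\) ranges over the compact segment \([\theta,\theta+v]\), and continuity of \(H\) bounds \(\|H(\tilde\theta)\|\) there by a constant \(K\). The remainder is then at most \(\tfrac{K}{2}\|v\|^2\eta^2\), with a constant independent of \(\eta\).

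For the concluding claim, suppose the inner product \(c\) is positive. Since \(A_k^{\mathrm{aug}}(\tau^+,q_a)>0\) by hypothesis, the first-order term is \(\eta A c>0\). Choosing \(\eta< 2Ac/(K\|v\|^2)\) makes the first-order term dominate the remainder, so the likelihood of \(\tau^+\) under \(q_b\) strictly increases. Nothing here depends on whether \(\tau^+\) is extrinsically successful: the membership \(\tau^+\in\mathcal{V}_{k,\delta}(x)\) enters only through the positivity of the advantage, as discussed before the proposition.
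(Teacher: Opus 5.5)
Your proposal is correct and follows the paper's own proof: a first-order Taylor expansion of $\theta\mapsto\log p_\theta(\tau^+\mid q_b)$ along the update direction, substitution of Equation~\eqref{eq:joint_single_sample_update}, and positivity of the advantage; your explicit Hessian bound additionally makes the paper's $O(\eta^2)$ remainder and its ``sufficiently small $\eta$'' threshold precise. One small tidy-up: you assume $C^2$ smoothness only near $\theta$ but then apply compactness on the whole segment $[\theta,\theta+v]$, so either assume smoothness on that segment or restrict to $\eta\le\eta_0$ so that the segment stays inside the neighborhood.
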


\begin{proof}
A first-order Taylor expansion gives
\[
\begin{aligned}
    \log p_{\theta^+}(\tau^+\mid q_b)
    &=
    \log p_\theta(\tau^+\mid q_b)
    +
    \left\langle
        \nabla_\theta\log p_\theta(\tau^+\mid q_b),
        \theta^+-\theta
    \right\rangle
    +
    O(\|\theta^+-\theta\|_2^2).
\end{aligned}
\]
Substituting Equation~\eqref{eq:joint_single_sample_update} proves
Equation~\eqref{eq:joint_internalization_change}. Since
\(A_k^{\mathrm{aug}}(\tau^+,q_a)>0\), positive alignment gives a positive
first-order change.
\end{proof}

\begin{corollary}[Expected transfer from memory-augmented to base prompting]
\label{cor:joint_expected_internalization}
Define
\[
    h_0(\tau)=\nabla_\theta\log p_\theta(\tau\mid q_0),
    \qquad
    h_m(\tau)=\nabla_\theta\log p_\theta(\tau\mid q_m).
\]
Suppose
\begin{equation}
    \mathbb{E}_{\tau\sim Q_m}
    \left[
        A_k^{\mathrm{aug}}(\tau,q_m)
        \left\langle h_0(\tau),h_m(\tau)\right\rangle
        \mathbf{1}\{\tau\in\mathcal{V}_{k,\delta}(x)\}
    \right]
    >
    0.
    \label{eq:joint_expected_alignment}
\end{equation}
Then, for sufficiently small \(\eta\), a policy-gradient update on
memory-augmented valuable trajectories increases their expected log-likelihood
under the base prompt to first order:
\begin{equation}
\begin{aligned}
    \mathbb{E}_{\tau\sim Q_m}
    \left[
        \left(
            \log p_{\theta^+}(\tau\mid q_0)
            -
            \log p_\theta(\tau\mid q_0)
        \right)
        \mathbf{1}\{\tau\in\mathcal{V}_{k,\delta}(x)\}
    \right]
    >
    0,
\end{aligned}
\label{eq:joint_expected_base_transfer}
\end{equation}
up to \(O(\eta^2)\) terms.
\end{corollary}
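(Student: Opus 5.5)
The idea is to apply Proposition~\ref{prop:joint_internalization} pointwise with $q_a=q_m$ and $q_b=q_0$, and then average over $\tau\sim Q_m$ restricted to $\mathcal{V}_{k,\delta}(x)$. The update in question is the batch (expected) policy-gradient step on memory-augmented valuable trajectories,
\[
    \theta^+=\theta+\eta\,\bar g,
    \qquad
    \bar g=\mathbb{E}_{\tau'\sim Q_m}\!\left[A_k^{\mathrm{aug}}(\tau',q_m)\,h_m(\tau')\,\mathbf{1}\{\tau'\in\mathcal{V}_{k,\delta}(x)\}\right].
\]
The single-sample version of Equation~\eqref{eq:joint_single_sample_update} is the special case where this expectation is replaced by one draw.

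First, for each fixed $\tau\in\mathcal{V}_{k,\delta}(x)$, Taylor-expand $\log p_{\theta^+}(\tau\mid q_0)$ around $\theta$, as in the proof of Proposition~\ref{prop:joint_internalization}. The first-order change is $\eta\langle h_0(\tau),\bar g\rangle$, and the remainder is $O(\eta^2\|\bar g\|^2)$. This needs the log-likelihood to be twice differentiable with a Hessian bounded uniformly over the relevant $\tau$ (e.g., finitely many trajectories of bounded length and a smooth softmax policy). Next, multiply by the indicator, take $\mathbb{E}_{\tau\sim Q_m}$, and use linearity. This gives
\[
    \eta\,\mathbb{E}_{\tau}\!\left[\langle h_0(\tau),\bar g\rangle\mathbf{1}\{\tau\in\mathcal{V}_{k,\delta}(x)\}\right]+O(\eta^2).
\]

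The main obstacle is that this expression is a double expectation $\mathbb{E}_\tau\mathbb{E}_{\tau'}[A(\tau')\langle h_0(\tau),h_m(\tau')\rangle]$ with cross terms $\tau\neq\tau'$. The hypothesis in Equation~\eqref{eq:joint_expected_alignment}, however, only controls the diagonal $\tau=\tau'$.

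I would handle this by reading the update the way the corollary is intended: each valuable trajectory receives its own per-sample step. That is, consider the expected first-order change under the single-sample update of Equation~\eqref{eq:joint_single_sample_update} with $q_a=q_m$ and $\tau^+=\tau\sim Q_m$. Under this reading, Proposition~\ref{prop:joint_internalization} gives, per $\tau$, exactly $\eta A_k^{\mathrm{aug}}(\tau,q_m)\langle h_0(\tau),h_m(\tau)\rangle+O(\eta^2)$. Averaging with the indicator then yields $\eta$ times the left side of Equation~\eqref{eq:joint_expected_alignment} plus $O(\eta^2)$.

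If the batched reading is preferred, I would either add the assumption that the cross terms are nonnegative or pass to the small-batch limit where the diagonal term dominates. I would state this explicitly rather than hide it.

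Finally, since the first-order coefficient is a strictly positive constant $c$ and the remainder is bounded by $C\eta^2$, choosing $\eta<c/C$ makes the total strictly positive. This gives Equation~\eqref{eq:joint_expected_base_transfer}. Uniformity of the $O(\eta^2)$ constant across $\tau$ is the only technical point, and it follows from the boundedness assumptions above together with $|A_k^{\mathrm{aug}}|$ being bounded, since both rewards lie in bounded ranges.
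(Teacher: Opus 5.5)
Your proposal is correct and, once you commit to the per-sample reading, it is exactly the paper's argument: apply Proposition~\ref{prop:joint_internalization} pointwise with \(q_a=q_m\), \(q_b=q_0\), multiply by the indicator, take the expectation over \(Q_m\), and note that the positive first-order term dominates the \(O(\eta^2)\) remainder for small \(\eta\). Two of your remarks go beyond what the paper's proof spells out and are both sound: the batched update would create cross terms that the hypothesis does not control, and the conclusion needs a remainder bound that is uniform in \(\tau\).
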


\begin{proof}
Apply Proposition~\ref{prop:joint_internalization} with \(q_a=q_m\) and
\(q_b=q_0\), multiply by
\(\mathbf{1}\{\tau\in\mathcal{V}_{k,\delta}(x)\}\), and take expectation over
\(\tau\sim Q_m\). The first-order term is positive by
Equation~\eqref{eq:joint_expected_alignment}; for sufficiently small \(\eta\),
it dominates the \(O(\eta^2)\) remainder.
\end{proof}

Overall, memory retrieval shifts sampling toward trajectories with higher
reflected potential, intrinsic numerical feedback upweights trajectories that
exceed the current capability baseline, and shared parameters allow
high-advantage memory-guided behaviors to transfer to the base prompt when score
gradients are aligned. Together, these mechanisms explain how \retroagent{}
improves exploration, credit assignment, and online adaptation under sparse
extrinsic rewards.

\clearpage
\section{Implementation Details}
\label{app:implementation_details}

\begin{table*}[th]
\caption{Default hyperparameters and training configurations for \textsc{RetroAgent} across all environments.}
\label{tab:hyperparameters}
\centering
\small
\begin{threeparttable}
\resizebox{\textwidth}{!}{
\begin{tabular}{lccp{4cm}}
\toprule
\textbf{Hyperparameter} & \textbf{\qwentwoseven{}} & \textbf{Llama-3.1-8B-Instruct} & \textbf{Description} \\
\midrule
\multicolumn{4}{l}{\textit{Training Configuration}} \\
\midrule
Training batch size & 16 & 16 & Accumulated batch size per update \\
Validation batch size & 128 & 128 & Batch size for validation \\
Learning rate & $10^{-6}$ & $10^{-6}$ & Optimizer learning rate \\
Max prompt length & 16\,384 & 16\,384 & Maximum input context length (tokens) \\
Max response length & 2\,048 & 2\,048 & Maximum generated response length (tokens) \\
Group size ($N$) & 8 & 8 & Number of rollouts per prompt \\
Total steps & 150\,/\,300 & 150\,/\,300 & Training epochs (150 for ALFWorld and WebShop; 300 for Sokoban and Minesweeper) \\
Evaluation frequency & 5 & 5 & Epochs between consecutive evaluations \\
\midrule
\multicolumn{4}{l}{\textit{Reward and Regularization}} \\
\midrule
Extrinsic reward ($R^{\text{ext}}$) & $\{0,\,10\}$ & $\{0,\,10\}$ & Scalar reward from the environment \\
Intrinsic reward ($R^{\text{int}}$) & $[0,\,1]$ & $[0,\,1]$ & Capability-evolution intrinsic reward \\
KL coefficient ($\beta$) & 0.01 & 0.01 & KL-divergence regularization weight \\
Discount factor ($\gamma$) & 0.95 & 0.95 & Discount factor for multi-step returns \\
\midrule
\multicolumn{4}{l}{\textit{Memory and Sampling}} \\
\midrule
Training temperature & 0.4 & 0.4 & Sampling temperature during rollouts \\
Validation temperature & 0.4 & 0.4 & Sampling temperature during validation \\
Initial utility score & 0.5 & 0.5 & Initial utility assigned to each memory entry \\
Utility smoothing ($\beta_{\text{util}}$) & 0.05 & 0.05 & Exponential moving average coefficient for utility updates \\
UCB exploration constant ($c$) & 1.0 & 1.0 & Exploration coefficient in UCB-based retrieval \\
Relevance--utility weight ($\alpha$) & 0.7 & 0.7 & Trade-off coefficient in retrieval scoring \\
Memory-augmented ratio & $1{:}1$ & $1{:}1$ & Ratio of memory-augmented to base rollouts \\
\midrule
\multicolumn{4}{l}{\textit{Self-Reflection (RL-Trained Variant)}} \\
\midrule
Reflection reward ($R^{\text{reflect}}$) & $\{0,\,10\}$ & $\{0,\,10\}$ & Scalar reward for reflection accuracy \\
Reflection weight ($\lambda_{\text{reflect}}$) & 1.0 & 1.0 & Weight of the self-reflection objective relative to the decision-making objective \\
\midrule
\multicolumn{4}{l}{\textit{Evaluation Configuration}} \\
\midrule
Evaluation temperature & 0.4 & 0.4 & Sampling temperature during evaluation \\
Max inference tokens & 2\,048 & 2\,048 & Maximum token budget per inference step \\
\bottomrule
\end{tabular}
}
\end{threeparttable}
\end{table*}


Detailed hyperparameter settings and training configurations are provided in Table~\ref{tab:hyperparameters}. We use \texttt{sentence-transformers/all-MiniLM-L6-v2}.\footnote{\url{https://huggingface.co/sentence-transformers/all-MiniLM-L6-v2}} All experiments were conducted on 4 NVIDIA H200 GPUs. 

\section{Superiority of Pairwise Induction over Single Induction}
\label{app:eval_pairwise_induction_single_induction}

We analyze reflection sequences generated during GRPO training augmented with either single-trajectory or pairwise-trajectory induction. Specifically, we sample four sequences per task prompt every 10 training steps, randomly selecting 800 sequences from this pool. Using GPT-4o as an external evaluator, we assess two metrics: the alignment between the agent's induced subtask completion scores and oracle outcomes (induced by GPT-4o), and the quality of the agent's summarized lessons (see Appendix~\ref{app:prompt_templates_analyzing_self_reflection} for evaluation prompts).

\begin{figure*}[th]
\centering
\begin{subfigure}[th]{0.48\linewidth}
    \centering
    \includegraphics[width=\linewidth]{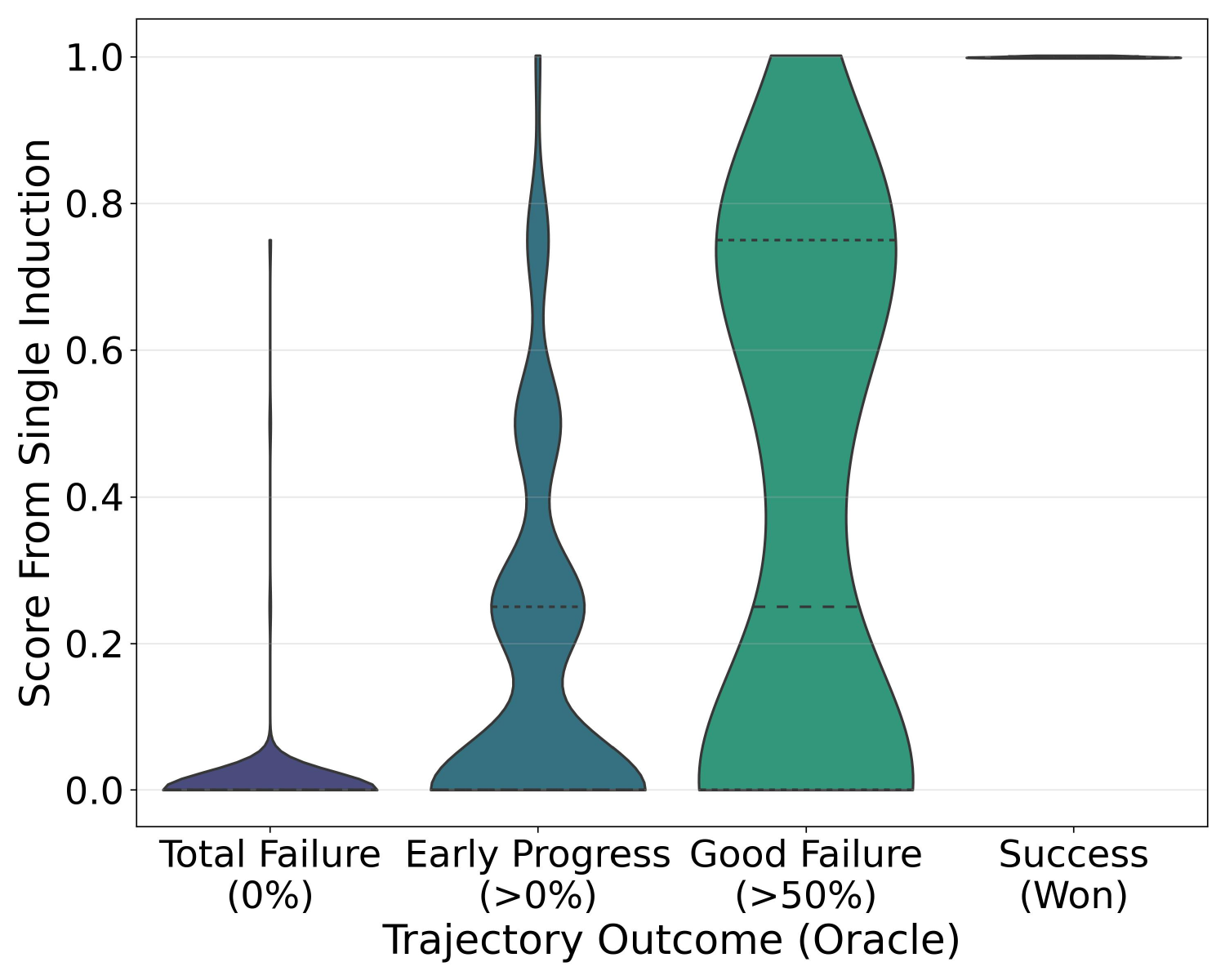}
    \caption{Single induction.}
    \label{fig:discri_single_induction}
\end{subfigure}
\hfill
\begin{subfigure}[th]{0.48\linewidth}
    \centering
    \includegraphics[width=\linewidth]{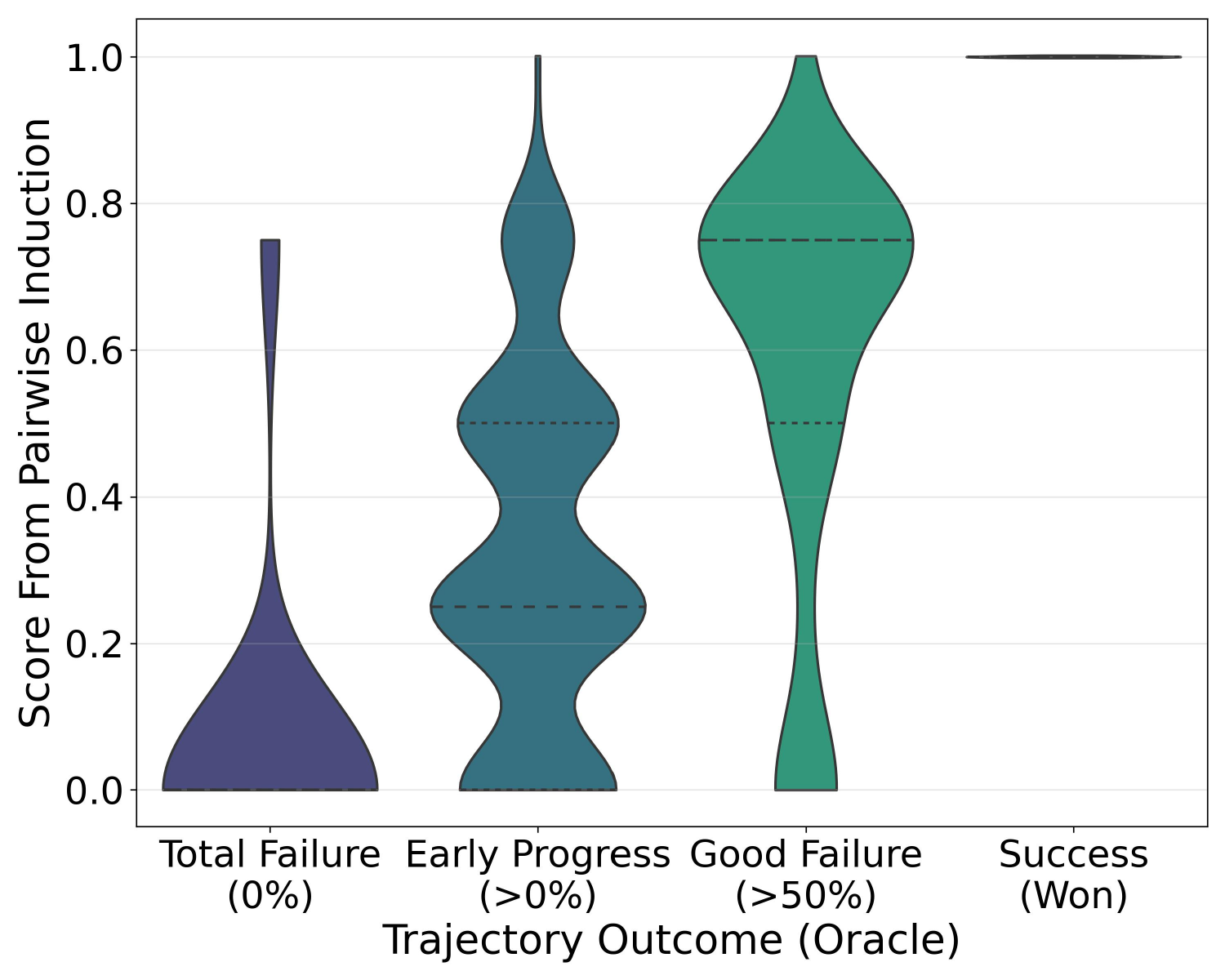}
    \caption{Pairwise induction.}
    \label{fig:discri_pairwise_induction}
\end{subfigure}
\caption{Comparison of intrinsic discrimination capability for Qwen-2.5-7B-Instruct using single \textit{vs.} pairwise induction, measured by subtask completion scores on WebShop.}
\label{fig:comparison_intrinsic_discrimination}
\end{figure*}

Figure~\ref{fig:comparison_intrinsic_discrimination} compares the ability of single- and pairwise-trajectory induction to discriminate trajectory quality within the training-free self-reflection mechanism. For this analysis, we define a successful trajectory as having a completion score of 1. While both methods correctly assign minimal scores to ``Total Failure'' cases, pairwise induction demonstrates superior discrimination for intermediate outcomes. For example, as shown in Figure~\ref{fig:discri_single_induction}, single induction struggles to distinguish ``Early Progress'' from ``Total Failure,'' resulting in a distribution heavily skewed toward zero for partial successes. In contrast, pairwise induction (Figure~\ref{fig:discri_pairwise_induction}) effectively recognizes partial progress, assigning distinctly higher scores to ``Early Progress'' trajectories and establishing a clear separation from total failures.

\clearpage

\section{Generated Trajectories by \retroagent{}}
\label{app:generated_trajectories}
\begin{table*}[ht]
\caption{Successful trajectory generated by \retroagent{} (in-context
    self-reflection) at training step~150 on the WebShop validation set.}
\label{tab:traj_retroagent}
\centering
\small 
\renewcommand{\arraystretch}{1.3} 
\begin{tabular}{p{0.95\textwidth}}
\toprule
\textbf{Instruction:} You are an expert autonomous agent operating in the WebShop e‑commerce environment. Your task is to: \textcolor{blue}{Find me officially licensed, machine wash men's t-shirts with polyester heathers, heathers cotton, cotton heather, needle sleeve, classic fit with color: pink, and fit type: youth, and size: small, and price lower than 50.00 dollars.} \\
\textbf{Past reflections on similar tasks:} \textcolor{purple}{Action Insight: The search terms used did not match the available products, leading to no suitable options being found. | Navigation Insight: Efficient use of filters could have potentially narrowed down the search, but the specific dimensions and color were not met in any of the results.} Warning: These lessons may be outdated. Use them only if they align with your current observation. \\
\midrule

\textbf{Observation 1:} Your current observation is: `Search'. Your admissible actions of the current situation are: \texttt{['search[<your query>]', 'click[search]']} Now it's your turn to take one action for the current step. You should first reason step-by-step about the current situation, then think carefully which admissible action best advances the shopping goal. This reasoning process MUST be enclosed within \textless{}think\textgreater{} \textless{}/think\textgreater{} tags. Once you've finished your reasoning, you should choose an admissible action for current step and present it within \textless{}action\textgreater{} \textless{}/action\textgreater{} tags. \\
\textbf{\retroagent{}:} \textcolor{orange}{\textless{}think\textgreater{} To achieve the goal of finding officially licensed, machine wash men's t-shirts with specific attributes, I need to construct a search query that includes all the required attributes... I will construct the search query to include all these attributes and then execute the search. \textless{}/think\textgreater{}} \textbf{\textit{Action:}} \textcolor{islamicgreen}{\texttt{search[officially licensed men's t-shirts polyester heathers heathers cotton cotton heather needle sleeve classic fit color pink fit type youth fit type size small price < 50.00]}} \\
\midrule

\textbf{Observation 2:} You are now at step 2 and your current observation is: 'Back to Search' [SEP] 'Page 1 (Total results: 50)' [SEP] 'Next $>$' [SEP] 'B09GL561XH' [SEP] 'Disney Mickey And Friends Happy Birthday Mickey Confetti T-Shirt' [SEP] '\$22.99' [SEP] 'B07XPR3R7N' [SEP] 'Marvel Avengers: Endgame Captain America America's Language T-Shirt' [SEP] '\$22.99' ... [SEP] 'B09Q8RD8YN' [SEP] 'Bungo Stray Anime Dogs Anime Character, Long Sleeve, Sweatshirt, Hoodie, T shirt' [SEP] '\$19.99'. \\
\textbf{\retroagent{}:} \textcolor{orange}{\textless{}think\textgreater{} The current page does not list any t-shirts that are officially licensed... The best action at this point is to click on the first product listed to see if it is the correct product... \textless{}/think\textgreater{}} \textit{\textbf{Action:}} \textcolor{islamicgreen}{\texttt{click[b09gl561xh]}} \\
\midrule

\textbf{Observation 3:} You are now at step 3... 'fit type' [SEP] 'men' [SEP] 'women' [SEP] 'youth' [SEP] 'color' [SEP] 'white' ... [SEP] 'pink' [SEP] 'size' [SEP] 'small' ... 'Disney Mickey And Friends Happy Birthday Mickey Confetti T-Shirt' [SEP] 'Price: \$22.99'. \\
\textbf{\retroagent{}:} \textcolor{orange}{\textless{}think\textgreater{} The current page is not displaying the results for the search query that was previously conducted... The fit type 'youth' is one of the specific fit types mentioned in the search query, so I will click on 'youth' to ensure that the filter is set correctly. \textless{}/think\textgreater{}} \textit{\textbf{Action:}} \textcolor{islamicgreen}{\texttt{click[youth]}} \\
\midrule

\textbf{Observation 4:} You are now at step 4... 'fit type' [SEP] 'men' [SEP] 'women' [SEP] 'youth' [SEP] 'color' ... [SEP] 'pink' [SEP] 'size' [SEP] 'small' ... 'Disney Mickey And Friends Happy Birthday Mickey Confetti T-Shirt' [SEP] 'Price: \$22.99'. \\
\textbf{\retroagent{}:} \textcolor{orange}{\textless{}think\textgreater{} The current situation shows that the search has narrowed down to a specific product... The next step is to select the color pink to ensure that the color is officially licensed and meets the criteria. \textless{}/think\textgreater{}} \textit{\textbf{Action:}} \textcolor{islamicgreen}{\texttt{click[pink]}} \\
\midrule

\textbf{Observation 5:} You are now at step 5... 'Disney Mickey And Friends Happy Birthday Mickey Confetti T-Shirt' [SEP] 'Price: \$22.99' ... \\
\textbf{\retroagent{}:} \textcolor{orange}{\textless{}think\textgreater{} The current situation shows that the search has narrowed down to a specific product... with the color being pink. The size is not yet selected... The next step is to select the size small to confirm that it meets the criteria. \textless{}/think\textgreater{}} \textit{\textbf{Action:}} \textcolor{islamicgreen}{\texttt{click[small]}} \\
\midrule

\textbf{Observation 6:} You are now at step 6... 'Disney Mickey And Friends Happy Birthday Mickey Confetti T-Shirt' [SEP] 'Price: \$22.99' ... \\
\textbf{\retroagent{}:} \textcolor{orange}{\textless{}think\textgreater{} The current situation shows that the search has narrowed down to a specific product... The price is \$22.99, which is within the budget... I can proceed with clicking on 'Buy Now' to confirm the selection. \textless{}/think\textgreater{}} \textit{\textbf{Action:}} \textcolor{islamicgreen}{\texttt{click[buy now]}} \\

\bottomrule
\end{tabular}
\end{table*}

\begin{table*}[ht]
\caption{Failed trajectory generated by \retroagent{} (in-context
    self-reflection) at training step~65 on the WebShop validation set.}
\label{tab:traj_grpo}
\centering
\small 
\renewcommand{\arraystretch}{1.3} 
\begin{tabular}{p{0.95\textwidth}}
\toprule
\textbf{Instruction:} You are an expert autonomous agent operating in the WebShop e‑commerce environment. Your task is to: \textcolor{blue}{Find me officially licensed, machine wash men's t-shirts with polyester heathers, heathers cotton, cotton heather, needle sleeve, classic fit with color: pink, and fit type: youth, and size: small, and price lower than 50.00 dollars.} \\
\midrule

\textbf{Observation 1:} Your current observation is: `Search'. Your admissible actions of the current situation are: \texttt{['search[<your query>]', 'click[search]']}. Now it's your turn to take one action for the current step. You should first reason step-by-step about the current situation, then think carefully which admissible action best advances the shopping goal. This reasoning process MUST be enclosed within \textless{}think\textgreater{} \textless{}/think\textgreater{} tags. Once you've finished your reasoning, you should choose an admissible action for current step and present it within \textless{}action\textgreater{} \textless{}/action\textgreater{} tags. \\
\textbf{GRPO:} \textcolor{orange}{\textless{}think\textgreater{} To achieve the goal of finding officially licensed, machine wash men's t-shirts with specific attributes, I need to construct a search query that includes all the required attributes... I will construct the search query to include all these attributes and then execute the search. \textless{}/think\textgreater{}} \textit{\textbf{Action:}} \textcolor{islamicgreen}{\texttt{search[officially licensed men's t-shirts polyester heathers heathers cotton cotton heather needle sleeve classic fit color pink fit type youth fit type size small price < 50.00]}} \\
\midrule

\textbf{Observation 2:} You are now at step 2 and your current observation is: 'Back to Search' [SEP] 'Page 1 (Total results: 50)' [SEP] 'Next >' [SEP] 'B08HF132PN' [SEP] 'Flamingo Pink waterbird costume Gift Premium T-Shirt' [SEP] '\$19.99' [SEP] 'B07JVVDJ6L' [SEP] 'Canadian Tuxedo T Shirt - Novelty Denim Tux T-Shirt' [SEP] '\$19.99' [SEP] 'B09GL561XH' [SEP] 'Disney Mickey And Friends Happy Birthday Mickey Confetti T-Shirt' [SEP] '\$22.99' ... [SEP] 'B07HRFSNL4' [SEP] 'Nautica Men's Solid Crew Neck Short-Sleeve Pocket T-Shirt' [SEP] '\$16.05 to \$40.98'. \\
\textbf{GRPO:} \textcolor{orange}{\textless{}think\textgreater{} The current observation indicates that the search results are being displayed, and the first result is "B08HF132PN" with the title "Flamingo Pink waterbird costume Gift Premium T-Shirt" and a price of \$19.99. However, this product does not match the specific criteria we are looking for, as it is a costume t-shirt rather than a men's t-shirt, and it is not officially licensed. Given the current search results, the next logical step is to click on the first result to see if it matches the criteria... The most appropriate action here is to click on the first result, "B08HF132PN", to check if it matches the criteria. \textless{}/think\textgreater{}} \textit{\textbf{Action:}} \textcolor{red}{\texttt{click[b08hf132pn]}} \\
\midrule

\textbf{Observation 3:} You are now at step 3 and your current observation is: 'Back to Search' [SEP] '< Prev' [SEP] 'fit type' [SEP] 'men' [SEP] 'women' [SEP] 'youth' [SEP] 'size' [SEP] 'small' ... 'Flamingo Pink waterbird costume Gift Premium T-Shirt' [SEP] 'Price: \$19.99' [SEP] 'Rating: N.A.' [SEP] 'Description' [SEP] 'Features' [SEP] 'Reviews' [SEP] 'Buy Now'. \\
\textbf{GRPO:} \textcolor{orange}{\textless{}think\textgreater{} 1. The current observation shows that the search results are being displayed... 2. The product does not match the search criteria... 3. The next step is to refine the search results to ensure that the search criteria are met. The most appropriate action here is to click on "youth" to filter the results to show only products that are intended for youth. \textless{}/think\textgreater{}} \textit{\textbf{Action:}} \textcolor{islamicgreen}{\texttt{click[youth]}} \\
\midrule

\textbf{Observation 4:} You are now at step 4... 'fit type' [SEP] 'men' [SEP] 'women' [SEP] 'youth' [SEP] 'size' [SEP] 'small' ... 'Flamingo Pink waterbird costume Gift Premium T-Shirt' [SEP] 'Price: \$19.99' ... \\
\textbf{GRPO:} \textcolor{orange}{\textless{}think\textgreater{} The current observation shows that the search results are being displayed, and the last action was to click on "youth" fit type... However, the product does not match the criteria we are looking for... Given the current situation, the best action to take is to click on "small" size, as we are looking for a men's t-shirt with specific attributes, and the size is part of the search criteria. \textless{}/think\textgreater{}} \textit{\textbf{Action:}} \textcolor{red}{\texttt{click[small]}} \\
\midrule

\textbf{Observation 5:} You are now at step 5... 'Flamingo Pink waterbird costume Gift Premium T-Shirt' [SEP] 'Price: \$19.99' ... \\
\textbf{GRPO:} \textcolor{orange}{\textless{}think\textgreater{} The current observation indicates that the search results are filtered by fit type, size, and price. The last action was to click on "small" size... However, this product does not match the specific criteria we are looking for... Given the current search results, the best action to take is to click on "Buy Now" for the current product to see if it meets the criteria or if it is a close match. \textless{}/think\textgreater{}} \textit{\textbf{Action:}} \textcolor{red}{\texttt{click[Buy Now]}} \\

\bottomrule
\end{tabular}
\end{table*}
\section{Task Prompts}
\label{app:task_prompts}
\subsection{Prompt Templates for In-Context Self-Reflection}
\label{app:prompt_templates_self_reflection}

\begin{tcolorbox}[
  colframe=blue!75!black,
  colback=blue!5,
  title=Prompt for Single Induction on Webshop,
  coltitle=white,
  fonttitle=\bfseries
]
You are an expert evaluating a WebShop shopping attempt.\\
Your task is to: \{task\_description\}

You have just completed an attempt at this shopping task. The task was \{success\} completed.

Trajectory of the attempt:\\
\{current\_trajectory\}

\textless think\textgreater\\
Given the task outcome, analyze the trajectory to understand:
\begin{enumerate}
    \item What subtasks were attempted? (search, filter, select, purchase)
    \item Which subtasks succeeded vs failed based on the observations?
    \item What specific actions or decisions led to this outcome?
    \item What are the 1-2 most valuable lessons from this attempt?
\end{enumerate}
\textless/think\textgreater

Output your evaluation as JSON:
\begin{verbatim}
{
"subtasks": [
{"name": "search_product", "description": "[describe actual search]", 
 "status": "[completed or incomplete]"},
{"name": "apply_filters", "description": "[describe filters used]", 
 "status": "[completed or incomplete]"},
{"name": "select_item", "description": "[describe selection]", 
 "status": "[completed or incomplete]"},
{"name": "complete_purchase", "description": "[describe purchase]", 
 "status": "[completed or incomplete]"}
],
"task_success": [true if successfully completed, false if unsuccessfully 
completed],
"action_lesson": "[key action insight, e.g., 'Precise search with brand+model 
found exact match' OR 'Generic search missed required features']",
"navigation_lesson": "[navigation insight, e.g., 'Efficient use of filters 
saved time' OR 'Failed to check additional pages with better options']"
}
\end{verbatim}

EVALUATION GUIDELINES:
\begin{itemize}
    \item The task outcome has been provided - use it to set task\_success accordingly
    \item Focus on WHY the attempt had this outcome:
    \begin{itemize}
        \item If successful: What strategies worked well?
        \item If unsuccessful: What went wrong and where?
    \end{itemize}
    \item Each subtask status must reflect actual trajectory events
    \item Lessons should explain factors that led to the outcome
    \item Reference specific elements from trajectory (item IDs, pages, search terms)
    \item Use null for lessons only if truly not applicable
\end{itemize}

Output ONLY the JSON evaluation.
\end{tcolorbox}

\begin{tcolorbox}[
  colframe=blue!75!black,
  colback=blue!5,
  title=Prompt for Pairwise Induction on Webshop,
  coltitle=white,
  fonttitle=\bfseries
]
You are an expert evaluating a WebShop shopping attempt.\\
Your task is to: \{task\_description\}

You have just completed an attempt at this shopping task. The task was \{success\} completed.

\{reference\_trajectory\}

Trajectory of the attempt:\\
\{current\_trajectory\}

\textless think\textgreater\\
If a reference trajectory exists, compare it with the current trajectory.\\
Given the task outcome, analyze the trajectory to understand:
\begin{enumerate}
    \item What subtasks were attempted? (search, filter, select, purchase)
    \item Which subtasks succeeded vs failed based on the observations?
    \item What specific actions or decisions led to this outcome?
    \item What are the 1-2 most valuable lessons from this attempt?
\end{enumerate}
\textless/think\textgreater

Output your evaluation as JSON:
\begin{verbatim}
{
"subtasks": [
{"name": "search_product", "description": "[describe actual search]", 
 "status": "[completed or incomplete]"},
{"name": "apply_filters", "description": "[describe filters used]", 
 "status": "[completed or incomplete]"},
{"name": "select_item", "description": "[describe selection]", 
 "status": "[completed or incomplete]"},
{"name": "complete_purchase", "description": "[describe purchase]", 
 "status": "[completed or incomplete]"}
],
"task_success": [true if successfully completed, false if unsuccessfully 
completed],
"action_lesson": "[key action insight, e.g., 'Precise search with brand+model 
found exact match' OR 'Generic search missed required features']",
"navigation_lesson": "[navigation insight, e.g., 'Efficient use of filters 
saved time' OR 'Failed to check additional pages with better options']"
}
\end{verbatim}

EVALUATION GUIDELINES:
\begin{itemize}
    \item The task outcome has been provided - use it to set task\_success accordingly
    \item Focus on WHY the attempt had this outcome:
    \begin{itemize}
        \item If successful: What strategies worked well?
        \item If unsuccessful: What went wrong and where?
    \end{itemize}
    \item Each subtask status must reflect actual trajectory events
    \item Lessons should explain factors that led to the outcome
    \item Reference specific elements from trajectory (item IDs, pages, search terms)
    \item Use null for lessons only if truly not applicable
\end{itemize}

Output ONLY the JSON evaluation.
\end{tcolorbox}

\begin{tcolorbox}[
  colframe=blue!75!black,
  colback=blue!5,
  title=Prompt for Pairwise Induction on Alfworld,
  coltitle=white,
  fonttitle=\bfseries
]
You are an expert evaluating an ALFRED Embodied Environment task attempt.\\
Your task is to: \{task\_description\}

You have just completed an attempt at this task. The task was \{success\} completed.

\{reference\_trajectory\}

Trajectory of the attempt:\\
\{current\_trajectory\}

\textless think\textgreater\\
If a reference trajectory exists, compare it with the current trajectory.\\
Given the task outcome, analyze the trajectory to understand:
\begin{enumerate}
    \item What subtasks were attempted? (pick up, navigate, use appliance, place object)
    \item Which subtasks succeeded vs failed based on the observations?
    \item What specific actions or decisions led to this outcome?
    \item What is the most valuable lesson from this attempt?
\end{enumerate}
\textless/think\textgreater

Output your evaluation as JSON:
\begin{verbatim}
{
"subtasks": [
{"name": "pick_up_object", "description": "[describe pickup action, e.g., 
 'Pick up mug from countertop']", "status": "[completed or incomplete]"},
{"name": "navigate_to_location", "description": "[describe navigation, e.g., 
 'Go to microwave 1']", "status": "[completed or incomplete]"},
{"name": "use_appliance", "description": "[describe appliance use, e.g., 
 'Heat mug in microwave']", "status": "[completed or incomplete]"},
{"name": "place_object", "description": "[describe placement, e.g., 
 'Place heated mug in cabinet']", "status": "[completed or incomplete]"}
],
"task_success": [true if successfully completed task goal, false if failed],
"action_lesson": "[key action insight, e.g., 'Attempted to place mug 1 
 directly in cabinet 2 without heating - must use microwave 1 first' OR 
 'Successfully found knife in drawer 3 after checking wrong locations']",
"navigation_lesson": "[spatial insight, e.g., 'Microwave 1 located in 
 kitchen area, not near cabinets' OR 'Multiple sinkbasins exist - must 
 check all for target object']"
}
\end{verbatim}

EVALUATION GUIDELINES:
\begin{itemize}
    \item The task outcome has been provided - use it to set task\_success accordingly
    \item Focus on WHY the attempt had this outcome:
    \begin{itemize}
        \item If successful: What sequence or strategy worked well?
        \item If unsuccessful: What step failed or was missed?
    \end{itemize}
    \item Each subtask status must reflect actual trajectory events
    \item Lessons should explain factors that led to the outcome
    \item Reference specific elements from trajectory (object IDs, locations, appliances)
    \item Use null for lessons only if truly not applicable
\end{itemize}

Output ONLY the JSON evaluation.
\end{tcolorbox}

\begin{tcolorbox}[
  colframe=blue!75!black,
  colback=blue!5,
  title=Prompt for Pairwise Induction on Minesweeper (1/2),
  coltitle=white,
  fonttitle=\bfseries
]
You are an expert evaluating a Minesweeper game attempt.\\
Task Requirements: Reveal all non-mine cells on a \{board\_size\}x\{board\_size\} board with \{n\_mines\} mines without detonating any mine.

You have just completed an attempt at this Minesweeper game. The game was \{success\} completed.

\{reference\_trajectory\}

Current Trajectory of the attempt:\\
\{current\_trajectory\}

\textless think\textgreater\\
If a reference trajectory exists, compare it with the current trajectory.\\
Analyze the current trajectory to determine:
\begin{enumerate}
    \item Which subtasks were attempted and their completion status
    \item Specific actions/decisions that caused the outcome
    \item What went wrong (if failed) or right (if succeeded)
    \item Devise a concise, new plan of action that accounts for any mistakes with reference to specific actions that should be taken in the next trial
\end{enumerate}

Game notation for reference:
\begin{itemize}
    \item Cell states: ? (unopened), . (blank/no neighbors), 1-8 (mine count), * (mine)
    \item Coordinates: rows/columns indexed 1 to \{board\_size\}
    \item Valid actions: (row, col) where $1 \le \text{row,col} \le$ \{board\_size\}
    \item Blank cells auto-cascade to reveal connected blanks + borders
\end{itemize}

Subtask Completion Criteria (binary evaluation for failed trajectories too):
\begin{itemize}
    \item valid\_moves: COMPLETED if made at least 2 valid format moves; INCOMPLETE if mostly invalid actions
    \item exploration\_progress: COMPLETED if revealed >10\% of board; INCOMPLETE if revealed <10\%
    \item logical\_attempt: COMPLETED if attempted any deduction (even if wrong); INCOMPLETE if only random/invalid moves
    \item error\_recovery: COMPLETED if corrected any error within 3 attempts; INCOMPLETE if repeated same errors
    \item cascade\_usage: COMPLETED if triggered or attempted any cascade; INCOMPLETE if only single cell reveals
    \item systematic\_approach: COMPLETED if showed any pattern in move selection; INCOMPLETE if purely random
\end{itemize}
\textless/think\textgreater
\end{tcolorbox}

\begin{tcolorbox}[
  colframe=blue!75!black,
  colback=blue!5,
  title=Prompt for Pairwise Induction on Minesweeper (2/2),
  coltitle=white,
  fonttitle=\bfseries
]
Required JSON Output:
\begin{verbatim}
{
  "subtasks": [
    {"name": "valid_moves", "description": "[e.g., 'Made 5 valid moves like 
     (1,1), (2,3)' or 'Only invalid formats like (-1,-1)']", 
     "status": "[completed/incomplete]"},
    {"name": "exploration_progress", "description": "[e.g., 'Revealed 15 
     cells (25% of board)' or 'Only revealed 2 cells']", 
     "status": "[completed/incomplete]"},
    {"name": "logical_attempt", "description": "[e.g., 'Tried to use 
     (3,3)=1 constraint' or 'No deduction attempts']", 
     "status": "[completed/incomplete]"},
    {"name": "error_recovery", "description": "[e.g., 'Fixed format after 
     2 attempts' or 'Repeated invalid action 10 times']", 
     "status": "[completed/incomplete]"},
    {"name": "cascade_usage", "description": "[e.g., '(1,1) triggered 
     8-cell cascade' or 'No cascade attempts']", 
     "status": "[completed/incomplete]"},
    {"name": "systematic_approach", "description": "[e.g., 'Checked corners 
     first' or 'Random clicking']", "status": "[completed/incomplete]"}
  ],
  "trajectory_value": [count of completed subtasks out of 6],
  "task_success": [true if successfully completed, false if unsuccessfully 
   completed],
  "next_priority": "[Most important fix, e.g., 'Use valid (row,col) format' 
   or 'When cell shows 1, count unopened neighbors']"
}
\end{verbatim}

Evaluation Rules:
\begin{itemize}
    \item Award COMPLETED for ANY positive demonstration, even in failed games
    \item valid\_moves: Just need 2+ correctly formatted moves anywhere in trajectory
    \item exploration\_progress: 10\% is roughly 6 cells on 8x8 board - achievable even if hit mine
    \item logical\_attempt: Credit for trying logic, even if conclusion was wrong
    \item error\_recovery: Credit for any correction, even if made new errors later
    \item cascade\_usage: Credit for choosing corners/edges that could cascade
    \item systematic\_approach: Credit for any non-random pattern in moves
    \item trajectory\_value helps distinguish quality among failed attempts (0-6 scale)
\end{itemize}

Output JSON only.
\end{tcolorbox}

\begin{tcolorbox}[
  colframe=blue!75!black,
  colback=blue!5,
  title=Prompt for Pairwise Induction on Sokoban (1/2),
  coltitle=white,
  fonttitle=\bfseries
]
You are an expert evaluating a Sokoban game attempt.\\
Task Requirements: Push all boxes ('X') onto target spots ('O') in the grid without getting them stuck against walls ('\#') or in corners.

You have just completed an attempt at this Sokoban level. The game was \{success\} completed.

\{reference\_trajectory\}

Current Trajectory of the attempt:\\
\{current\_trajectory\}

\textless think\textgreater

If a reference trajectory exists, compare it with the current trajectory.\\
Given the task outcome, analyze the trajectory to understand:
\begin{enumerate}
    \item Which subtasks were attempted and their completion status
    \item Specific actions/decisions that caused the outcome
    \item What went wrong (if failed) or right (if succeeded)
    \item Devise a concise, new plan of action that accounts for any mistakes with reference to specific actions that should be taken in the next trial
\end{enumerate}

Game notation for reference:
\begin{itemize}
    \item Symbols: \# (wall), \_ (floor), O (target), X (box), P (player), $\surd$ (box on target)
    \item Coordinates: (row, col)
    \item Valid actions: ["up", "down", "left", "right"]
    \item Rules: Push only (no pull), one box at a time, walls block movement.
\end{itemize}

Subtask Completion Criteria (binary evaluation for failed trajectories too):
\begin{itemize}
    \item valid\_moves: COMPLETED if made at least 2 valid directional moves; INCOMPLETE if mostly invalid formats/hallucinations
    \item navigation\_logic: COMPLETED if player successfully navigated to a box; INCOMPLETE if stuck hitting walls/looping
    \item box\_interaction: COMPLETED if at least one box was pushed to a new coordinate; INCOMPLETE if no boxes moved
    \item deadlock\_avoidance: COMPLETED if avoided pushing boxes into unrecoverable corners/walls; INCOMPLETE if immediate deadlock created
    \item goal\_progress: COMPLETED if at least one box was placed on a target; INCOMPLETE if 0 boxes on targets
    \item systematic\_approach: COMPLETED if moves showed clear intent (e.g., moving behind a box to push); INCOMPLETE if random walking
\end{itemize}
\textless/think\textgreater
\end{tcolorbox}

\begin{tcolorbox}[
  colframe=blue!75!black,
  colback=blue!5,
  title=Prompt for Pairwise Induction on Sokoban (2/2),
  coltitle=white,
  fonttitle=\bfseries
]
Required JSON Output:
\begin{verbatim}
{
"subtasks": [
{"name": "valid_moves", "description": "[e.g., 'Outputted valid directions 
 like up, down' or 'Used invalid commands']", 
 "status": "[completed/incomplete]"},
{"name": "navigation_logic", "description": "[e.g., 'Reached box at (3,2)' 
 or 'Walked into wall at (1,1) repeatedly']", 
 "status": "[completed/incomplete]"},
{"name": "box_interaction", "description": "[e.g., 'Pushed box from (2,2) 
 to (2,3)' or 'No boxes moved']", "status": "[completed/incomplete]"},
{"name": "deadlock_avoidance", "description": "[e.g., 'Kept boxes away from 
 corners' or 'Pushed box into corner (1,1)']", 
 "status": "[completed/incomplete]"},
{"name": "goal_progress", "description": "[e.g., '1/3 boxes placed on target' 
 or 'No boxes on targets']", "status": "[completed/incomplete]"},
{"name": "systematic_approach", "description": "[e.g., 'Cleared path for 
 second box' or 'Random movement']", "status": "[completed/incomplete]"}
],
"trajectory_value": [count of completed subtasks out of 6],
"task_success": [true if successfully completed, false if unsuccessfully 
 completed],
"next_priority": "[Most important fix, e.g., 'Don't push box into corner at 
 (1,1)' or 'Move to (2,3) to push down']"
}
\end{verbatim}

Evaluation Rules:
\begin{itemize}
    \item Award COMPLETED for ANY positive demonstration, even in failed games
    \item valid\_moves: Just need 2+ correctly formatted actions
    \item navigation\_logic: Credit for traversing the map without getting stuck on walls immediately
    \item box\_interaction: Credit for changing the state of the board (moving a box)
    \item deadlock\_avoidance: Credit if the first box move didn't result in an immediate game-over state
    \item goal\_progress: Credit for securing at least one objective, even if others failed
    \item systematic\_approach: Credit for positioning the player specifically to push a box
    \item trajectory\_value helps distinguish quality among failed attempts (0-6 scale)
\end{itemize}

Output JSON only.
\end{tcolorbox}


\subsection{Prompt Templates for RL-Trained Self-Reflection}

\begin{tcolorbox}[
  colframe=blue!75!black,
  colback=blue!5,
  title=Prompt for Pairwise Induction on Webshop,
  coltitle=white,
  fonttitle=\bfseries
]
You are an expert evaluating a WebShop shopping attempt.\\
Target Task: \{task\_description\}

You have just completed an attempt at this shopping task.\\
Trajectory of the attempt:\\
\{current\_trajectory\}

\textless think\textgreater\\
If a reference trajectory exists, compare it with the current trajectory.\\
Analyze the trajectory to determine if the task was successful:
\begin{enumerate}[nosep]
    \item Identify the specific requirements in the `Target Task' (attributes, type, options).
    \item Examine the final action in the trajectory. Did it end in a `click[buy]'?
    \item If a purchase was made, compare the purchased item's details against the `Target Task' requirements.
    \item Did the purchased item match ALL requirements? (If no purchase was made, it is a failure).
    \item What specific actions or decisions led to this outcome?
    \item What are the 1-2 most valuable lessons from this attempt?
\end{enumerate}
\textless/think\textgreater

Output your evaluation as JSON:

\begin{verbatim}
{
"subtasks": [
{"name": "search_product", "description": "[describe actual search]", 
 "status": "[completed or incomplete]"},
{"name": "apply_filters", "description": "[describe filters used]", 
 "status": "[completed or incomplete]"},
{"name": "select_item", "description": "[describe selection]", 
 "status": "[completed or incomplete]"},
{"name": "complete_purchase", "description": "[describe purchase]", 
 "status": "[completed or incomplete]"}
],
"task_success": [true if the correct item was purchased, false otherwise],
"action_lesson": "[key action insight, e.g., 'Precise search with brand+model 
found exact match' OR 'Generic search missed required features']",
"navigation_lesson": "[navigation insight, e.g., 'Efficient use of filters 
saved time' OR 'Failed to check additional pages with better options']"
}
\end{verbatim}

EVALUATION GUIDELINES:
\begin{itemize}
    \item \textbf{Determine Success Yourself:} You must judge `task\_success' by comparing the purchased item in the trajectory to the Target Task.
    \item \textbf{Criteria for Success:} The task is ONLY true if the agent successfully clicked `buy' on an item that matches all required attributes (color, size, flavor, etc.).
    \item \textbf{Criteria for Failure:} If the trajectory ends without a purchase, or if the wrong item was bought, `task\_success' is false.
    \item Each subtask status must reflect actual trajectory events.
    \item Lessons should explain factors that led to the outcome.
    \item Reference specific elements from trajectory (item IDs, pages, search terms).
    \item Use null for lessons only if truly not applicable.
\end{itemize}

Output ONLY the JSON evaluation.
\end{tcolorbox}

\begin{tcolorbox}[
  colframe=blue!75!black,
  colback=blue!5,
  title=Prompt for Pairwise Induction on ALFWorld (1/2),
  coltitle=white,
  fonttitle=\bfseries,
  arc=2mm,
  boxrule=1.5pt
]

You are an expert evaluating an ALFWorld embodied agent attempt.

\textbf{Target Task:} \{task\_description\}

\vspace{0.5em}
You have just completed an attempt at this household task.

\textbf{Trajectory of the attempt:}
\newline
\{current\_trajectory\}

\vspace{1em}
\textbf{\textless think\textgreater}
\begin{enumerate}[leftmargin=*, nosep]
    \item If a reference trajectory exists, compare it with the current trajectory.
    \item Analyze the trajectory to determine if the task was successful:
    \begin{enumerate}
        \item Identify the specific requirements in the `Target Task' (target object, required state change, final destination).
        \item Examine the sequence of actions. Did the agent successfully locate the correct object?
        \item If a state change was required (clean, heat, cool, slice), was the correct appliance or tool used?
        \item Did the agent place the object in the correct final receptacle?
        \item Did the trajectory end with the `stop' action after achieving the goal state? (If the agent stopped prematurely or failed to stop, it is a failure).
        \item What specific actions or decisions led to this outcome?
        \item What are the 1-2 most valuable lessons from this attempt?
    \end{enumerate}
\end{enumerate}
\textbf{\textless /think\textgreater}

\vspace{1em}
Output your evaluation as JSON:

\begin{verbatim}
{{
"subtasks": [
{{"name": "locate_object", "description": "[describe search for target
object]", "status": "[completed or incomplete]"}},
{{"name": "acquire_object", "description": "[describe picking up target]",
"status": "[completed or incomplete]"}},
{{"name": "modify_state", "description": "[describe heating/cleaning
/cooling/slicing if applicable, else 'N/A']", 
"status": "[completed, incomplete, or N/A]"}},
{{"name": "place_object", "description": "[describe final placement]",
"status": "[completed or incomplete]"}}
],
"task_success": [true if the goal state was achieved and 'stop'
was called, false otherwise],
"action_lesson": "[key action insight, e.g., 'Used microwave to heat
apple instead of fridge' OR 'Failed to slice bread before plating']",
"navigation_lesson": "[spatial/search insight, e.g., 'systematically
checked all cabinet receptacles' OR 'wasted steps revisiting empty drawers']"
}}
\end{verbatim}
\end{tcolorbox}

\begin{tcolorbox}[
  colframe=blue!75!black,
  colback=blue!5,
  title=Prompt for Pairwise Induction on ALFWorld (2/2),
  coltitle=white,
  fonttitle=\bfseries,
  arc=2mm,
  boxrule=1.5pt
]

\textbf{EVALUATION GUIDELINES:}
\begin{itemize}[leftmargin=*]
    \item \textbf{Determine Success Yourself:} You must judge `task\_success' by comparing the final state in the trajectory to the Target Task.
    \item \textbf{Criteria for Success:} The task is ONLY true if the agent manipulated the correct object, achieved the correct state (e.g., hot, clean), placed it in the correct target, and issued the `stop' command.
    \item \textbf{Criteria for Failure:} If the trajectory ends without the `stop' command, or if the agent stopped without completing the goal (e.g., holding the object instead of placing it), `task\_success' is false.
    \item Each subtask status must reflect actual trajectory events.
    \item Lessons should explain factors that led to the outcome.
    \item Reference specific elements from trajectory (object IDs like `apple 1', receptacle IDs like `countertop 2').
    \item Use null for lessons only if truly not applicable.
\end{itemize}

Output ONLY the JSON evaluation.
\end{tcolorbox}

\begin{tcolorbox}[
  colframe=blue!75!black,
  colback=blue!5,
  title=Prompt for Pairwise Induction on Sokoban (1/2),
  coltitle=white,
  fonttitle=\bfseries,
  arc=2mm,
  boxrule=1.5pt
]

You are an expert evaluating a Sokoban game attempt.

\textbf{Task Requirements:} Push all boxes (`X') onto target spots (`O') in the grid without getting them stuck against walls (`\#') or in corners.

\vspace{0.5em}
You have just completed an attempt at this Sokoban level.

\textbf{Current Trajectory of the attempt:}
\newline
\{current\_trajectory\}

\vspace{1em}
\textbf{\textless think\textgreater}
\begin{enumerate}[leftmargin=*]
    \item If a reference trajectory exists, compare it with the current trajectory.
    \item Analyze the trajectory to determine if the task was successful:
    \begin{enumerate}
        \item Identify the grid layout and target locations in the `Target Task'.
        \item Examine the final board state in the trajectory. Are ALL boxes (`X') placed on targets (`O') resulting in `$\surd$'?
        \item If the game ended without success, check for deadlocks (boxes stuck in corners or against walls).
        \item Did the player successfully navigate the player (`P') to push positions without hitting walls repeatedly?
        \item What specific logic or movement behavior led to this outcome?
        \item What are the 1-2 most valuable lessons from this attempt?
        \item Devise a concise, new plan of action that accounts for any mistakes with reference to specific actions that should be taken in the next trial
    \end{enumerate}
\end{enumerate}

\vspace{0.5em}
\textbf{Game notation for reference:}
\begin{itemize}[nosep]
    \item Symbols: \# (wall), \_ (floor), O (target), X (box), P (player), $\surd$ (box on target)
    \item Coordinates: (row, col)
    \item Valid actions: ["up", "down", "left", "right"]
    \item Rules: Push only (no pull), one box at a time, walls block movement.
\end{itemize}

\vspace{0.5em}
\textbf{Subtask Completion Criteria (binary evaluation for failed trajectories too):}
\begin{itemize}[leftmargin=*]
    \item \textbf{valid\_moves:} COMPLETED if made at least 2 valid directional moves; INCOMPLETE if mostly invalid formats/hallucinations
    \item \textbf{navigation\_logic:} COMPLETED if player successfully navigated to a box; INCOMPLETE if stuck hitting walls/looping
    \item \textbf{box\_interaction:} COMPLETED if at least one box was pushed to a new coordinate; INCOMPLETE if no boxes moved
    \item \textbf{deadlock\_avoidance:} COMPLETED if avoided pushing boxes into unrecoverable corners/walls; INCOMPLETE if immediate deadlock created
    \item \textbf{goal\_progress:} COMPLETED if at least one box was placed on a target; INCOMPLETE if 0 boxes on targets
    \item \textbf{systematic\_approach:} COMPLETED if moves showed clear intent (e.g., moving behind a box to push); INCOMPLETE if random walking
\end{itemize}
\textbf{\textless /think\textgreater}
\end{tcolorbox}

\begin{tcolorbox}[
  colframe=blue!75!black,
  colback=blue!5,
  title=Prompt for Pairwise Induction on Sokoban (2/2),
  coltitle=white,
  fonttitle=\bfseries,
  arc=2mm,
  boxrule=1.5pt
]
\textbf{Required JSON Output:}
\begin{verbatim}
{{
  "subtasks": [
    {{"name": "valid_moves", "description": "[e.g., 'Outputted valid
    directions like up, down' or 'Used invalid commands']",
    "status": "[completed/incomplete]"}},
    {{"name": "navigation_logic", "description": "[e.g., 'Reached box at (3,2)'
    or 'Walked into wall at (1,1) repeatedly']",
    "status": "[completed/incomplete]"}},
    {{"name": "box_interaction", "description": "[e.g., 'Pushed box from (2,2)
    to (2,3)' or 'No boxes moved']",
    "status": "[completed/incomplete]"}},
    {{"name": "deadlock_avoidance", "description": "[e.g., 'Kept boxes away
    from corners' or 'Pushed box into corner (1,1)']",
    "status": "[completed/incomplete]"}},
    {{"name": "goal_progress", "description": "[e.g., '1/3 boxes placed
    on target' or 'No boxes on targets']",
    "status": "[completed/incomplete]"}},
    {{"name": "systematic_approach", "description": "[e.g., 'Cleared path
    for second box' or 'Random movement']",
    "status": "[completed/incomplete]"}}
  ],
  "trajectory_value": [count of completed subtasks out of 6],
  "task_success": [true if successfully placed all boxes on targets,
  false if deadlock or incomplete],
  "next_priority": "[Most important fix, e.g., 'Don't push box into corner
  at (1,1)' or 'Move to (2,3) to push down']"
}}
\end{verbatim}

\textbf{Evaluation Rules:}
\begin{itemize}[leftmargin=*]
    \item \textbf{Determine Success Yourself:} You must judge `task\_success' by comparing the final board state in the trajectory to the Target Task.
    \item \textbf{Criteria for Success:} The task is ONLY true if ALL boxes are on target spots (`$\surd$').
    \item \textbf{Criteria for Failure:} If the trajectory ends with a deadlock, or if the agent stopped before placing all boxes, `task\_success' is false.
    \item Each subtask status must reflect actual trajectory events.
    \item Lessons should explain factors that led to the outcome (planning vs. random).
    \item Reference specific elements from trajectory (coordinates, symbols).
    \item Use null for lessons only if truly not applicable.
\end{itemize}

Output ONLY the JSON evaluation.

\end{tcolorbox}

\begin{tcolorbox}[
  colframe=blue!75!black,
  colback=blue!5,
  title=Prompt for Pairwise Induction on Minesweeper (1/2),
  coltitle=white,
  fonttitle=\bfseries,
  arc=2mm,
  boxrule=1.5pt
]

You are an expert evaluating a Minesweeper game attempt.

\textbf{Task Requirements:} Reveal all non-mine cells on a \{board\_size\}x\{board\_size\} board with \{n\_mines\} mines without detonating any mine.

\vspace{0.5em}
You have just completed an attempt at this Minesweeper game.

\textbf{Current Trajectory of the attempt:}
\newline
\{current\_trajectory\}

\vspace{1em}
\textbf{\textless think\textgreater}
\begin{enumerate}[leftmargin=*]
    \item If a reference trajectory exists, compare it with the current trajectory.
    \item Analyze the trajectory to determine if the task was successful:
    \begin{enumerate}
        \item Identify the board constraints (size, mine count) in the `Target Task'.
        \item Examine the final action in the trajectory. Did it result in a mine detonation (loss) or a cleared board (win)?
        \item If the game ended without a mine detonation, check if ALL safe cells were revealed.
        \item Did the player successfully flag mines (optional but helpful) and reveal all safe spots? (If a mine was hit or safe cells remain hidden, it is a failure).
        \item What specific logic or guessing behavior led to this outcome?
        \item What are the 1-2 most valuable lessons from this attempt?
        \item Devise a concise, new plan of action that accounts for any mistakes with reference to specific actions that should be taken in the next trial
    \end{enumerate}
\end{enumerate}

\vspace{0.5em}
\textbf{Game notation for reference:}
\begin{itemize}[nosep]
    \item Cell states: ? (unopened), . (blank/no neighbors), 1-8 (mine count), * (mine)
    \item Coordinates: rows/columns indexed 1 to \{board\_size\}
    \item Valid actions: (row, col) where 1 $\le$ row,col $\le$ \{board\_size\}
    \item Blank cells auto-cascade to reveal connected blanks + borders
\end{itemize}

\vspace{0.5em}
\textbf{Subtask Completion Criteria (binary evaluation for failed trajectories too):}
\begin{itemize}[leftmargin=*]
    \item \textbf{valid\_moves:} COMPLETED if made at least 2 valid format moves; INCOMPLETE if mostly invalid actions
    \item \textbf{exploration\_progress:} COMPLETED if revealed >10\% of board; INCOMPLETE if revealed <10\%
    \item \textbf{logical\_attempt:} COMPLETED if attempted any deduction (even if wrong); INCOMPLETE if only random/invalid moves
    \item \textbf{error\_recovery:} COMPLETED if corrected any error within 3 attempts; INCOMPLETE if repeated same errors
    \item \textbf{cascade\_usage:} COMPLETED if triggered or attempted any cascade; INCOMPLETE if only single cell reveals
    \item \textbf{systematic\_approach:} COMPLETED if showed any pattern in move selection; INCOMPLETE if purely random
\end{itemize}
\textbf{\textless /think\textgreater}

\end{tcolorbox}

\begin{tcolorbox}[
  colframe=blue!75!black,
  colback=blue!5,
  title=Prompt for Pairwise Induction on Minesweeper (2/2),
  coltitle=white,
  fonttitle=\bfseries,
  arc=2mm,
  boxrule=1.5pt
]
\textbf{Required JSON Output:}
\begin{verbatim}
{{
  "subtasks": [
    {{"name": "valid_moves", "description": "[e.g., 'Made 5 valid moves
    like (1,1), (2,3)' or 'Only invalid formats like (-1,-1)']",
    "status": "[completed/incomplete]"}},
    {{"name": "exploration_progress", "description": "[e.g., 'Revealed 15 cells
    (25% of board)' or 'Only revealed 2 cells']",
    "status": "[completed/incomplete]"}},
    {{"name": "logical_attempt", "description": "[e.g., 'Tried to use (3,3)=1
    constraint' or 'No deduction attempts']",
    "status": "[completed/incomplete]"}},
    {{"name": "error_recovery", "description": "[e.g., 'Fixed format after
    2 attempts' or 'Repeated invalid action 10 times']",
    "status": "[completed/incomplete]"}},
    {{"name": "cascade_usage", "description": "[e.g., '(1,1) triggered 8-cell
    cascade' or 'No cascade attempts']",
    "status": "[completed/incomplete]"}},
    {{"name": "systematic_approach", "description": "[e.g., 'Checked corners
    first' or 'Random clicking']",
    "status": "[completed/incomplete]"}}
  ],
  "trajectory_value": [count of completed subtasks out of 6],
  "task_success": [true if successfully cleared all safe cells,
  false if detonated mine or incomplete],
  "next_priority": "[Most important fix, e.g., 'Use valid (row,col) format'
  or 'When cell shows 1, count unopened neighbors']"
}}
\end{verbatim}

\textbf{Evaluation Rules:}
\begin{itemize}[leftmargin=*]
    \item \textbf{Determine Success Yourself:} You must judge `task\_success' by comparing the final board state in the trajectory to the Target Task.
    \item \textbf{Criteria for Success:} The task is ONLY true if the agent successfully revealed ALL safe cells without detonating a mine.
    \item \textbf{Criteria for Failure:} If the trajectory ends with a mine detonation, or if the agent stopped before revealing all safe cells, `task\_success' is false.
    \item Each subtask status must reflect actual trajectory events.
    \item Lessons should explain factors that led to the outcome (logic vs. guessing).
    \item Reference specific elements from trajectory (coordinates, cell values).
    \item Use null for lessons only if truly not applicable.
\end{itemize}

Output ONLY the JSON evaluation.

\end{tcolorbox}

\subsection{Prompts for Analyzing the Quality of Intrinsic Feedback}
\label{app:prompt_templates_analyzing_self_reflection}
To assess the fidelity of the intrinsic feedback generated via self-reflection, we employ GPT-4o~\citep{openai2024gpt4ocard} as an external judge. Our evaluation focuses on two key components: the accuracy of the induced subtask completion scores (intrinsic rewards) and the quality of the summarized lessons (intrinsic feedback).

To verify the accuracy of the subtask completion scores, we utilize the prompt detailed in Section~\ref{app:prompt_templates_self_reflection}. To evaluate the quality of the summarized lessons derived from the agent's trajectories, we use the prompt presented below. 


\begin{tcolorbox}[
  colframe=blue!75!black,
  colback=blue!5,
  title=Prompt for Evaluating Summarized Lessons,
  coltitle=white,
  fonttitle=\bfseries
]
\small
\textbf{System Prompt:} \\
You are an expert evaluator of AI Memory Systems. Your goal is to determine the `Information Gain' and `Cruciality' of lessons generated by an agent. You must distinguish between generic fluff (low quality) and specific, actionable insights (high quality).

\vspace{0.2cm}
\hrule
\vspace{0.2cm}

\textbf{User Prompt:} \\
\textbf{\# Context} \\
The agent performed a task in a web environment. \\
Actual Outcome: \texttt{\{actual\_outcome\}}

\textbf{\# Trajectory (History of Actions)} \\
\texttt{\{trajectory\}}

\textbf{\# Agent's Generated Reflection (containing Lessons)} \\
\texttt{\{reflection\}}

\textbf{\# Evaluation Task} \\
Analyze the \texttt{action\_lesson} and \texttt{navigation\_lesson} in the reflection above.

\begin{enumerate}
    \item \textbf{Specificity}: Is the lesson specific to the UI elements/errors encountered? (e.g., ``Clicking `Submit' failed because the form was empty'' vs. ``I failed to click'').
    \item \textbf{Causal Accuracy}: Does the lesson correctly identify the root cause of the \texttt{\{actual\_outcome\}}?
    \item \textbf{Utility}: If the agent retrieves this lesson in a future attempt, will it significantly improve the success rate?
\end{enumerate}

\textbf{\# Output Format (JSON Only)}
\begin{verbatim}
{
    "lesson_quality_score": <int 1-10>,
    "specificity_rating": <"High"|"Medium"|"Low">,
    "utility_rating": <"High"|"Medium"|"Low">,
    "reasoning": "<Short explanation of why this lesson is useful/useless>",
    "is_hallucination": <bool, true if lesson mentions events not in trajectory>
}
\end{verbatim}
\end{tcolorbox}


\end{document}